\documentclass[a4paper,10pt,oneside,openany]{article}
\usepackage[a4paper,
            left=1.6cm,
            right=1.6cm,
            top=2cm,
            bottom=2.5cm]{geometry}
\usepackage[utf8]{inputenc}
\usepackage{graphicx}
\usepackage{amsmath,amssymb}
\usepackage{mathtools}
\usepackage{hyperref}
\usepackage{math}
\usepackage{algorithm}
\usepackage{algpseudocode}
\usepackage{amsthm}
\usepackage{bm}
\usepackage{multirow}
\usepackage{placeins}
\usepackage[noabbrev,nameinlink]{cleveref}
\usepackage{appendix}
\usepackage{booktabs}
\usepackage{tabularx}

\newtheorem{theorem}{Theorem}
\newtheorem{definition}{Definition}

\crefname{appendix}{appendix}{appendices}
\Crefname{appendix}{Appendix}{Appendices}

\DeclarePairedDelimiterX\Exp[1]{\mathbb{E}\!\lbrack}{\rbrack}{#1}
\DeclarePairedDelimiterX\Prob[1]{\mathbb{P}\!\lparen}{\rparen}{#1}
\algrenewcommand\algorithmicensure{\textbf{Input:}}

\numberwithin{equation}{section}

\title{On the Use of Bagging for Local Intrinsic Dimensionality Estimation}
\date{}

\author{%
\parbox{\textwidth}{%
\centering
\begin{tabular}{c@{\hspace{2.5em}}c}
\begin{tabular}[t]{c}
{\large Kristóf Péter}$^{1}$\\[0.25em]
University of Southern Denmark\\
Odense, Denmark\\[0.4em]
\href{mailto:krp@imada.sdu.dk}{\texttt{krp@imada.sdu.dk}}\\
\href{https://orcid.org/0009-0008-1552-3361}{ORCID: 0009-0008-1552-3361}
\end{tabular}
&
\begin{tabular}[t]{c}
{\large Ricardo J. G. B. Campello}$^{1}$\\[0.25em]
University of Southern Denmark\\
Odense, Denmark\\[0.4em]
\href{mailto:email}{\texttt{campello@imada.sdu.dk}}\\
\href{https://orcid.org/0000-0003-0266-3492}{ORCID: 0000-0003-0266-3492}
\end{tabular}
\\[2.2em]
\begin{tabular}[t]{c}
\\[0.25em]
{\large James Bailey}$^{1}$\\[0.25em]
Monash University\\
Melbourne, Australia\\[0.4em]
\href{mailto:email}{\texttt{james.a.bailey@monash.edu}}\\
\href{https://orcid.org/0000-0002-3769-3811}{ORCID: 0000-0002-3769-3811}
\end{tabular}
&
\begin{tabular}[t]{c}
\\[0.25em]
{\large Michael E. Houle}$^{1}$\\[0.25em]
New Jersey Institute of Technology\\
Newark, New Jersey, United States\\[0.4em]
\href{mailto:email}{\texttt{michael.houle@njit.edu}}\\
\href{https://orcid.org/0000-0001-8486-8015}{ORCID: 0000-0001-8486-8015}
\end{tabular}
\end{tabular}%
}}

\begin{document}
\maketitle

\begin{abstract}
\phantomsection
\label{sec:Abstract}

The theory of Local Intrinsic Dimensionality (LID) has become a valuable tool for characterizing local complexity within and across data manifolds, supporting a range of data mining and machine learning tasks.
Accurate LID estimation requires samples drawn from small neighborhoods around each query to avoid biases from nonlocal effects and potential manifold mixing, yet limited data within such neighborhoods tends to cause high estimation variance.
As a variance reduction strategy, we propose an ensemble approach that uses subbagging to preserve the local distribution of nearest neighbor (NN) distances. The main challenge is that the uniform reduction in total sample size within each subsample increases the proximity threshold for finding a fixed number $k$ of NNs around the query. As a result, in the specific context of LID estimation, the sampling rate has an additional, complex interplay with the neighborhood size, where both combined determine the sample size as well as the locality and resolution considered for estimation.
We analyze both theoretically and experimentally how the choice of the sampling rate and the $k$-NN size used for LID estimation, alongside the ensemble size, affects performance, enabling informed prior selection of these hyper-parameters depending on application-based preferences. Our results indicate that within broad and well-characterized regions of the hyper-parameters space, using a bagged estimator will most often significantly reduce variance as well as the mean squared error when compared to the corresponding non-bagged baseline, with controllable impact on bias. We additionally propose and evaluate different ways of combining bagging with neighborhood smoothing for substantial further improvements on LID estimation performance.

\medskip
\textbf{Keywords:} Local Intrinsic Dimensionality $\cdot$ Bagging $\cdot$ Smoothing
\end{abstract}

\section{Introduction}\label{sec:Introduction}

Local Intrinsic Dimensionality (LID) is part of a broader notion of Intrinsic Dimension (ID), which has been conceptualized in different ways aiming to characterize the true complexity of a data space beyond its full representation dimension, under the assumption that one or more data submanifolds may exist within that space \cite{id2,id3,id4,id5,id6,id7,id8}. One can think of LID as the minimum number of variables or \emph{effective dimensions} required to explain the underlying distribution at a particular location of the data space, or the smallest dimension surface on which a sample from that distribution would locally lie on. It has found a growing number of applications, including outlier detection \cite{outlierdetection}, similarity search \cite{DBLP:journals/is/AumullerC21,casanova2017dimensional}, IoT intrusion detection \cite{networkintrusion}, adversarial example analysis \cite{Adversarial,adveserial2}, self-supervised learning regularization \cite{LDReg}, graph embedding design \cite{graph}, data segmentation \cite{segmentation}, and granular deformation analysis \cite{materials}, just to mention a few.

A wide range of LID estimators has been proposed, among which a major class is based on the extreme value theoretic (EVT) limiting distribution of distances as a neighborhood radius around a query location shrinks to zero --- e.g., MLE, MoM, PWM, TLE \cite{EstimatingLID,TLE}. Other well-known  approaches include MADA \cite{MADA}, ESS \cite{ESS}, and originally global ID estimators, such as TWO-NN \cite{TWO-NN}, which can be generalized to locally estimate LID \cite{skdim}. The above techniques employ $k$-NN search for sampling around the query location, however, there are model-based alternatives --- such as LIDL (using approximate likelihood) \cite{LIDL} and FLIPD (employing diffusion models) \cite{diffusion} --- with different ways of quantifying locality.

Despite their differences, most estimators evaluate LID at a query location using only nearby observations to capture local properties of the data space. Using faraway data points risks introducing nonlocal biases, e.g., by mixing nearby manifolds with different dimensionalities, or violating local isotropy on an anisotropic surface --- see Figure~\ref{fig:Importance of Locality}. In practice, given a finite size dataset, shrinking an arbitrary neighborhood radius can leave too few points inside, which tend to yield high-variance, unstable estimates; enlarging it reduces variance, but also pulls in nonlocal structure that can increase bias. This bias-variance tradeoff is the central practical obstacle in LID estimation, and it can undermine downstream applications that critically rely on accurate, stable estimations of LID. Tackling this challenge through bootstrapping-based statistical machinery, rather than simply trading variance with bias, is the main goal of this paper.

\begin{figure}[h]
    \centering
\includegraphics[width=\textwidth,height=\textheight,keepaspectratio]{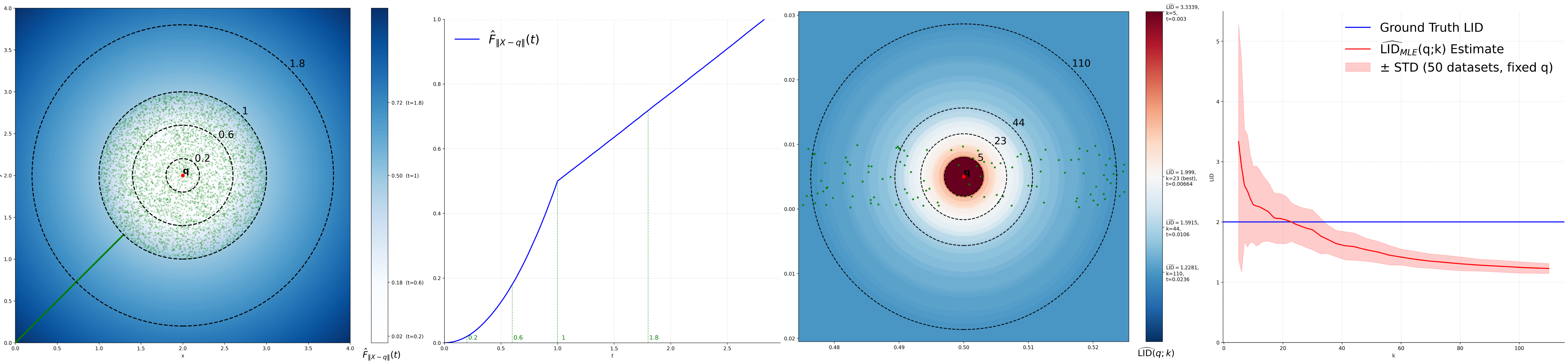}
\caption{The leftmost figure shows the \emph{Lollipop} dataset \cite{LIDL} overlaid with the cumulative distribution function (c.d.f) of the induced empirical distance distribution from the query at $(2,2)$, as a heatmap. The 2nd figure shows that, within close vicinity to the query, $t$, the c.d.f. is proportional to the square of the distance, becoming linear beyond the bounds of the 2D (circle-plate) submanifold, when the 1D (line) submanifold is reached. The 3rd figure shows a heatmap of LID estimates at the query at $(0.5,0.005)$ using MLE \cite{extremevaluetheoretic}, as a function of the $k$-NN hyper-parameter, overlaid with the corresponding dataset distributed uniformly on a thin ribbon-like surface. The rightmost figure displays how LID estimates vary with $k$, not only affected by the interplay between locality preservation and sample size, but also changing with the resolution at which it is measured in practice. Within close vicinity from the query, where neither locality or resolution is critical, the small sample size results in high variance and --- for estimators that are only asymptotically unbiased (including MLE) --- also in bias.}
\label{fig:Importance of Locality}
\end{figure}

We propose to adapt \emph{bootstrap aggregating} (a.k.a. \emph{bagging} for short) \cite{bootstrapping} as a method-agnostic, variance-reduction wrapper for LID estimators. While bagging is a commonly used technique, e.g., in the context of decision trees \cite{randomforests} and ensembles more broadly, both shallow \cite{zimek2013subsampling} and deep \cite{baggingNNref1,baggingNNref2}, to the best of our knowledge it has not been applied to LID estimation before. In this specific setting, the standard behavior of bagging is not trivially guaranteed. To align with continuity assumptions on the underlying data density and to preserve the original distribution of distances, subsampling without replacement is required within each \emph{bag} (i.e., ensemble member), thus causing the in-bag sample size to be reduced. Reducing the sample size increases the distance threshold required to retrieve a fixed number $k$ of neighboring points around a query, making it so that a fixed $k$ on a subsample corresponds to a larger effective NN radius, with potential to introduce undesirable nonlocal effects as variance decreases (e.g., see Figure \ref{fig:Importance of Locality}). The problem has a close analogue in traditional EVT, where the chosen sample size determines the upper-tail threshold. When estimating the extreme value index using the $k$ largest order statistics, the choice of $k$ trades variance (small $k$) against bias (large $k$), and subsample-bootstrap methods show how the MSE-optimal $k$ between subsampled and full-sample estimators can be explicitly rescaled according to the sampling rate \cite{kandr1,kandr2}. This provides critical motivation in the current study for its exploration within the locality restrictions corresponding to a lower-tail threshold for LID estimation.

We provide theoretical and empirical analyses of bagged LID estimation, focusing on how the number of bags $B$ and the sampling rate $r$, together with the locality threshold (the neighborhood size $k$ in case of NN-based LID estimators) jointly shape bias, variance, and mean-squared error (MSE). We show how the selection of these hyper-parameters can result in dramatic reduction of MSE, typically resulting from significant reduction in variance with limited or no associated compromise in bias. Our results support that this is mainly enabled by the additional degree of freedom provided by the sampling rate controlling the tradeoff between the bag size and the level of independence between bags, which often allows for wider ranges of effective locality thresholds for LID estimation, and within which the bagged estimator can achieve superior performance than the baseline equipped with its own MSE-optimal locality threshold. This provides \emph{application-specific} insights as to why and how bagging can be effectively used to mitigate the bias-variance dilemma in LID estimation.

Additionally, we evaluate neighborhood smoothing \cite{smoothing} as another strong standalone variance-reduction approach and show that it is complementary to bagging: applying smoothing within each bag (pre-smoothing) and/or after aggregation (post-smoothing) yields the most substantial MSE reduction in our experiments. Our {\bf contributions} can be summarized as follows:
\begin{itemize}

  \item Introducing bagging for LID estimation.
  
  \item Theoretical analyses shedding further light onto the effects and interactions of the hyper-parameters governing the behavior of bagging.
  
  \item Experimental analyses of the relationship between bagging hyper-parameters as well as their potential interplay with the locality threshold of LID estimators (demonstrated through the $k$ hyper-parameter of NN-based methods).

  \item Experimental results on benchmark datasets showing that bagging, as well as its combinations with smoothing, not only reduce variance but generally reduce the MSE of LID estimates compared to the baseline, in case of mutually optimal independent selection of hyper-parameters.
  
\end{itemize}

\section{Related work}\label{sec:related_work}

Variance reduction in LID estimation is addressed either by designing lower-variance estimators or by applying estimator-agnostic wrappers to stabilize existing methods. Next, we review approaches within these two categories.

TLE \cite{TLE} is closely related to EVT-based LID estimators such as MLE \cite{extremevaluetheoretic}, but it modifies the classic asymptotic distance-based likelihood function by incorporating non-central distance information to derive an extended statistic. It was shown to substantially reduce variance while maintaining comparable bias to MLE. However, TLE is tied to the EVT formulation and is not readily transferable as a generic wrapper to arbitrary LID estimators.

More general stabilization can be achieved via data pre-processing or by post-processing estimates. In \cite{hyperspectral}, it has been shown that pre-processing by scale space filtering methods can contribute to LID estimation in the context of hyperspectral imaging data. These techniques work by denoising the original dataset, thus reducing variance and, accordingly, resulting in more accurate and stable LID estimates. However, the method is designed under assumptions specifically related to the type of datasets in the intended application of hyperspectral images.

A widely applicable post-processing approach is neighborhood smoothing \cite{smoothing-wikipedia}, which is easily translatable to LID estimation. Given per-query LID estimates, regular \emph{smoothing} replaces each by an average over their local neighborhood (e.g., $k$-NN), which can reduce variance assuming that neighboring points share similar LID. To improve alignment with this assumption, the manifold adaptive neighborhood smoothing technique presented in \cite{smoothing} constructs the $k$-NN graph of the data points based on Euclidean distance; smoothing is then performed on a neighborhood of points based on their approximated geodesic distance to the query as the shortest paths along the graph, which locally approximates the query's submanifold. Such \emph{geodesic smoothing} technique has been shown to reduce the variability of the non-linear least-squares estimator described in \cite{smoothing}. 

\section{Background}\label{sec:Background}

\subsection{LID} LID can be defined from a purely distributional perspective, beyond the realm of any particular data sample, as a local notion of ID at a given query location. 

A theoretical, EVT-based approach to this concept that has gained increasing attention in recent years has been formulated in \cite{lidtheory2,LID1}. In this theoretical framework, for a selected query location and distance metric, there exists a \emph{distribution of distances} from the query that is \emph{induced} from an \emph{underlying distribution} of interest defined in the original data space. Under certain assumptions of locality and smoothness, the limiting lower tail of the cumulative distribution function (\emph{c.d.f.}) of such a theoretical distance distribution is fully and uniquely characterized by LID as a quantifiable property at the query. 

In this paper, unless stated otherwise, this is the notion of LID that will be implicitly assumed, e.g., in our experiments involving practical LID estimation. Notice, however, that the bagging techniques introduced in this paper are by no means estimator-specific, but rather broadly applicable to virtually any LID estimator regardless of their supporting LID paradigm.

\vspace*{-2mm}
\subsection{Bagging} Bagging is a resampling technique introduced by Breiman \cite{bagging} as a way of harnessing computational power to improve the stability of models and estimators by aggregating results across multiple resampled versions (\emph{bags}) of the original data. It is especially effective for models that are accurate on average but highly variable. Classic examples include bagging for decision trees and random forests \cite{statlearning,RandomForestsBreiman2001,randomforests}, as well as a variety of modern deep learning applications \cite{baggingNNref2,baggingNNref1}. General theoretical results for the variance-reduction effect of bagging, including the subsampling variant adopted in the present paper, can be found in \cite{analyzingbagging}.

The term \emph{bagging} is commonly referring to resampling \emph{with replacement}, where each bag can have the same size as the original dataset and may contain duplicate samples. In our setting, however, duplicates would distort the distance distribution and violate its absolutely continuous assumption underlying most LID estimators. We therefore use the \emph{subbagging} (\emph{subsample aggregating}) variant, where sampling within an individual bag happens strictly without replacement and each bag thus contains a strictly smaller sample size than the whole dataset.

For this reason and for the sake of simplicity, in the remainder of this paper and unless explicitly stated otherwise, when referring to \emph{bagging} we will assume the following formal definition of \emph{subbagging}, which coincides with that from \cite{analyzingbagging}:

\begin{definition}[Subbagging] \label{def_subagging}
Given $n$ i.i.d. random vectors $D\triangleq(X_1,\dots,X_n)$ following a distribution over the ambient space that relates to a parameter $\theta$, let the $n$ sample estimator of $\theta$ be defined as a statistic $\hat{\theta}_n \triangleq \hat{\theta}(X_1,\dots,X_n)$. Let $m = r n < n$ be an integer for some sampling rate $r \in (0,1)$ and create $B$ random subsets (a.k.a. \emph{bags}) of size $m$ each, $D_{i,m}\triangleq(X_{\Pi_{1}^{(i)}},\dots,X_{\Pi_{m}^{(i)}})$ for $i = 1,\dots,B$, by subsampling from $D$ using random index sets, $\Pi^{(i)} \triangleq \{\Pi_{1}^{(i)},\dots,\Pi_{m}^{(i)}\}$. Such index sets $\Pi^{(\cdot)}$ are i.i.d. (uniformly) over all ${n\choose m}$ possible choices of $m$ different indices out of $n$.

The bagged estimator is then defined as:
\begin{equation}
\begin{aligned}
\label{eqn:130}
& \hat{\theta}_{B,m} \triangleq \frac{1}{B} \sum_{i=1}^B \hat{\theta}_{m, i} % \\
& \text{; ~~~~~~where }\;\;\; \hat{\theta}_{m, i} \triangleq \hat{\theta}(X_{\Pi_{1}^{(i)}},\dots,X_{\Pi_{m}^{(i)}})
\end{aligned}
\end{equation}
\end{definition}

\section{Bagged LID estimators}\label{sec:contributions}

To translate Definition \ref{def_subagging} to the specific context of LID estimation, we consider the sample set $D$ to correspond to our full dataset of $n$ observed points in the $\mathbb{R}^{\dim}$ ambient space, whereas the baseline LID estimator at a fixed query point $q \in \mathbb{R}^{\dim}$, $\widehat{LID}(q)$, is considered to satisfy measurability assumptions and corresponds to the statistic $\hat{\theta}$, such that $\widehat{LID}_n(q)\triangleq\widehat{LID}_n(X_1,\dots,X_n;q)\equiv \hat{\theta}_{n}(X_1,\dots,X_n)$ is the $n$-sample estimator at $q$. We can thereby construct the bags $D_{1,m},\dots, D_{B,m} \subset D$ and the bagged estimator $\widehat{LID}_{B,m}(q)$ according to Definition \ref{def_subagging}.
Figure \ref{fig:simplebagging} illustrates bagged LID estimation for a single query.

Notice that most LID estimators are quantifiable at any location of the data space, albeit they are most often used only at query locations that coincide with points of the sample set ($q \in D$). After subsampling, a query point $q \in D$ will likely only be present in some bags, but the bagged estimator will still require its individual estimation within each and every bag, regardless of whether the query is present or not.

From an implementation viewpoint, the two different cases have to be taken into consideration when a distance of zero between the query location and a sample point is not allowed, such as for any LID estimator under an absolutely continuous distance distribution assumption. For the family of estimators based on the query's $k$-NN, one needs to ensure that the nearest neighbor of a point is never the point itself. In practice, this means that, whenever the query point $q$ is contained in the bag, it is not counted as its own neighbor (for estimating its LID within that bag) and, as such, it does not count towards $k$, thus ensuring the same number of points $k$ is always used for estimation.

\begin{figure}[t]
    \centering
\includegraphics[width=\textwidth,height=\textheight,keepaspectratio]{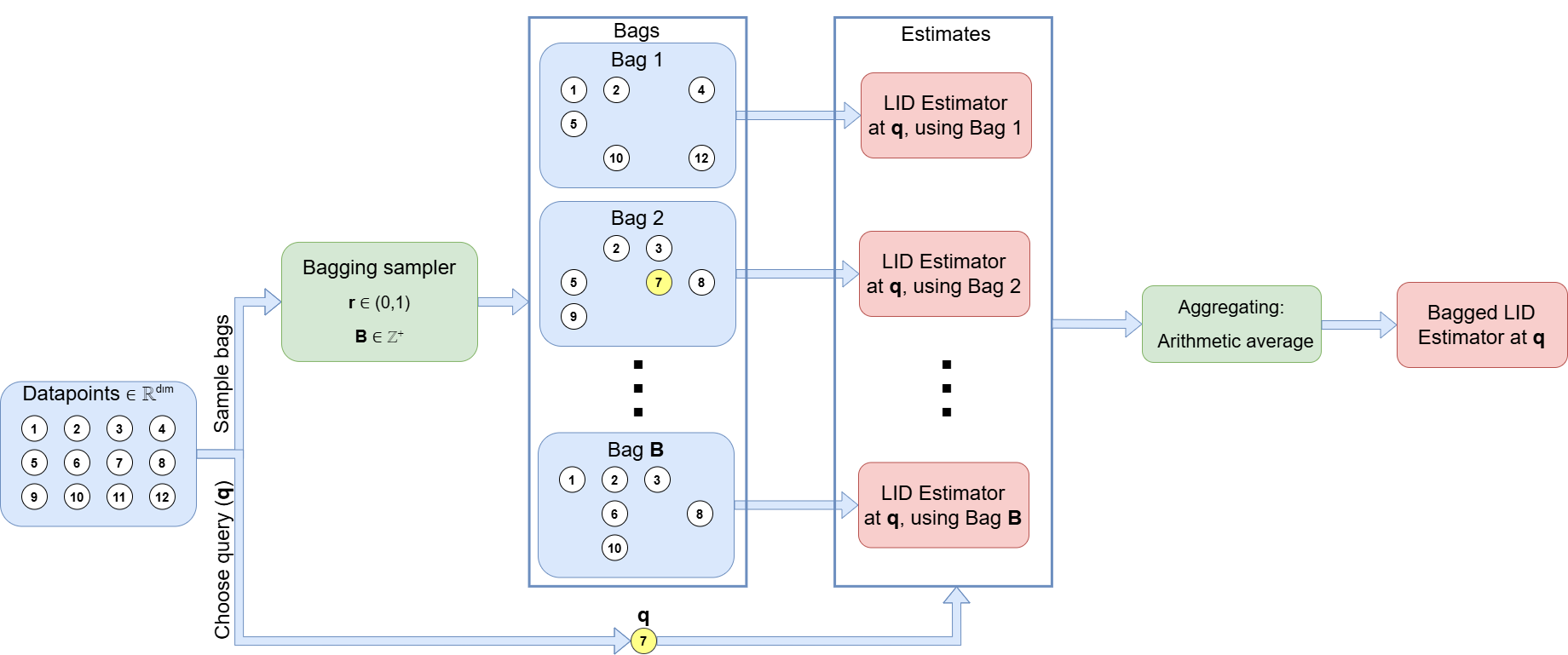}
\caption{Flowchart illustrating bagged LID estimation on a hypothetical, toy dataset. The algorithm is displayed for a single query point (with index 7) from the dataset. Note that it suffices to sample the bags only once and reuse them for different queries. The figure makes it clear that LID estimation can be done independently for each query point and for each bag, so processing is highly parallelizable across multiple nodes. For details on estimation across a dataset of queries, see the pseudo code in Appendix \ref{Asec:Bagging for LID Algorithm}.}
\label{fig:simplebagging}
\end{figure}

\vspace*{-2mm}
\subsection{Mean squared error (MSE) decomposition }\label{th3}
The Mean-squared error (MSE) decomposition is a well-known result \cite{MathematicalStatistics} that relates the theoretic MSE of an estimator to its bias and variance. It can be used to gain deeper insights into an estimator's error by looking at the two components individually, as $\mathrm{E}[(\hat\theta_{B,m}-\theta)^2] = \operatorname{Var}(\hat\theta_{B,m}) + (\mathrm{E}[\hat\theta_{B,m}] - \theta)^2$. Notice that the overall bagged estimator will have at most as large MSE as the $m$-sample estimator deployed within each bag, i.e., $\mathrm{E}[(\hat{\theta}_{B,m}-\theta)^2] \leq \mathrm{E}[(\hat{\theta}_{m,i}-\theta)^2]$, which directly follows by recognizing that $\mathrm{Var}(\hat{\theta}_{B,m}) \leq \mathrm{Var}(\hat{\theta}_{m,i})$ and $\mathrm{E}[\hat{\theta}_{B,m}] = \mathrm{E}[\hat{\theta}_{m,i}]$ (see Appendix \ref{Asec:bagging theory} for details). However, in principle, this result alone does not provide any improvement guarantees for the bagged estimator with respect to the MSE of the $n$-sample estimator, which infers over the entire sample set $D$.

When it comes to variance, \cite{revelas2024doessubaggingwork} shows in the context of subbagging for decision trees that as the number of bags ($B$) tends to infinity, the variance of the bagged estimator decreases towards a known limiting value, which can be expressed as the variance of the expectation of a single-bag estimator conditioned on the data, $\mathrm{Var}(\mathrm{E}(\hat{\theta}_{m,i}|D))$. In Appendix \ref{Asec:bagging theory}, we show more broadly (for any estimator following Definition \ref{def_subagging}) that this lower-limiting variance can also be rewritten as the unconditional covariance between single-bag estimators, $\mathrm{Cov}(\hat{\theta}_{m,i}, \hat{\theta}_{m,j})$.

In Theorem \ref{th2}, we formally outline how decreasing the sampling rate ($r$) is expected to cap the aforementioned limiting variance of the bagged estimator. The result is a combination of the well-known Jensen-gap bound \cite{JensenBound} and the fact that the extent of the overlap $|\Pi^{(i)}\cap\Pi^{(j)}|$ between random index sets drawn without replacement follows the Hypergeometric Distribution \cite{kalinka2014probabilitydrawingintersectionsextending,GrinsteadSnell2003IntroProb}. 

\begin{theorem}\label{th2}
Define $\gamma(h,m) \triangleq \mathrm{Cov}(\hat{\theta}_{m,i}, \hat{\theta}_{m,j} \;|\;|\Pi^{(i)}\cap\Pi^{(j)}|=h)$ for the integer domain given by $m = r n \in \mathbb{Z^+}$ and $h=0,1,\dots,m$, namely, the covariance of two single-bag estimators given that we know the number of dependent variables (provided that such a covariance exists and is finite for any pair of bags). Assume that there exists a twice differentiable, increasing function $\varphi:[0,1] \rightarrow \mathbb{R}$ such that $\forall_{h, m\;: \;h \leq m } \; \gamma(h,m) \leq \varphi(\frac{h}{m})$, then we have:
\begin{equation}
\begin{aligned}
\label{eqn:131.1}
\forall_{r \in [0,1]} \;\;\mathrm{Cov}(\hat{\theta}_{m,i}, \hat{\theta}_{m,j})\leq \varphi(r) + O\left(\frac{1}{n}\right)
\end{aligned}
\end{equation}
\end{theorem}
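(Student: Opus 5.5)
We will use the law of total covariance, conditioning on the overlap size $H \triangleq |\Pi^{(i)}\cap\Pi^{(j)}|$. Given $H=h$, the two bags are a pair of index sets whose law depends only on $h$. Also, $\mathrm{E}[\hat{\theta}_{m,i}\mid H=h]=\mathrm{E}[\hat{\theta}_{m}]$ does not depend on $h$, because marginally each bag is a uniformly random $m$-subset of i.i.d. points. Hence the "covariance of conditional means" term vanishes, and
\begin{equation*}
\mathrm{Cov}(\hat{\theta}_{m,i}, \hat{\theta}_{m,j}) = \mathrm{E}\big[\gamma(H,m)\big] \leq \mathrm{E}\big[\varphi(H/m)\big].
\end{equation*}
Since $\Pi^{(i)}$ and $\Pi^{(j)}$ are independent uniform $m$-subsets of $\{1,\dots,n\}$, $H$ is Hypergeometric with parameters $(n,m,m)$. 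Therefore
\begin{equation*}
\mathrm{E}[H]=\frac{m^2}{n}=rm,
\qquad
\mathrm{Var}(H)=m\,\frac{m}{n}\,\frac{n-m}{n}\,\frac{n-m}{n-1}=\frac{m\,r(1-r)^2 n}{n-1}.
\end{equation*}
In particular $\mathrm{E}[H/m]=r$ and $\mathrm{Var}(H/m)=\frac{r(1-r)^2}{m}\frac{n}{n-1}=\frac{(1-r)^2}{n-1}=O(1/n)$.

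The next step is the Jensen-gap bound. Apply Taylor's theorem with Lagrange remainder to $\varphi$ around $r$:
\begin{equation*}
\varphi(x)=\varphi(r)+\varphi'(r)(x-r)+\tfrac12\varphi''(\xi_x)(x-r)^2 .
\end{equation*}
Take expectations at $x=H/m$. The linear term vanishes because $\mathrm{E}[H/m]=r$. Since $\varphi$ is twice differentiable on the compact interval $[0,1]$, we will assume $\varphi''$ is bounded there, say $|\varphi''|\le M$; this holds if $\varphi\in C^2$, and we will state it as part of the assumption. Then
\begin{equation*}
\mathrm{E}[\varphi(H/m)] \le \varphi(r) + \tfrac{M}{2}\,\mathrm{Var}(H/m) = \varphi(r) + \frac{M(1-r)^2}{2(n-1)} = \varphi(r)+O(1/n).
\end{equation*}
The $O(1/n)$ constant $M/2$ is uniform in $r$, which gives the claim for all admissible $r$. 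The boundary cases $r\in\{0,1\}$ are either degenerate (at $r=1$, $H=m$ deterministically) or outside the integer domain, and are handled trivially.

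The main obstacle is the total-covariance step. It requires checking that $\mathrm{E}[\hat\theta_{m,i}\mid H=h]$ and $\mathrm{E}[\hat\theta_{m,j}\mid H=h]$ are indeed constant in $h$. This follows from exchangeability: conditional on $H=h$, each bag is still marginally a uniform $m$-subset, and its data are i.i.d. from the same distribution. It also requires that the conditional covariances are finite, which is assumed in the statement. A secondary subtlety is that mere twice differentiability does not guarantee a bounded second derivative. If needed, we will replace the Lagrange bound by $\sup|\varphi''|<\infty$ as an explicit regularity assumption. Alternatively, since $H/m$ concentrates at $r$ with variance $O(1/n)$, a local Taylor bound plus a Chebyshev tail estimate, using the boundedness of $\varphi$ on $[0,1]$, yields $o(1)$; this is weaker but still valid. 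We will present the bounded-$\varphi''$ version as the primary argument.
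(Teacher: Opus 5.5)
Your proposal is correct and follows essentially the same route as the paper's proof: the law of total covariance conditioned on the overlap $H$, with the conditional-mean term vanishing, then the assumed bound $\gamma(H,m)\le\varphi(H/m)$. The paper then uses the hypergeometric moments $\mathrm{E}[H/m]=r$ and $\mathrm{Var}(H/m)=(1-r)^2/(n-1)$, and the second-order (Jensen-gap) Taylor bound, exactly as you do. Your caveat that twice differentiability alone does not guarantee $\sup_{x\in[0,1]}\varphi''(x)<\infty$ is well taken. The paper uses this finiteness implicitly when it absorbs $\sup_{x\in[0,1]}\varphi''(x)/2$ into the $O(1/n)$ term, and only an upper bound on $\varphi''$ is actually needed, not a bound on $|\varphi''|$.
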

\begin{proof}
See proof in Appendix \ref{Asec:proof2}
\end{proof}

In simple terms, assuming that the magnitude of the conditional covariances $\gamma(h,m)$ are roughly positively proportional to the fraction of dependent variables between the bags ($h/m$), Theorem \ref{th2} shows that we can set an implicit upper bound on the limiting variance of the bagged estimator, $\lim_{B \rightarrow \infty}\mathrm{Var}(\hat{\theta}_{B,m}) = \mathrm{Cov}(\hat{\theta}_{m,i}, \hat{\theta}_{m,j})$, by reducing $r$ and, accordingly, $\varphi(r)$. This result suggests that we can limit variance from above by decreasing the sampling rate ($r$) and increasing the number of bags ($B$). However, recall that the resulting decrease in the bag size $m$ can lead to more biased in-bag estimates, most noticeably, due to either fewer samples for LID estimation within a fixed local vicinity around the query or, alternatively, a less local vicinity containing a fixed sample size $k$ (e.g., see Figure \ref{fig:Importance of Locality}). Since the exact nature of this bias-variance tradeoff is dependent on the particular LID estimator and its hyper-parameters (e.g., $k$), as well as on the dataset in hand, experimental analyses are required to conclusively show the effects of bagging and its hyper-parameters in LID estimation.

\section{Experimental methodology}\label{sec:Experimental methodology}

\subsection{Estimators} Alongside the classic MLE estimator as a standard baseline \cite{EstimatingLID}, we also include TLE \cite{TLE} and MADA \cite{MADA} to test the general applicability of bagging to different estimators without any intent of promoting a particular method. TLE, while substantially more computationally demanding, is designed to reduce variance, allowing us to test whether bagging can further enhance more sophisticated baselines already equipped with internal variance-reduction mechanisms. Table~\ref{table:estimators} summarizes the tested estimators and their essential hyper-parameters, alongside the two wrapper alternatives to mitigate variance that we experimentally investigate in this paper, namely, bagging and smoothing \cite{smoothing}. The goal is to compare each baseline against their own wrapped counterparts.

\begin{table}[ht]
  \centering
  \caption{LID estimators and hyper-parameters. ``pre'' and ``post'' specify whether smoothing is applied to the in-bag LID estimates (i.e. prior to aggregation) or to the bagged LID estimates, resp. All other hyper-parameters have been previously defined.} \vspace*{2mm}
    \begin{tabular}{|p{3cm}|p{3.5cm}|p{3cm}|p{3.5cm}|p{3cm}|}
    \hline
    Baseline & Baseline hyper-par. & Bagging hyper-par. & Smoothing strategy\\
    \hline
    MLE~\cite{EstimatingLID}  & $k$ & $r$, $B$ & \textit{pre} or \textit{post} \\
    TLE~\cite{TLE}            & $k$ & $r$, $B$ & \textit{pre} or \textit{post} \\
    MADA~\cite{MADA}          & $k$ & $r$, $B$ & \textit{pre} or \textit{post} \\
    \hline
    \end{tabular} \label{table:estimators}
\end{table}

\vspace*{-2mm}
\subsection{Datasets}\label{sec:datasets} As common practice in the LID literature, we use test datasets based on a foundational study of LID estimation on manifolds \cite{OriginalLIDmanifolds} later extended to a comprehensive benchmark framework in \cite{BenchmarkingLIDmanifolds}. The data manifolds contain $n=2500$ points with known LID, which we use as ground truth when computing the MSE of estimators for evaluation purposes. We additionally include the \emph{Lollipop} data from \cite{LIDL}, and a high-dimensional \emph{Uniform} dataset. More detailed descriptions of each dataset and a summary table are provided in Appendix \ref{Asec: Apppendix Dataset descriptions}.

\vspace*{-2mm}
\subsection{Evaluation measures}\label{Evaluation Measures}
We measure the performance of LID estimators using the empirical MSE as well as its bias-squared plus variance decomposition, for the reasons discussed in Section \ref{th3}. Notice that in the presence of multiple ground-truth LID values across a dataset with different submanifolds, the MSE decomposition needs to be applied manifold-wise (details are provided in Appendix \ref{Asec:MSE appendix}).

\vspace*{-2mm}
\subsection{Bagging and smoothing experiments}\label{Bagging and Smoothing}

The first experiment evaluates the overall effectiveness of bagging as well as compares and combines it with \emph{smoothing}, as an alternative or supplementary variance-reduction technique. We obtain smoothed LID estimates for a query by taking the arithmetic average of the estimates over its $k$-NN (the same neighborhood size used by the baseline estimator). This neighborhood includes the query itself only when it is present in the reference sample set (bag or full dataset). When smoothing is performed on the baseline estimates, without any bagging, we will refer to it simply as \emph{smoothing}.

Additionally, we combine bagging and smoothing in three different ways. The simplest idea is to smooth out the aggregated bagged estimates themselves, exactly as explained above, only now for the bagged estimator. We will refer to this combination as ``\emph{post-smoothing}''. 

As an alternative approach, we can instead smooth out estimates based on each bag separately, determining the query's $k$-NN among only the in-bag points. This way, smoothing takes place inside the bags, before the single-bag estimates are aggregated. We refer to this combination as ``\emph{pre-smoothing}''.
Finally, we can also integrate both strategies above by simultaneously applying pre-smoothing (\emph{pre}) and post-smoothing (\emph{post}).

This goes to show that bagging and smoothing are not necessarily competitors. Bagging actually offers new avenues for applying smoothing in different ways. 

We apply bagging, smoothing, and their three combinations independently to the different baseline estimators in Table \ref{table:estimators}, \emph{with the goal to compare each baseline to their bagged/smoothed counterparts, rather than against each other}.

For each case, we sweep the locality hyper-parameter over a 9-step geometric progression from $k=5$ to $k=72$. For bagged variants, we additionally sweep the sampling rate over a 9-step geometric progression from $r=0.042$ to $r=0.6$, evaluating all combinations of $(k,r)$. We are fixing the number of bags to $B=10$, noting that increasing $B$ can only further improve results in favor of bagging. 
Then, for each dataset and method variant, we individually select the hyper-parameter setting that minimizes MSE: $k$ for baseline/smoothing, and $(k,r)$ for all bagged variants, comparing the methods at their best observed performance. 

\vspace*{-2mm}
\subsection{Bagging hyper-parameters experiments}\label{Bagging parameters}

The second set of experiments focuses specifically on bagging and explores how the choice of its hyper-parameters affects performance in terms of MSE, variance, and bias. These effects are discussed using the bagged MLE estimator. The results for MADA and TLE (in Appendix \ref{Asec:Supplementary results}) exhibit similar trends and the main conclusions do not change: 

\paragraph{1st test (sampling rate):}\label{sampling rate}

We saw in Theorem \ref{th2} how decreasing $r$ is expected to de-correlate the bags, and under certain general assumptions, reduce variance. Confirming this expected decrease in variance in the LID setting and to what extent it can overshine any potential increase in bias as a function of the sampling rate ($r$) are the main goals of this experiment. 

For this test, we chose a fixed $k=10$ for the baseline estimators, a relatively small value compared to the dataset size, to attain to the general importance of locality surrounding LID estimation. The effects of $r$ may be exaggerated or reduced depending on $k$; however, it is worth stressing that the general trends remain the same and the main conclusions with respect to $r$ do not change for different, fixed values of $k$.

\paragraph{2nd test (interaction between $k$ and $r$):}\label{Interaction of k and sr}

As previously discussed, both $k$ and $r$ affect the radius determining the local neighborhood around the query used for estimation, which is directly proportional to $k$ and inversely proportional to $r$, with the potential to affect bias through non-local effects and sample size. On the other hand, variance is generally inversely proportional to $k$ and directly proportional to $r$. If both hyper-parameters are variable, then the values of $k$ that result in more favorable bias-variance tradeoffs will depend on $r$ and vice versa. Since the exact relationship may be complex and possibly both dataset- and estimator-dependent, the goal of this experiment is to identify major general trends allowing clear guidance on the combined choice of these hyper-parameters.

We evaluate the bagged estimator over a grid of $(k,r)$ combinations, comparing each against the baseline with the same $k$. The resulting MSE differences reveal ranges of $r$ (and ratios $k/r$) for which bagging improves or worsens performance. 

\paragraph{Additional tests (number of bags):} 

Analogous experimental setups and analyses demonstrating the (more straightforward) effects of the number of bags ($B$) and its interaction with $r$ are available in Appendices A.3 and A.4.

\vspace*{-2mm}
\subsection{Code and data availability}\label{sec:Data availability}
A full code that can be used to reproduce all results in our paper is available from our GitHub page at \newline \href{https://github.com/Campello-Lab/Bagging_for_LID}{https://github.com/Campello-Lab/Bagging\_for\_LID}, which also contains all the datasets used in our experiments. Most of these datasets were generated using the Scikit-dimension \cite{skdim} public Python library,  which was also the source code for the baseline LID estimators adopted in this study.

\section{Results and discussion}\label{sec:results}

\subsection{Bagging and smoothing results}\label{results:Bagging_Smoothing}

Figure~\ref{fig:Bagging and smoothing spider chart} shows the results for the first experimental setup in Section~\ref{Bagging and Smoothing}. We use radar charts to display the \emph{optimal} MSE values, min-max normalized across variants, separately for each dataset, in reverse.\footnote{Tables of the raw MSE values are reported in Appendix \ref{Asec:Supplementary tables for the bagging and smoothing experiments}.} Thus, scores lie in $[0,1]$, where $1$ denotes the lowest relative MSE and $0$ the highest, therefore, larger covered area indicates better performance.

\begin{figure}[t]
    \centering
\includegraphics[width=\textwidth,height=\textheight,keepaspectratio]{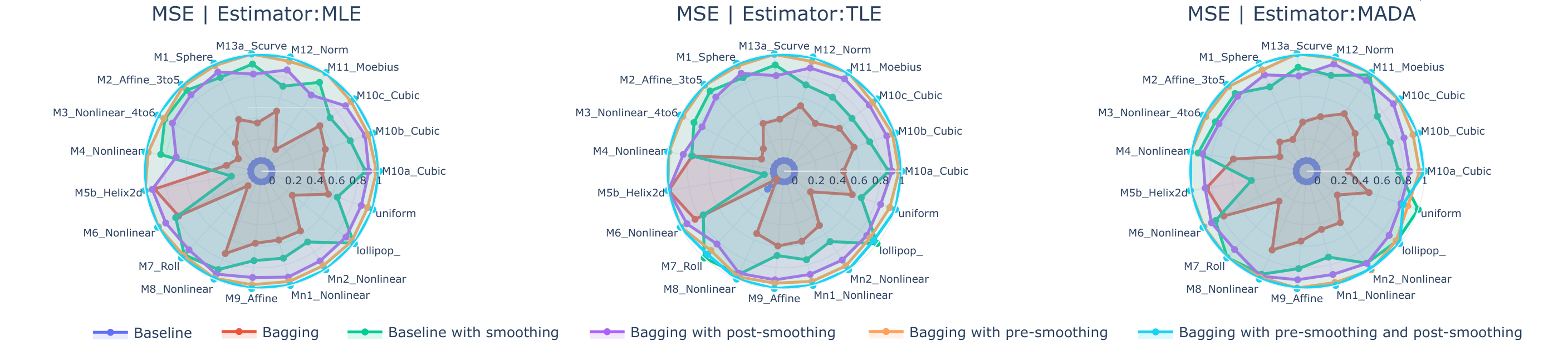}
\caption{Comparison of the relative MSE achieved by three different LID estimators, MLE, TLE, and MADA, with and without smoothing, bagging, and three strategies for combining bagging with smoothing. The results are for $19$ datasets using case-by-case optimal $k$ and $r$ hyper-parameters. The min-max normalized MSE values are subtracted from $1$ before plotting, such that larger scores correspond to smaller relative MSEs.}
\label{fig:Bagging and smoothing spider chart}
\end{figure}

There is a very clear trend. First, both bagging and smoothing improve over the baseline in almost all cases when used independently, with only one exception (TLE on M7\_Roll), noting that smoothing is often the stronger standalone variant in these experimental conditions. Second, it is clear that combining them yields further gains, with a common performance ordering, consistently across the baselines: baseline $<$ bagged $<$ smoothed $<$ bagged with post-smoothing $<$ bagged with pre-smoothing $<$ bagged with pre- and post-smoothing. Therefore, the main conclusion is that not only do both bagging and smoothing achieve their purpose on their own, but combining them provides further improvements.

\vspace*{-2mm}
\subsection{Sampling rate results (1st test on hyper-parameters):}\label{results:Sampling rate test}

\begin{figure}[t]
    \centering
\includegraphics[width=\textwidth,height=\textheight,keepaspectratio]{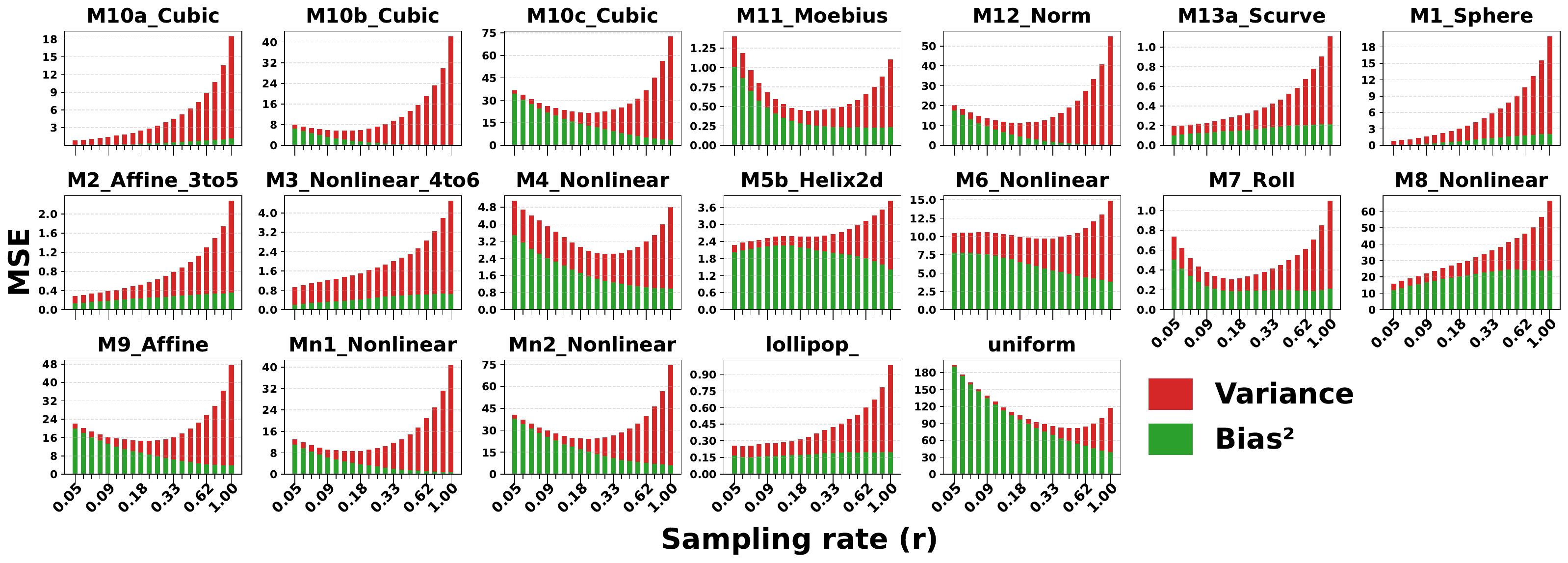}
\caption{MSE and its decomposition for each of the 19 datasets as a function of the sampling rate $r$ used for bagged MLE as the LID estimator. Note that the baseline MLE is equivalent to $r=1$, displayed on the rightmost bar of the individual charts. %These values are not normalized as the hyper-parameter effects are most informative in absolute terms.
}
\label{fig:Sampling rate}
\end{figure}

Figure \ref{fig:Sampling rate} illustrates the experimental behavior of the MSE decomposition for bagging as a function of the sampling rate $r$. The exact relationship is dataset-dependent, but in general, by decreasing $r$ (right-to-left along the x-axis) we observe that: (i) bias$^2$ (green sub-bars) tends to increase or stay relatively stable in most cases; and (ii) variance (red sub-bars) shows a highly consistent, decreasing behavior. This robust effect on variance reduction supports the validity of our assumptions in Theorem \ref{th2} and confirms the bag de-correlating benefits of reducing $r$. 

The opposing trends often yield an intermediate ``sweet spot'' where an optimal bias-variance tradeoff is achieved in terms of a minimal MSE, confirming our arguments in Section \ref{th3}. However, for some datasets (M10a\_Cubic, M13\_Scurve, M1\_Sphere, M2\_Affine\_3to5, M3\_Nonlinear\_4to6, and M8\_Nonlinear), smaller $r$ reduced both variance and bias, so the best MSE occurred at the smallest tested $r$, suggesting that potential non-local biases associated with decreasing $r$ did not manifest within the given range of $r$ values in those datasets. 

Overall, the experiment mostly confirms the anticipated behavior for $r$, noting that results that were somewhat surprising, actually did so in a favorable way.

\vspace*{-2mm}
\subsection{Interaction between $k$ and $r$ results (2nd test on hyper-parameters):}\label{result:Interaction of k and sr}

\begin{figure}[t]
    \centering
\includegraphics[width=\textwidth,height=\textheight,keepaspectratio]{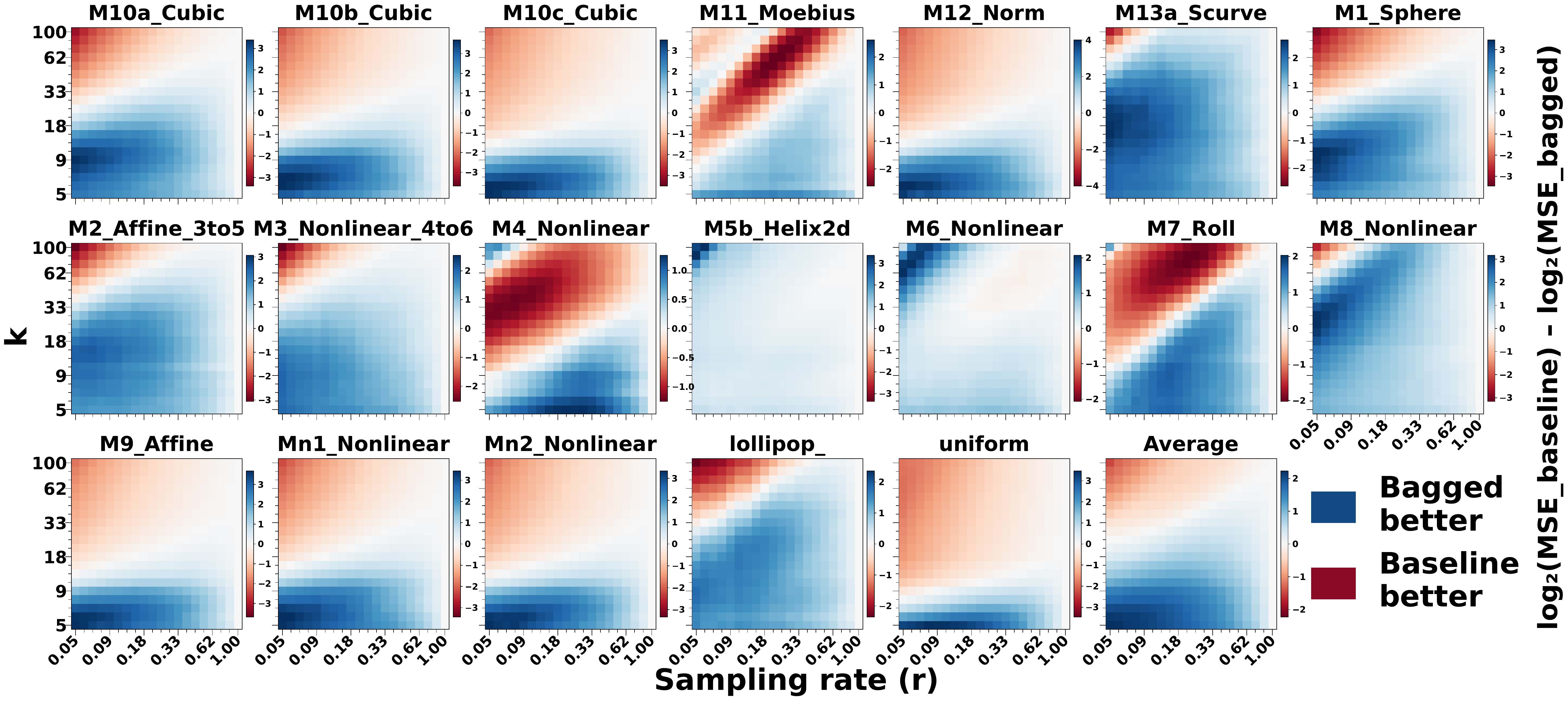}
\caption{Heatmaps of the relative difference (log-ratio) between the MSE achieved by the baseline estimator (MLE) and the MSE of its bagged counterpart for each of the 19 datasets and cross-combinations of values for the sampling rate $r$ (x-axis) and the $k$-NN neighborhood size $k$ (y-axis). Positive values, color-coded as blue, indicate that bagging outperforms the baseline, whereas negative values, color-coded in red, indicate the opposite. The relation is symmetric around zero and the darkness of the cells are proportional to the magnitude of the corresponding absolute value for the given dataset.}
\label{fig:Interaction of ksr}
\end{figure}

Figure~\ref{fig:Interaction of ksr} illustrates how the bagged LID estimator's performance depends jointly on the baseline's $k$-NN hyper-parameter and the sampling rate. As previously conjectured, the bias increase from the expansion of the effective local neighborhood radius, for sufficiently large $k$ compounding with sufficiently small $r$, may overpower the combined variance reduction caused by both these settings, potentially resulting in a larger MSE for the bagged estimator than the baseline.

Overall, the heatmaps in Figure \ref{fig:Interaction of ksr} confirm this expectation, as for most datasets, the hyper-parameter space is divided along a certain left-to-right upwards white line where the two estimators achieve similar MSEs. To the right of this line, bagging tends to win out, while to the left, the baseline tends to perform better. This suggests that there is some dataset specific constant, for which, if the ratio between the two hyper-parameters ($\frac{k}{r}$) stays under, then we can expect the bagged estimator to produce favorable results, while the results are most likely not desirable otherwise. We additionally observe that the strongest gains usually occur at the bottom-left region (small $k$, small $r$), where bagging can strongly reduce MSE by simultaneously de-correlating bag estimates while preserving reasonable locality. The experiment also clarifies how in the previous test with a variable $r$, fixing $k = 10$ may have caused partially unexpected results for some datasets, as non-local effects were not yet expressed. 

\vspace*{-2mm}
\subsection{Main takeaways}\label{result:Main takeaways} From our theoretical and experimental findings, we provide the following guidelines for selecting the hyper-parameters of bagging ($B$, $r$) with respect to the degree of locality for LID estimation (quantified by the $k$ hyper-parameter of the family of NN-based estimators adopted in this study):

\begin{itemize}
    \item \textbf{Number of bags ($B$)}: It is well-known that increasing $B$ has a strictly beneficial but diminishing effect in terms of estimator variance (see Section \ref{th3}), and this has been confirmed in our experiments (Appendices A.3 and A.4). Since the resulting gains come with no counter-effect in terms of bias, this hyper-parameter is not critical. However, the runtime complexity of the bagged estimator is linearly dependent on $B$ (see Section \ref{sec:Computational complexity}). Our experiments consistently support that increasing $B$ beyond $10$ is only recommended if one strongly favors further diminishing gains in stability over runtime.
    
    \item \textbf{Sampling rate ($r$) and locality ($k$)}: As anticipated in Section \ref{th3} and confirmed by our experiments, the hyper-parameters $k$ and $r$ jointly control the bias-variance tradeoff. If the primary goal is to bind variance reduction in the bagged estimator to a positive (or at least neutral) overall impact \emph{in terms of MSE} as compared to the baseline, our results support combining low values of both $k$ and $r$. In practice, we suggest to set $k$ roughly within $[5,10]$ or so as a minimal local sample size for LID estimation, then choose $r$ accordingly, roughly within $[r_{min},0.5]$. The low end of this range should ensure that the in-bag subsamples are still representative enough of the relevant structures in the dataset (e.g., manifolds, classes, clusters) and, as such, it is problem-dependent. Within the (MSE-orientated) rough ranges of $k$ and $r$ suggested above, for a fixed chosen $k$ decreasing $r$ is theoretically expected to decrease variance (Theorem \ref{th2}), often at the expense of bias. In the context of LID, applications that can tolerate large estimation biases, e.g., requiring mostly a reliable ordering for the purpose of relative comparisons of LID values across different queries, may benefit from smaller sampling rates (closer to $r_{min}$), whereas in applications where systematic LID offsets can be critically detrimental, one might prefer larger, more conservative rates. 
    
\end{itemize}

\section{Complexity analysis}\label{sec:Computational complexity}

In this section, we summarize the time complexity of bagging for LID estimation (in a non-parallel setting). An extended analysis, including the different smoothing variants (bagging with pre/post-smoothing) is deferred to Appendix \ref{Asec:Complexity analysis for the bagged LID estimator and its combinations with smoothing}.

Let $p(n)$ denote the runtime cost of computing a baseline LID estimate for a \emph{single query} using a sample of size $n$. Therefore, the \emph{total} runtime complexity for the baseline estimator (no bagging), $T_{\text{base}}$, is $O\!\left(n\,p(n)\right)$ when taken over $n$ different query locations. Bagging with $B$ bags of size $m=rn$, for a \emph{single query}, evaluates the estimator on each bag and averages, resulting in $O\!\left(B\,p(rn)\right)$ time, where the sampling step for bag construction and final averaging contribute only lower-order terms ($O(Bnr)$ and $O(B)$) and can thus be omitted. As the bags do not have to be resampled per-query, the \emph{total} runtime complexity of the bagged estimator taken over $n$ queries, $T_{\text{bag}}$, is simply  $O\!\left(Bn\,p(rn)\right)$. 

For the wide family of NN-based LID estimators, $p(n)$ is dominated by finding the $k$-NN of the query in the sample set of size $n$, where $k$ is assumed to be either constant or $k<<n$. Using a naive exhaustive search, $p(n)$ is $O(n)$, which translates to $T_{\text{base}} \rightarrow O(n^2)$ and $T_{\text{bag}} \rightarrow O(B\,r\,n^2)$. For large enough $n$, bagging may be faster as long as $r < 1/B$.
Assuming a more sophisticated (indexed) search strategy, with average estimation time $p(n)$ of order $O(\log n)$ per query (amortized over all queries), then $T_{\text{base}} \rightarrow O(n\log n)$ and $T_{\text{bag}} \rightarrow O\!\left(B\,n\,\log(rn)\right)$.

\section{Limitations and future work}\label{sec:Limitations}

The objective evaluation measures we used, namely, MSE, Variance and Bias, required datasets commonly used for benchmarking, with ground-truth LID. While these datasets undoubtedly pose a challenge for estimators, due to their manifold curvatures within higher-dimensional spaces, they may not fully span the spectrum of complex behaviors found in real data. Therefore, LID bagging, smoothing, and their combinations as proposed in this paper, are yet to be fully tested in practical scenarios with potentially more complex data distributions, noise, and related challenges. Without ground-truth LID though, this will require indirect evaluation in downstream task applications, which is left for future work.

In future work, we also intend to investigate the use of the out-of-bag (OOB) samples from each bag for evaluation and (unsupervised) model selection.

\section{Conclusions}\label{sec:Conclusion}

We proposed bagging as a variance reduction strategy for LID estimation, alongside different strategies for its combination with neighborhood smoothing. We theoretically and experimentally explored the behavior of bagged LID estimators, with especial focus on the interplay between the sampling rate $r$ and the locality threshold (neighborhood size $k$), whereby in-depth analyses of their joint effect on variance, bias, and overall MSE have been provided. Our results show that within a wide range of these hyper-parameters, clearly characterized by small values of $k$ and small-to-moderate values of $r$, the bagged estimator is not only expected to reduce variance but it also tends to outperform the corresponding baseline estimator in terms of MSE. The higher level of freedom made available by these hyper-parameters in controlling the bias-variance tradeoff also allows a reduction in MSE to be systematically observed when comparing the independently optimal bagged and baseline estimators, i.e., when they are independently set to their best preferred hyper-parameter values for each dataset. These improvements have been achieved across different baseline LID estimators and with a number of bags as small as 10, which comes with little to none additional computational price. Finally, our results robustly show that significant further improvements in performance can be achieved by combining bagging with neighborhood smoothing, which systematically outperforms both standalone approaches. 

\subsection{Use of Generative AI} We used ChatGPT 5.4 strictly to assist with language style and conciseness.

\subsection{Disclosure of Interests}
The authors have no competing interests to declare that are relevant to the content of this article. 

\appendix

\newpage

\section{Supplementary contributions}\label{Asec:Supplementary contributions}

\subsection{Bagging for LID Algorithm}\label{Asec:Bagging for LID Algorithm}
Below is the general algorithm for bagged LID estimation across the whole sample set $D$ considered as query locations, as referenced by the main paper in Section \ref{sec:contributions}. 

Algorithm \ref{alg:Bagging for LID Algorithm} shows that one efficient way of handling the problem of the query being in the bag or otherwise is to loop over the points of the given bag, and then its corresponding complement (also known as the \emph{out-of-bag}). This way, we avoid the need to check for inclusion/exclusion for each query-bag combination. This is especially helpful when the estimator is based on ordered nearest neighbor (NN) distances, in which case, definite knowledge of $q \in D_{i,m}$ or $q \notin D_{i,m}$ can help us decide if the smallest distance should be considered for estimation or not, even if NN distance calculation is only approximate.

\begin{algorithm}[H]\label{alg:Bagging for LID Algorithm}
\caption{Bagged LID estimator across all queries}
\label{alg:Simple Bagging}
\begin{algorithmic}[1]
\Ensure $D \in \mathbb{R}^{n\times \text{dim}}$, $1<< n\in \mathbb{Z}^+$, $\text{dim} \in \mathbb{Z}^+$, $r \in \left[\frac{1}{n},1\right)$, $B \in \mathbb{Z}^+$
\Require Baseline LID estimator oracle $\widehat{LID}_n: \mathbb{R}^{n\times \text{dim}}\times\mathbb{R}^{\text{dim}} \rightarrow \mathbb{R}$ 
\State $m \gets \lceil n\cdot r\rceil$
\For{$i=1$ to $B$}
  \State Sample $\pi^{(i)} \sim \Pi$ ~~ (uniformly over all ${n\choose m}$ possible choices of $m$ different indices out of $n$)
  \State $D_{i, m} \gets \left(D[\pi_{1}^{(i)}], \dots, D[\pi_{m}^{(i)}]\right)$
\EndFor
\For{$i=1$ to $B$}
    \For{$q \in D_{i, m}$}
    \State $\widehat{LID}_{m, i}(q) \gets \widehat{LID}_{m-1}(D_{i, m}\setminus \{q\} \:; q)$
    \EndFor
    \For{$q \in D \setminus D_{i, m}$}
    \State $\widehat{LID}_{m, i}(q) \gets \widehat{LID}_{m}(D_{i, m} \:; q)$
    \EndFor
\EndFor
\For{$q \in D$}
\State $\widehat{LID}_{B,m}(q) \gets \frac{1} {B}\sum_{i=1}^B \widehat{LID}_{m, i}(q)$
\EndFor
\State \Return $\left[\widehat{LID}_{B,m}(D[1]), \dots, \widehat{LID}_{B,m}(D[n])\right]$
\end{algorithmic}
\end{algorithm}

\subsection{Bagging theory revisited}\label{Asec:bagging theory}

In the following, we discuss two results that show the general effectiveness of bagging and demonstrate the roles of its hyper-parameters, namely, the number of bags, $B$, and the sampling rate, $r \triangleq m/n \in (0,1)$, as supporting material to the theoretical analysis in Section \ref{sec:contributions} of the main paper:

\begin{itemize}
    \item Theorem \ref{th1} focuses on the asymptotic variance reduction effect of increasing the number of bags, and has analogous or similar results available in the literature, usually in the general context of classifiers or decision trees, for example, in \cite{revelas2024doessubaggingwork}, which also examines properties of the subbagging alternative. However, the current formulation in the specific context of parameter estimation is particularly convenient within our scope to interpret/explain our results.
    \item Theorem \ref{th2} is concerned with the variance reduction property of decreasing the sampling rate, and is already presented in the main paper; however, this section of the appendix offers additional insights and explanations to help with interpreting its assumptions and its claims.
    \newline
\end{itemize}
 
\begin{theorem}\label{th1}
Let $\hat{\theta}_{m,i}$ and $\hat{\theta}_{m,j}$ be the single bag estimators for bags $i$ and $j$ ($i \neq j$) of a mutual sample set $D$, respectively, as defined in \ref{def_subagging}. Then, the following expressions hold for the unconditional variance of the bagged estimator, $\mathrm{Var}(\hat{\theta}_{B,m})$, defined over $(D, \Pi^{(1)},\dots,\Pi^{(B)})$:

\begin{equation}
\begin{aligned}
\label{eqn:131}
& \mathrm{Cov}(\hat{\theta}_{m,i}, \hat{\theta}_{m,j}) ~~\leq~~ \mathrm{Var}(\hat{\theta}_{B,m}) ~~=~~ \mathrm{Var}(\hat{\theta}_m)\left(\rho_{m} + \frac{1-\rho_{m}}{B}\right) ~~\leq~~ \mathrm{Var}
(\hat{\theta}_m) \\
& \text{and } \; \mathrm{Var}(\hat{\theta}_{B,m}) \rightarrow \mathrm{Cov}(\hat{\theta}_{m,i}, \hat{\theta}_{m,j})\; \text{ as }\; B \rightarrow \infty
\end{aligned}
\end{equation}
\end{theorem}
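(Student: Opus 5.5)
The argument rests on exchangeability. The index sets $\Pi^{(1)},\dots,\Pi^{(B)}$ are i.i.d. and independent of $D$, and $D$ is i.i.d. Together these make the joint law of $(\hat{\theta}_{m,1},\dots,\hat{\theta}_{m,B})$ invariant under permutations of the bag labels. Two consequences follow immediately.

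\begin{itemize}
\item Every single-bag estimator has the same distribution as $\hat{\theta}_m$, so $\mathrm{Var}(\hat{\theta}_{m,i})=\mathrm{Var}(\hat{\theta}_m)$.
\item Every pair $i\neq j$ has the same covariance $c\triangleq\mathrm{Cov}(\hat{\theta}_{m,i},\hat{\theta}_{m,j})$.
\end{itemize}

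I will assume second moments are finite, as the paper does in Theorem \ref{th2}. I then define $\rho_m \triangleq c/\mathrm{Var}(\hat{\theta}_m)$ when the variance is positive. If the variance is zero, every term is zero and the statement is trivial.

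The central step is expanding the variance of the average by bilinearity:
\begin{equation*}
\mathrm{Var}(\hat{\theta}_{B,m}) = \frac{1}{B^2}\Big(B\,\mathrm{Var}(\hat{\theta}_m) + B(B-1)\,c\Big) = \frac{\mathrm{Var}(\hat{\theta}_m)}{B} + \Big(1-\frac{1}{B}\Big)c .
\end{equation*}
Substituting $c=\rho_m\mathrm{Var}(\hat{\theta}_m)$ gives the middle equality $\mathrm{Var}(\hat{\theta}_m)\big(\rho_m+\frac{1-\rho_m}{B}\big)$. The limit statement is then immediate: the expression equals $c+\frac{1}{B}\big(\mathrm{Var}(\hat{\theta}_m)-c\big)$, which tends to $c$ as $B\to\infty$.

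The two inequalities come from the same rewriting, $\mathrm{Var}(\hat{\theta}_{B,m}) = c + \frac{1}{B}\big(\mathrm{Var}(\hat{\theta}_m)-c\big)$. Both reduce to the fact $c\le \mathrm{Var}(\hat{\theta}_m)$, i.e. $\rho_m\le 1$, which is Cauchy--Schwarz on two identically distributed variables.

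\begin{itemize}
\item \textbf{Upper bound.} Since $\mathrm{Var}(\hat{\theta}_m)-c\ge 0$ and $1/B\le 1$, we get $\mathrm{Var}(\hat{\theta}_{B,m})\le c+(\mathrm{Var}(\hat{\theta}_m)-c)=\mathrm{Var}(\hat{\theta}_m)$.
\item \textbf{Lower bound.} The correction term $\frac{1}{B}(\mathrm{Var}(\hat{\theta}_m)-c)$ is nonnegative, so $\mathrm{Var}(\hat{\theta}_{B,m})\ge c$.
\end{itemize}

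The lower bound also follows from the law of total variance with respect to $D$. Conditionally on $D$, the bag estimators are i.i.d. because the $\Pi^{(i)}$ are independent. Hence $c=\mathrm{Var}(\mathrm{E}[\hat{\theta}_{m,i}\mid D])$ and $\mathrm{Var}(\hat{\theta}_{B,m}) = c + \frac{1}{B}\mathrm{E}[\mathrm{Var}(\hat{\theta}_{m,i}\mid D)]$. This identifies the limiting value with the one in \cite{revelas2024doessubaggingwork}.

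There is no real obstacle in this argument. The step that needs the most care is justifying exchangeability and the equal pairwise covariances, since everything else is a one-line calculation. That justification requires the bag indices to be drawn independently of the data.
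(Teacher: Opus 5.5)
Your proposal is correct and follows essentially the same route as the paper, which also shows that each $\hat{\theta}_{m,i}$ has the law of $\hat{\theta}_m$ and that all pairwise covariances coincide (via explicit law-of-total-probability computations, conditioning on $\Pi^{(i)}$ and on $D$), then expands $\mathrm{Var}(\hat{\theta}_{B,m})$ by bilinearity and uses $\rho_m \le 1$ for both inequalities and the limit. Your law-of-total-variance remark, which also gives $c=\mathrm{Var}(\mathrm{E}[\hat{\theta}_{m,i}\mid D])\ge 0$, and your treatment of the zero-variance case are small additions not present in the paper's appendix proof.
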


\noindent where $\rho_{m} \triangleq \mathrm{Corr}(\hat{\theta}_{m,i}, \hat{\theta}_{m,j})$ and $\mathrm{Cov}(\hat{\theta}_{m,i}, \hat{\theta}_{m,j})$ stand for the correlation and covariance of the single bag estimators, respectively.

\begin{proof}
See proof in Section \ref{Asec:proof1}.
\end{proof}

Theorem \ref{th1} illustrates the variance profile of the bagged estimator in terms of the number of bags ($B$) and shows that it is limited between the covariance of the single bag estimators (as a lower-bound) and the variance of the $m$ sample estimator $\hat{\theta}_m$ (as an upper-bound). Assuming that for some $m'$ we have $\mathrm{Cov}(\hat{\theta}_{m',i}, \hat{\theta}_{m',j}) < \mathrm{Var}(\hat{\theta}_{n})$, then by properly selecting the sampling rate $r$ such that $m' = n \cdot r$ and using a sufficiently large $B$, improvement can be achieved in relation to the $n$ sample estimator as the bagged estimator variance tends to the smaller $\mathrm{Cov}(\hat{\theta}_{m',i},\hat{\theta}_{m',j})$ limiting value.

It is worth noticing that while using a smaller subsample size $m$ (or equivalently, a smaller sampling rate $r$) tends to increase the upper-bound term, $\mathrm{Var}(\hat{\theta}_m)$, it conversely tends to decrease the lower-bound term, $\mathrm{Cov}(\hat{\theta}_{m,i}, \hat{\theta}_{m,j})$, by increasing the level of independence between the bags. Given that the variance of the bagged estimator converges towards the latter as we increase the number of bags, this result suggests that low sampling rates associated with a large number of bags would allow for variance reduction as compared to the $n$ sample estimator. Of course, this result does not say anything in terms of bias. However, under the assumption that the $n$ sample estimator is unbiased (potentially asymptotically as $n$ increases), then so is the $m$ sample estimator and (trivially from \eqref{eqn:130}) the bagged estimator as well.

Recalling Theorem \ref{th2} it is clear that, asymptotically as $n \rightarrow \infty$ (i.e., as the size of the original sample set $D$ tends to infinity), the theoretical lower limiting value for the variance of the bagged estimator as stated in Theorem \ref{th1}, $\mathrm{Cov}(\hat{\theta}_{m,i}, \hat{\theta}_{m,j})$, is bounded from above by an increasing function of $r$, as long as the assumptions in the theorem hold.

In simple terms, Theorem \ref{th2} says that if the estimator and data are such that the magnitude of the conditional bag covariances $\gamma(h,m)$ are roughly positively proportional to the fraction of dependent variables between the bags ($h/m$), then from Theorem \ref{th2} we can expect to be able to reduce the upper bound on unconditional bag covariance (lower limiting variance of the bagged estimator) by decreasing the sampling rate ($r$). The assumption is natural in the sense that it captures our intuition about the covariance of the dependent estimators: if a small (large) portion of variables are dependent, we expect lower (higher) covariance. If all variables are independent, i.e. $\frac{h}{m} = 0$, then the covariance is $0$, while if they are all dependent, i.e. $\frac{h}{m} = 1$, then the covariance is maximized as variance. For values in between, so when $0 < \frac{h}{m} < 1$, we can intuitively expect a roughly increasing relationship in terms of $\frac{h}{m}$, and this notion is captured by the assumed upper-bound. 

By knowing more about the behavior of $\theta$ and therefore $\gamma(h,m)$, we could possibly choose $\varphi$ in such a way that it is a good approximation of $\gamma(h,m)$, yielding a tight upper-bound. Apart from particular choices and their usefulness (or lack thereof) for specific purposes, for the sake of explaining the overall relationship in terms of the sampling rate, this general theorem suffices.

Note that by the nature of the expression for $\mathrm{Var}(\hat{\theta}_{B,m})$ in \eqref{eqn:131}, increasing the number of bags ($B$) has diminishing returns, depending on the magnitude of $\rho_{m}$. In case of relatively large $\rho_m$, the $\frac{1-\rho_{m}}{B}$ term accounts for a decreasingly lower portion of the variance, thence $B$ has proportionally less impact, while the increase in runtime remains linear. Notably, according to \eqref{eqn:131.1}, $\rho_{m}\cdot\mathrm{Var}(\hat{\theta}_m)$ could be expected to decrease when decreasing the sampling rate ($r$). Depending on the data and estimator, this gives reason to expect a larger improvement from increasing $B$, for lower values of $r$.

For each dataset and estimator, the exact behavior of the bounds in \eqref{eqn:131} and \eqref{eqn:131.1} can be different, therefore, experimental analyses are required to show the effectiveness of bagging in specific application scenarios, such as LID estimation in this paper.

\subsection{Additional experimental setups}\label{Asec:Additional experimental setups}

This section introduces two additional experimental setups dealing with the role of the number of bags hyper-parameter and its interaction with the sampling rate, to confirm the theoretical predictions of Theorem \ref{th1} and Theorem \ref{th2} in the context of bagged LID estimation. This serves to complete the bagging hyper-parameter selection experiments in Section \ref{sec:Experimental methodology} of the main paper, which focused on the particular role of the sampling rate and its interaction with the $k$ nearest neighbor ($k$-NN) locality hyper-parameter of certain baseline estimators (e.g., MLE).

As explained in the main paper, these experiments are specifically about bagging, designed to explore how exactly the choice of its hyper-parameters affects performance in terms of MSE, variance, and bias, when applied in the context of LID estimation. We show and discuss these effects on the bagged MLE estimator; however, the results for MADA and TLE exhibit similar general trends and can be found in Section \ref{Asec:Supplementary results}. They have been sidelined for the sake of clarity and compactness, since the main conclusions do not change. The results of these experiments are presented and analyzed in Section \ref{sec:results} and serve as the experimental basis for our proposed hyper-parameter selection guidance with regards to the number of bags hyper-parameter as presented in the main paper.

\paragraph{3rd test (number of bags):} \label{number of bags}

As shown in Theorem \ref{th1}, increasing the number of bags ($B$) is most effective when estimates from different bags are low-correlated, which in turn tends to associate with lower sampling rates, as seen in Theorem \ref{th2}. Therefore, for the current experiment, we fix the sampling rate at a relatively small value, $r = 0.05$, allowing for $B$ to range from $3$ to $400$ according to a $20$ step geometric progression.\footnote{We also include the Baseline case as the first experiment.} The progression enables us to see the effect of the hyper-parameter according to proportional increases, while exploring a wide range of values. We mainly expect to observe the diminishing returns of increasing $B$ as predicted by Theorem \ref{th1} and discussed in Section \ref{Asec:bagging theory}.

The results of this experiment are presented and analyzed in Section \ref{results:Number of bags test}

\paragraph{4th test (interaction between $B$ and $r$):}\label{B and sr interaction}

These experiments are meant to test the interplay between the two hyper-parameters $B$ and $r$, combining observations from the 1st sampling-rate test, presented in the main paper, and the previously discussed number of bags test, to show in practice the interacting effects we had already anticipated and discussed previously in Section \ref{Asec:bagging theory}. They are also intended to test the robustness of bagging in terms of a sustained performance across wide ranges of hyper-parameter choices and combinations.

We calculate bagged estimators for every combination of hyper-parameter values, between the $20$ values of sampling rates used in the previous experiments in this section, and a more comprehensive progression for the number of bags, between $1$ and $100$, both following roughly the same geometric rate.

The results of this experiment are presented and analyzed in Section \ref{result:Interaction of sr and B}.

\subsection{Supplementary result analysis}\label{Asec:results}

\subsubsection{Number of bags (3rd test)}\label{results:Number of bags test}

\begin{figure}[H]
    \centering
\includegraphics[width=0.9\textwidth,height=0.9\textheight,keepaspectratio]{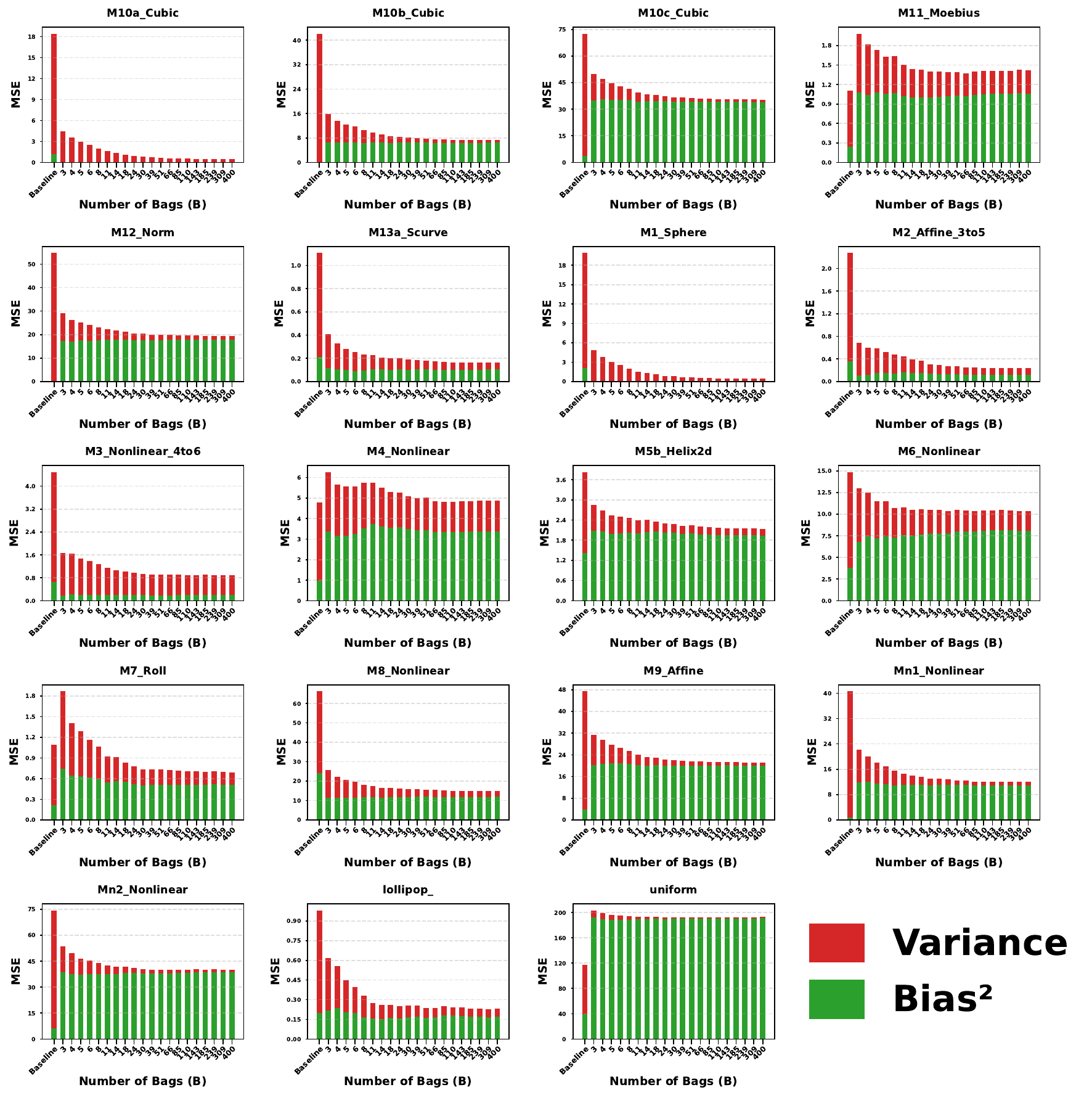}
\caption{MSE and its decomposition for each of the 19 datasets, as a function of the number of bags $B$ used for bagging of MLE as the baseline LID estimator (displayed on the leftmost bar of the individual charts). See Section \ref{number of bags} for the detailed experimental setup.}
\label{fig:Number of bags}
\end{figure}

Figure \ref{fig:Number of bags} illustrates the experimental behavior of the MSE decomposition for bagging as a function of the number of bags $B$. The experiment confirms our expectation that increasing $B$: (i) does not systematically affect bias; and (ii) it reduces variance according to the general trend supported by Theorem \ref{th1}, that is, at a diminishing rate. Regarding the latter, notice that despite the geometric progression of $B$ in the x-axis, the largest proportional drops in variance (red sub-bars) mostly happen at the beginning.

While bias$^2$ (gree sub-bars) is not systematically affected by $B$, in most cases we observe a difference between the bias for the baseline estimator (leftmost bar) and the rest (bagged estimators). This difference can be in favor of either the bagged or the baseline estimator, as in the current test we fix the $k$ and $r$ hyper-parameters, causing a dataset-dependent bias behavior. It is expected that certain hyper-parameter combinations will be better for the baseline than the bagged estimator or vice-versa, which we further analyze in detail in Section \ref{sec:results} of the main paper.

\subsubsection{Interaction between $B$ and $r$ (4th test):}\label{result:Interaction of sr and B}

\begin{figure}[H]
    \centering
\includegraphics[width=0.9\textwidth,height=0.9\textheight,keepaspectratio]{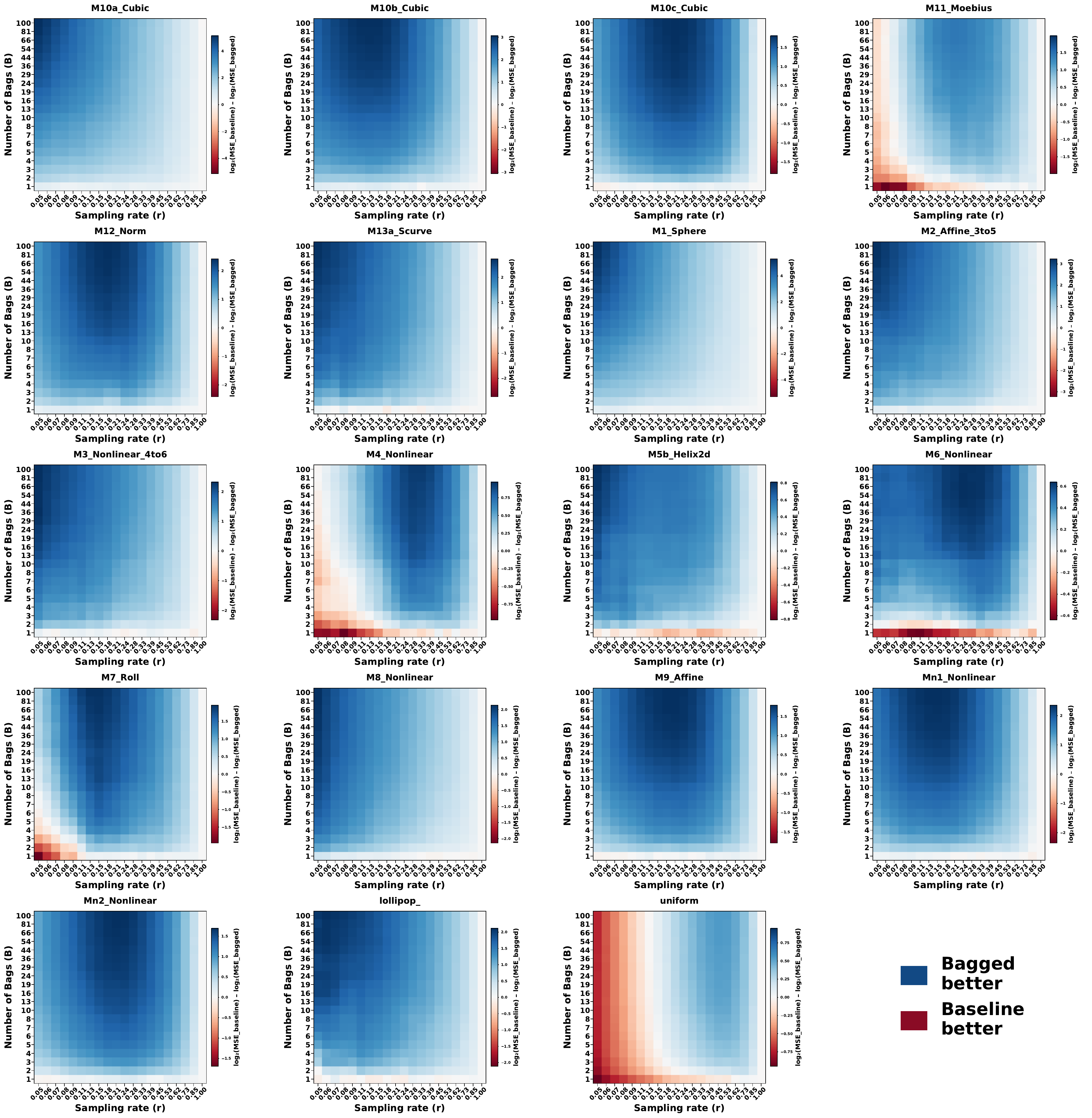}
\caption{Heatmaps of the relative difference (log-ratio) between the MSE achieved by the baseline estimator (MLE) and the MSE of its bagged counterpart for each of the 19 datasets and cross-combinations of values for the sampling rate $r$ (x-axis) and the number of bags $B$ (y-axis). Positive values, color-coded as blue, indicate that bagging outperforms the baseline, whereas negative values, color-coded in red, indicate the opposite. The relation is symmetric around zero and the darkness of the cells are proportional to the magnitude of the corresponding absolute value for the given dataset. See Section \ref{B and sr interaction} for the detailed experimental setup.}
\label{fig:Interaction of Bsr}
\end{figure}

Figure \ref{fig:Interaction of Bsr} illustrates the log-ratio between the MSE achieved by the baseline estimator (MLE) and the MSE of its bagged counterpart for each of the 19 datasets and cross-combinations of values for the sampling rate $r$ and the number of bags $B$. Notice that the baseline estimator does not depend on these hyper-parameters and is only used as a basis for comparison. Also, notice that the differences in logarithmic scale reflect relative changes, as opposed to absolute differences.

White-colored cells indicate that the MSEs of the compared estimators are the same. Therefore, the rightmost column in each heatmap, which corresponds to the $r=1$ case, is all white, since bagging with $r=1$ is equivalent to the baseline estimator. The darkness of the blue color corresponds to the proportional difference in magnitude in favor of the bagged estimator, while the darkness of the red color indicates the opposite.

The most eye-catching result is that most heatmaps are a gradient between white and dark blue, with little to no red to be seen, showing that bagging is very robust to these hyper-parameters, reducing MSE across wide ranges and combinations of their values. There is also a tendency to find darker blue areas towards the top left, which aligns with our expectations from Theorem \ref{th1} and Theorem \ref{th2}, as we find both increasing $B$ and decreasing $r$ generally results in variance reduction with a positive, reinforcing interaction.

It is important to note, however, that for certain datasets the darkest blue areas, representing the optimal hyper-parameters, are not situated at the top-left corner. This may be explained by the upper range of the $B$ hyper-parameter not being large enough to meet the requirements of the lowest sampling rates, as characterized in Theorem \ref{th1}. Another possible explanation is that, with a fixed value of $k$ in this experiment, the lowest sampling rates can result in bias due to the magnified increase in neighborhood radius around the query. This interplay between $k$ and the sampling rate is further explored in the main paper.

\subsection{Complexity analysis for the bagged LID estimator and its combinations with smoothing}\label{Asec:Complexity analysis for the bagged LID estimator and its combinations with smoothing}

Section \ref{sec:Computational complexity} in the main paper presented a complexity analysis for the proposed bagging method for LID estimators. However, in the experiments, there have been several different combinations of bagging and smoothing tested. The current subsection of the appendix extends that analysis with more detail and provides comprehensive tables for the time complexities of the smoothing variants, in general, or assuming a $k$-NN estimator baseline, as well as using a naive search for finding $k$-NNs or an amortized search following preconstructed indices.

When using bagging to obtain LID estimates for a dataset of $n$ data points, which are at the same time the query locations, as visualized in Figure \ref{fig:simplebagging}, we first have to sample $B$ bags with $n \cdot r$ points each. It takes $O(n \cdot r)$ time to generate $n \cdot r$ random indexes for each bag, therefore $O(B \cdot n \cdot r)$ time in total. Then, for each bag, a new LID estimate has to be calculated using its $n \cdot r$ points, for each of the $n$ original query locations. This means that if we assume LID estimation is carried out individually per query, with an estimator whose runtime is $O(p(n))$, as a function $p(n)$ of the sample size $n$ per query, then given that the bags contain $n \cdot r$ points, we get a time complexity of $O(B \cdot p(n \cdot r))$ per query, i.e., $O(B \cdot n \cdot p(n \cdot r))$ in total. Including the final averaging of each of the $n$ LID estimates over the $B$ bags, in $O(B \cdot n)$ time, the time complexity of the whole process can be expressed as $O(B \cdot n \cdot r) + O(B \cdot n \cdot p(n \cdot r)) + O(B \cdot n)$, and recalling that $r \in (0,1)$, it then follows that \emph{bagging} has a total runtime of $O(B \cdot n + B \cdot n \cdot p(n\cdot r) )$. In contrast, it is clear that the time complexity of the \emph{baseline estimator} is simply $O(n \cdot p(n))$.

For the wide family of $k$-NN LID estimators, finding the $k$ nearest neighbors of each query is usually the computational bottleneck, which means the time it takes to estimate LID for a single query within a sample of size $n$, $O(p(n))$, is essentially determined by the time it takes to find its $k$-NN. Using a naive exhaustive search, we have $O(p(n)) \rightarrow O(n)$ per query assuming $k << n$, whereas the use of an index would allow, under the same assumption, an amortized search per query in $O(p(n)) \rightarrow O(\log n)$ time (on average, already considering the construction of the index).

For naive search, the total runtime complexity for bagging then becomes $O(B \cdot n + B \cdot r \cdot n^2) \rightarrow O(n^2)$, and the simple estimator also becomes $O(n \cdot n) \rightarrow O(n^2)$. Either can be faster or slower depending on a combination of $n$, $B$, and $r$. In short, for large enough $n$, bagging can be faster as long as $r < 1/B$.

With an index, the total runtime complexity for bagging becomes $O(B \cdot n + B \cdot n \cdot \log(n \cdot r)) \rightarrow O(n \cdot \log n)$, and the baseline estimator also becomes $O(n \cdot \log n)$. In this case, however, the baseline tends to be faster in practical terms.

Note that in our experiments, as presented in the main paper, when we applied smoothing, we obtained smoothed LID estimates for a query by taking the arithmetic average of the estimates over its $k$-NN (the same neighborhood size used by the baseline estimator). However, in this more general complexity analysis, we are not going to make this assumption, as smoothing can easily be applied\footnote{Without making any claims about qualitative performance.} using any other neighborhood sample size up to $n$, denoted in the following by the integer hyper-parameter $k_s \in \{1, \dots, n\}$.

As a result, the time-complexity of estimating at a single query is multiplied by the number of neighbors ($k_s$) we use to smooth over. Additionally, we have to take into account the time of finding the $k_s$-NNs of the query, which is not generally given in the case of an arbitrary estimator baseline, and is free only when assuming $k \geq k_s$ when considering $k$-NN estimator baselines. Additionally, for bagging with pre-smoothing, this search has to be performed per-bag, instead of the whole dataset, much like the way $k_s$-NN estimators are applied in bagging.

In the other usual case, when estimating at all of the sample points as query locations, the estimates to average may only be calculated once for any of the variants, as the bags can be kept fixed throughout, and smoothing can reuse the precomputed estimates around each query. This results in effectively the same runtimes as for bagging, with only an additional term for finding $k_s$-NNs.

The exact resulting time-complexities for the general and the $k$-NN baseline specific settings are listed in Tables \ref{tab:Table of runtime complexities in general. (Naive Search)}, \ref{tab:Table of runtime complexities in general. (Amortized search, Neighbor indices)}, and Tables \ref{tab:Table of runtime complexities for $k$-NN estimation methods. (Naive Search)}, \ref{tab:Table of runtime complexities for $k$-NN estimation methods. (Amortized search, Neighbor indices)} respectively.

\begin{table}[H]
    \centering
    \caption{Runtime complexities in general. (Naive Search)} \vspace*{2mm}
    \begin{tabular}{|p{5cm}|p{5cm}|p{5cm}|}
    \hline
    Method & Runtime Complexity\newline At a single query & Runtime Complexity\newline For a dataset as queries \\
    \hline
    Baseline & $O(p(n))$ & $O(n  p(n))$ \\
    Bagging & $O(B  p(n r))$ & $O(B  n  p(n r))$\\
    Baseline with Smoothing & $O(n + k_s p(n))$ & $O(n^2 + n p(n))$ \\ 
    Bagging with post-smoothing & $O(n + B  k_s  p(n r))$ & $O(n^2 + B  n  p(n r))$ \\
    Bagging with pre-smoothing & $O(B r n  + B  k_s  p(n r))$ & $O(B r n^2  + B  n  p(n r))$ \\
    Bagging with pre-smoothing and post-smoothing & $O(n + B k_s n r + B  k_s^2  p(n r))$ & $O(n^2 + B  n^2 r + B  n  p(n r))$ \\
    \hline
    \end{tabular}
    \label{tab:Table of runtime complexities in general. (Naive Search)}
\end{table}

\begin{table}[H]
    \centering
    \caption{Runtime complexities in general. (Amortized search, Neighbor indices)} \vspace*{2mm}
    \begin{tabular}{|p{5cm}|p{5cm}|p{5cm}|}
    \hline
    Method & Runtime Complexity\newline At a single query & Runtime Complexity\newline For a dataset as queries \\
    \hline
    Baseline & $O(p(n))$ & $O(n  p(n))$ \\
    Bagging & $O(B  p(n r))$ & $O(B  n  p(n r))$\\
    Baseline with Smoothing & $O(\log(n) + k_s p(n))$ & $O(n\log(n) + n p(n))$ \\ 
    Bagging with post-smoothing & $O(\log(n) + B  k_s  p(n r))$ & $O(n\log(n) + B  n  p(n r))$ \\
    Bagging with pre-smoothing & $O(B  \log(n r) + B  k_s  p(n r))$ & $O(B  n\log(n r) + B  n  p(n r))$ \\
    Bagging with pre-smoothing and post-smoothing & $O(\log(n) + B k_s\log(n r) + B  k_s^2  p(n r))$ & $O(n\log(n) + B  n \log(n r) + B  n  p(n r))$ \\
    \hline
    \end{tabular}
    \label{tab:Table of runtime complexities in general. (Amortized search, Neighbor indices)}
\end{table}

\begin{table}[H]
    \centering
    \caption{Runtime complexities for $k$-NN estimation methods. (Naive Search)} \vspace*{2mm}
    \begin{tabular}{|p{5cm}|p{5cm}|p{5cm}|}
    \hline
    Method & Runtime Complexity\newline At a single query & Runtime Complexity\newline For a dataset as queries \\
    \hline
    Baseline & $O(n)$ & $O(n^2)$ \\
    Bagging & $O(B  r  n)$ & $O(B  r n^2)$\\
    Baseline with Smoothing & $O(n + k_s n)$ & $O(n^2)$ \\ 
    Bagging with post-smoothing & $O(n + B  r  k_s  n)$ & $O(n^2 + B  r n^2)$ \\
    Bagging with pre-smoothing & $O(B  r n + B  r  k_s  n)$ & $O(B r n^2)$ \\
    Bagging with pre-smoothing and post-smoothing & $O(n + B r k_s^2  n )$ & $O(n^2 + B r n^2)$ \\
    \hline
    \end{tabular}
    \label{tab:Table of runtime complexities for $k$-NN estimation methods. (Naive Search)}
\end{table}

\begin{table}[H]
    \centering
    \caption{Runtime complexities for $k$-NN estimation methods. (Amortized search, Neighbor indices)} \vspace*{2mm}
    \begin{tabular}{|p{5cm}|p{5cm}|p{5cm}|}
    \hline
    Method & Runtime Complexity\newline At a single query & Runtime Complexity\newline For a dataset as queries \\
    \hline
    Baseline & $O(\log(n))$ & $O(n  \log(n))$ \\
    Bagging & $O(B  \log(n r))$ & $O(B  n  \log(n r))$\\
    Baseline with Smoothing & $O(k_s \log(n))$ & $O(n\log(n))$ \\ 
    Bagging with post-smoothing & $O(\log(n) + B  k_s  \log(n r))$ & $O(n\log(n) + B  n  \log(n r))$ \\
    Bagging with pre-smoothing & $O(B  \log(n r) + B  k_s  \log(n r))$ & $O(B  n  \log(n r))$ \\
    Bagging with pre-smoothing and post-smoothing & $O(\log(n) + B  k_s^2  \log(n r))$ & $O(n\log(n) + B  n  \log(n r))$ \\
    \hline
    \end{tabular}
    \label{tab:Table of runtime complexities for $k$-NN estimation methods. (Amortized search, Neighbor indices)}
\end{table}

\section{Proofs}\label{Asec:Proofs}

\subsection{Proof of Theorem A.1}\label{Asec:proof1}
\begin{proof}

We start by proving the middle equality. Where first, we note the following. 
It follows from the assumption that $X_1,\dots,X_n$ are i.i.d. (independent and identically distributed), and that $X_1,\dots,X_n$ and $\Pi^{(i)}$ are independent, that for all $i$, we have $\hat{\theta}_{m,i} \overset{d}{=} \hat{\theta}_{m}$.\footnote{We use the $\overset{d}{=}$ notation to signal equality in distribution between two random variables, that is a fairly common notation in statistics textbooks.}
\newline\newline
\textbf{To put it more rigorously: }
\newline\newline
For any Borel set $A \subseteq \mathbb{R}^{dim \times m}$, we have that, using the Law of Total Probability:
\begin{equation}
\begin{aligned}
\label{eqn:proof1101}
& P\left(\left(X_{\Pi^{(i)}_1},\dots,X_{\Pi^{(i)}_m}\right) \in A\right) = \\
& = \mathbb{E}\!\left[P\left(\left(X_{\Pi^{(i)}_1},\dots,X_{\Pi^{(i)}_m}\right) \in A \; | \; \Pi^{(i)}_1, \dots, \Pi^{(i)}_m\right)\right]\\
& = \sum_{\pi \in \text{supp} (\Pi^{(i)})}P((X_{\pi_1},\dots,X_{\pi_m}) \in A \;|\;\Pi^{(i)} = \pi) \cdot P(\Pi^{(i)} = \pi)\\
& = \sum_{\pi \in \text{supp} (\Pi^{(i)})}P((X_{\pi_1},\dots,X_{\pi_m}) \in A) \cdot P(\Pi^{(i)} = \pi)\\
& = \sum_{\pi \in \text{supp}(\Pi^{(i)})}P((X_{\pi_1},\dots,X_{\pi_m}) \in A) \cdot \frac{1}{|\text{supp} (\Pi^{(i)})|}\\
& = \sum_{\pi \in \text{supp}(\Pi^{(i)})}P((X_{1},\dots,X_{m}) \in A) \cdot \frac{1}{|\text{supp} (\Pi^{(i)})|} \\  
& = P((X_{1},\dots,X_{m}) \in A) \cdot\left(\sum_{\pi \in \text{supp}(\Pi^{(i)})} \frac{1}{|\text{supp} (\Pi^{(i)})|}\right) \\
& = P((X_{1},\dots,X_{m}) \in A) \cdot \left(\frac{\sum_{\pi \in \text{supp}(\Pi^{(i)})} 1}{|\text{supp} (\Pi^{(i)})|}\right) \\
& = P((X_{1},\dots,X_{m}) \in A) \cdot \frac{{n \choose m}}{{n \choose m}} \\
& = P((X_{1},\dots,X_{m}) \in A).
\end{aligned}
\end{equation}
Where, to get to line 4 we used the independence of $X_1,\dots,X_n$ and $\Pi^{(i)}$ to simplify the conditional probability, then to get to line 5 we used that $\Pi^{(i)}$ is uniformly distributed over the $n\choose m$ different index permutations. And to get to line 6 we used that we have 
\[(X_{\pi_1},\dots,X_{\pi_m}) \overset{d}{=} (X_{1},\dots,X_{m}),\] 
for any $\pi \in \text{supp}(\Pi^{(i)})$, because for any Borel set $A$, we have $P((X_{\pi_1},\dots,X_{\pi_m}) \in A) = P((X_{1},\dots,X_{m}) \in A)$, for any deterministic choice of indices $\pi$ by the i.i.d. assumption on $X_{1},\dots,X_{n}$. After that it is simple algebraic manipulations.

Therefore, Equation \ref{eqn:proof1101} shows that $(X_{\Pi_1},\dots,X_{\Pi_m}) \overset{d}{=} (X_{1},\dots,X_{m})$, by the definition of equality in distribution, which implies that for a $\hat{\theta}$ statistic, a measurable function of the sample, $\hat{\theta}(X_{\Pi_1},\dots,X_{\Pi_m}) \overset{d}{=} \hat{\theta}(X_{1},\dots,X_{m})$, and using the notation in Definition \eqref{eqn:130} this means $\hat{\theta}_{m,i} \overset{d}{=} \hat{\theta}_{m}$.

Given that $\hat{\theta}_{m,i} \overset{d}{=} \hat{\theta}_{m}$, we have $\text{Var}(\hat{\theta}_{m,i})=\text{Var}(\hat{\theta}_{m})$.

Furthermore, for any $i_1\neq j_1,\; i_2\neq j_2$, we have that $(\hat{\theta}_{m,i_1} , \hat{\theta}_{m,j_1}) \overset{d}{=} (\hat{\theta}_{m,i_2} , \hat{\theta}_{m,j_2})$. Which follows from the assumption that the random index sets $\Pi^{(i_1)}, \Pi^{(j_1)}$ and $\Pi^{(i_2)}, \Pi^{(j_2)}$ are independent and identically distributed respectively. 
\newline\newline
\textbf{To put it more rigorously: }
\newline\newline
For any Borel sets $A_1, A_2 \subseteq \mathbb{R}^{dim \times m}$, using the Law of Total Probability we have that
\begin{equation}
\begin{aligned}
\label{eqn:proof11}
&P\left(\left(X_{\Pi^{(i_1)}_1},\dots,X_{\Pi^{(i_1)}_m}\right) \in A_1 , \; \left(X_{\Pi^{(j_1)}_1},\dots,X_{\Pi^{(j_1)}_m}\right) \in A_2\right) =\\ 
& = \mathbb{E}\!\left[P\left(\left(X_{\Pi^{(i_1)}_1},\dots,X_{\Pi^{(i_1)}_m}\right) \in A_1 , \; \left(X_{\Pi^{(j_1)}_1},\dots,X_{\Pi^{(j_1)}_m}\right) \in A_2 \;|\;X_1,\dots, X_n\right)\right] \\
& = \int P\left(\left(X_{\Pi^{(i_1)}_1},\dots,X_{\Pi^{(i_1)}_m}\right) \in A_1 , \; \left(X_{\Pi^{(j_1)}_1},\dots,X_{\Pi^{(j_1)}_m}\right) \in A_2 \;| \; (X_1,\dots,X_n) = (x_1,\dots, x_n)\right) ~dP(X_1,\dots,X_n)\\
& = \int P\left(\left(x_{\Pi^{(i_1)}_1},\dots,x_{\Pi^{(i_1)}_m}\right) \in A_1 , \; \left(x_{\Pi^{(j_1)}_1},\dots,x_{\Pi^{(j_1)}_m}\right) \in A_2\right)~dP(X_1,\dots,X_n) \\
& = \int P\left(\left(x_{\Pi^{(i_1)}_1},\dots,x_{\Pi^{(i_1)}_m}\right) \in A_1\right) P\left(\left(x_{\Pi^{(j_1)}_1},\dots,x_{\Pi^{(j_1)}_m}\right) \in A_2\right)~dP(X_1,\dots,X_n) \\
& = \int P\left(\left(x_{\Pi^{(i_2)}_1},\dots,x_{\Pi^{(i_2)}_m}\right) \in A_1\right) P\left(\left(x_{\Pi^{(j_2)}_1},\dots,x_{\Pi^{(j_2)}_m}\right) \in A_2\right)~dP(X_1,\dots,X_n) \\ 
& = \int P\left(\left(x_{\Pi^{(i_2)}_1},\dots,x_{\Pi^{(i_2)}_m}\right) \in A_1 , \; \left(x_{\Pi^{(j_2)}_1},\dots,x_{\Pi^{(j_2)}_m}\right) \in A_2\right)~dP(X_1,\dots,X_n) \\
& = P\left(\left(X_{\Pi^{(i_2)}_1},\dots,X_{\Pi^{(i_2)}_m}\right) \in A_1 , \; \left(X_{\Pi^{(j_2)}_1},\dots,X_{\Pi^{(j_2)}_m}\right) \in A_2\right).
\end{aligned}
\end{equation}
Where to get to line 4 we used the independence of $X_1,\dots,X_n$ and $\Pi^{(i_1)}, \Pi^{(j_1)}$. Then, to get to line 5 we used the independence of $\Pi^{(i_1)}$ and $ \Pi^{(j_1)}$. Then, to get to line 6 we used that $\Pi^{(i_1)}$ and $\Pi^{(i_2)}$ and $ \Pi^{(j_1)}$ and $ \Pi^{(j_2)}$ are identically distributed. After that, the same equalities can be repeated backwards with the same reasoning, just using $\Pi^{(i_2)}, \Pi^{(j_2)}$ instead, to arrive at the desired result. \newline
Therefore, we have shown that
\begin{equation}
\begin{aligned}
\label{eqn:proof11}
\left(\left(X_{\Pi^{(i_1)}_1},\dots,X_{\Pi^{(i_1)}_m}\right), \left(X_{\Pi^{(j_1)}_1},\dots,X_{\Pi^{(j_1)}_m}\right)\right) \overset{d}{=}\left(\left(X_{\Pi^{(i_2)}_1},\dots,X_{\Pi^{(i_2)}_m}\right), \left(X_{\Pi^{(j_2)}_1},\dots,X_{\Pi^{(j_2)}_m}\right)\right)
\end{aligned}
\end{equation}
which implies that for the $\hat{\theta}$ statistic, a measurable function of the sample, we have
\begin{equation}
\begin{aligned}
\label{eqn:proof11}
\left(\hat{\theta}\left(X_{\Pi^{(i_1)}_1},\dots,X_{\Pi^{(i_1)}_m}\right), \hat{\theta}\left(X_{\Pi^{(j_1)}_1},\dots,X_{\Pi^{(j_1)}_m}\right)\right) \overset{d}{=}\left(\hat{\theta}\left(X_{\Pi^{(i_2)}_1},\dots,X_{\Pi^{(i_2)}_m}\right), \hat{\theta}\left(X_{\Pi^{(j_2)}_1},\dots,X_{\Pi^{(j_2)}_m}\right)\right) 
\end{aligned}
\end{equation}
with equivalent notation $(\hat{\theta}_{m,i_1} , \hat{\theta}_{m,j_1}) \overset{d}{=} (\hat{\theta}_{m,i_2} , \hat{\theta}_{m,j_2})$. Given that $(\hat{\theta}_{m,i_1} , \hat{\theta}_{m,j_1}) \overset{d}{=} (\hat{\theta}_{m,i_2} , \hat{\theta}_{m,j_2})$, we have $\text{cov}(\hat{\theta}_{m,i_1} , \hat{\theta}_{m,j_1}) = \text{cov}(\hat{\theta}_{m,i_2} , \hat{\theta}_{m,j_2})$.

As we have shown that the covariances $\text{cov}(\hat{\theta}_{m,i} , \hat{\theta}_{m,j})$ are always the same for $i \neq j$, we have that $\text{cov}(\hat{\theta}_{m,i} , \hat{\theta}_{m,j}) =: \gamma_m$ is a constant. We've already shown that $\text{Var}(\hat{\theta}_{m,i})=\text{Var}(\hat{\theta}_{m})$ for all $i$, so that the variances are constant as well. Putting these together, implies that we have 
\[\rho_m := \text{corr}(\hat{\theta}_{m,i} , \hat{\theta}_{m,j}) = \frac{\text{cov}(\hat{\theta}_{m,i} , \hat{\theta}_{m,j})}{\sqrt{\text{Var}(\hat{\theta}_{m,i})\text{Var}(\hat{\theta}_{m,j})}} = \frac{\gamma_m}{\sqrt{\text{Var}(\hat{\theta}_{m})\text{Var}(\hat{\theta}_{m})}} = \frac{\gamma_m}{\text{Var}(\hat{\theta}_{m})},\]
which provides us the $\gamma_m = \rho_m \text{Var}(\hat{\theta}_{m})$ relationship.
\newline\newline
\textbf{Now we can continue with the main part of the proof, showing the middle equality.}
\newline\newline
According to definition \eqref{eqn:130}, and the derived equalities of variances, we have
\begin{equation}
\begin{aligned}
\label{eqn:proof11}
& \text{Var}(\hat{\theta}_{B,m}) = \\
& = \text{Var}\left( \frac{1}{B} \sum_{i=1}^B \hat{\theta}_{m, i}\right) \\
& = \frac{1}{B^2} \sum_{i=1}^B \sum_{j=1}^B \text{cov}(\hat{\theta}_{m, i}, \hat{\theta}_{m, j}) \\ 
& = \frac{1}{B^2} \sum_{i=1}^B  \text{cov}(\hat{\theta}_{m, i}, \hat{\theta}_{m, i}) + \frac{1}{B^2} \sum_{i=1}^B \sum_{j\neq i}^B \text{cov}(\hat{\theta}_{m, i}, \hat{\theta}_{m, j}) \\
& = \frac{1}{B^2} \sum_{i=1}^B  \text{Var}(\hat{\theta}_{m,i}) + \frac{1}{B^2} \sum_{i=1}^B \sum_{j\neq i}^B \gamma_m \\
& = \frac{1}{B^2} \sum_{i=1}^B  \text{Var}(\hat{\theta}_{m}) + \frac{1}{B^2} \sum_{i=1}^B \sum_{j\neq i}^B \rho_m \text{Var}(\hat{\theta}_{m}) \\
& = \frac{1}{B}\text{Var}(\hat{\theta}_{m}) + \frac{B^2-B}{B^2} \rho_m \text{Var}(\hat{\theta}_{m}) \\
& =\text{Var}(\hat{\theta}_{m}) \left(\frac{1}{B} + \rho_m - \frac{\rho_m}{B}\right) \\
& = \text{Var}(\hat{\theta}_{m}) \left(\rho_m + \frac{1-\rho_m}{B}\right)
\end{aligned}
\end{equation}
This proves the middle equality, and from here we only need to know that the correlation is always such that $-1\leq\rho_m \leq 1$. Therefore, for $B > 1$ we have,
\begin{equation}
\begin{aligned}
\label{eqn:proof11}
& \rho_m \leq 1 \\
& \rho_m + \frac{1-\rho_m}{B} = \frac{\rho_m(B-1)+1}{B} \leq \frac{B-1+1}{B} = 1 \\
& \rho_m + \frac{1-\rho_m}{B} \leq  1 \\
& 0 \leq  \text{Var}(\hat{\theta}_{m}) \\
& \text{Var}(\hat{\theta}_{m}) \left(\rho_m + \frac{1-\rho_m}{B}\right)  \leq \text{Var}(\hat{\theta}_{m}) \\
\end{aligned}
\end{equation}
as the variance is always non-negative, and
\begin{equation}
\begin{aligned}
\label{eqn:proof11}
& \rho_m \leq 1 \\
& 0 \leq \frac{1-\rho_m}{B}\\
& \rho_m \leq \rho_m + \frac{1-\rho_m}{B}\\
& 0 \leq  \text{Var}(\hat{\theta}_{m}) \\
& \rho_m\text{Var}(\hat{\theta}_{m}) \leq \text{Var}(\hat{\theta}_{m}) \left(\rho_m + \frac{1-\rho_m}{B}\right) \\
& \text{cov}(\hat{\theta}_{m, i}, \hat{\theta}_{m, j}) = \gamma_m = \rho_m\text{Var}(\hat{\theta}_{m}) \leq \text{Var}(\hat{\theta}_{m}) \left(\rho_m + \frac{1-\rho_m}{B}\right) \\
& \text{cov}(\hat{\theta}_{m, i}, \hat{\theta}_{m, j}) \leq \text{Var}(\hat{\theta}_{m}) \left(\rho_m + \frac{1-\rho_m}{B}\right)
\end{aligned}
\end{equation}
It's also clear that
\begin{equation}
\begin{aligned}
\label{eqn:proof11}
\lim_{B \rightarrow \infty} \text{Var}(\hat{\theta}_{m}) \left(\rho_m + \frac{1-\rho_m}{B}\right) = \rho_m\text{Var}(\hat{\theta}_{m}) = \gamma_m =  \text{cov}(\hat{\theta}_{m, i}, \hat{\theta}_{m, j})
\end{aligned}
\end{equation}
as $\text{Var}(\hat{\theta}_{m})$ and $\rho_m$ are not functions of $B$.
\end{proof}
\subsection{Proof of Theorem 1}\label{Asec:proof2}
\begin{proof}

Let's denote:
\begin{equation}
\begin{aligned}
\label{eqn:proof11}
\gamma(m) := \text{cov}(\hat{\theta}_{m, i}, \hat{\theta}_{m, j} \;|\; |\Pi^{(i)}\cap \Pi^{(j)}|),
\end{aligned}
\end{equation}
where we have that $\gamma(m) = f_m(|\Pi^{(i)}\cap \Pi^{(j)}|)$ is a random variable that is a function of the random variable $|\Pi^{(i)}\cap \Pi^{(j)}|$, for some $f_m$ measurable function. With this notation we may write $\gamma(h, m) = f_m(h)$ and $\gamma(H, m) = f_m(H)$ for a random variable $H$, where $H \overset{d}{=}|\Pi^{(i)}\cap \Pi^{(j)}|$ implies $\gamma(H, m) = f_m(H) \overset{d}{=} f_m(|\Pi^{(i)}\cap \Pi^{(j)}|)= \gamma(m)$. Meaning that for any $H \overset{d}{=}|\Pi^{(i)}\cap \Pi^{(j)}|$, we have that $\gamma(H, m)\overset{d}{=}\gamma(m)$, along the definition of $\gamma(h, m)$ in the theorem statement. \footnote{We use the $\overset{d}{=}$ notation to signal equality in distribution between two random variables, that is a fairly common notation in statistics textbooks.}
\newline\newline
\textbf{First we show,}
\begin{enumerate}
    \item $|\Pi^{(i)}\cap\Pi^{(j)}|\sim \text{Hypergeometric}(n, m, m)$,
    \item $|\Pi^{(i)}\cap\Pi^{(j)}| \perp \Pi^{(i)}$,
    \item $|\Pi^{(i)}\cap\Pi^{(j)}| \perp \Pi^{(j)}$,
\end{enumerate}
which we will make use of in the later parts of the proof.\footnote{We use the $\perp$ notation for independence between random variables, and the $\sim$ notation to point out the distribution of a random variable.}
\newline\newline
\textbf{1.}
\newline\newline
Observe that for a given index set $\pi^{(i)}$ we have that $|\pi^{(i)}\cap\Pi^{(j)}| \sim \text{Hypergeometric}(n, m, m)$ distributed. As for any $h \in \{0,1,\dots,m\}$, we can select, without permutations, in $m\choose h$ different ways, the $h$ elements from $\pi^{(i)}$ to be matched by $\Pi^{(j)}$. Then we can select out of the remaining $n-h$ elements those $m-h$ which are not matching in ${n-m}\choose{m-h}$ different ways, giving us that $P(|\pi^{(i)}\cap\Pi^{(j)}|=h) = \frac{\binom{m}{h} \binom{n - m}{m - h}}{\binom{n}{m}}$ for $h = 0,1,\dots,m$, which is the probability mass function of the $\text{Hypergeometric}(n, m, m)$ distribution.
From this, we can easily deduce using the Law  of Total Probability, that,
\begin{equation}
\begin{aligned}
\label{eqn:proof110001}
& P(|\Pi^{(i)} \cap \Pi^{(j)}| = h) = \\
& = \mathbb{E}\!\left[ P(|\Pi^{(i)} \cap \Pi^{(j)}| = h \mid \Pi^{(i)}) \right] \\
& = \sum_{\pi^{(i)} \in \text{supp}(\Pi^{(i)})} P(|\pi^{(i)} \cap \Pi^{(j)}| = h \;|\; \Pi^{(i)} = \pi^{(i)})\cdot P(\Pi^{(i)} = \pi^{(i)}) \\
&= \sum_{\pi^{(i)} \in \text{supp}(\Pi^{(i)})} P(|\pi^{(i)} \cap \Pi^{(j)}| = h)\cdot \frac{1}{\binom{n}{m}} \\
&= \sum_{\pi^{(i)} \in \text{supp}(\Pi^{(i)})} \frac{\binom{m}{h} \binom{n - m}{m - h}}{\binom{n}{m}}\cdot\frac{1}{\binom{n}{m}} \\
&= \frac{\binom{m}{h} \binom{n - m}{m - h}}{\binom{n}{m}^2}\cdot \left(\sum_{\pi^{(i)} \in \text{supp}(\Pi^{(i)})}1\right) \\
& =  \frac{\binom{m}{h} \binom{n - m}{m - h}}{\binom{n}{m}^2}\cdot \binom{n}{m} \\
& = \frac{\binom{m}{h} \binom{n - m}{m - h}}{\binom{n}{m}}.
\end{aligned}
\end{equation}
Where the argument goes similar to \ref{eqn:proof1101} in the previous proof. To get to line 4 we used the independence of $\Pi^{(i)}$ and $\Pi^{(j)}$, then we use the uniform distributional assumption for $\Pi^{(i)}$, and then we substitute the probability mass function of the Hypergeometric($n$, $m$, $m$) distribution, which we showed before. Completed by simple algebraic manipulations.

Equation \ref{eqn:proof110001} shows that $|\Pi^{(i)}\cap\Pi^{(j)}|$ is also $\text{Hypergeometric}(n, m, m)$ distributed, according its probability mass function.

\textbf{2./3.}
\newline\newline
This also means, that because $\Pi^{(i)}$ and $\Pi^{(j)}$ are independent, we have for any $\pi^{(i)} \in \text{supp}(\Pi^{(i)})$, and for any $h\in \text{supp}\left(|\Pi^{(i)} \cap \Pi^{(j)}|\right)$,
\begin{equation}
\begin{aligned}
\label{eqn:proof11}
P(|\Pi^{(i)} \cap \Pi^{(j)}| = h \mid \Pi^{(i)} = \pi^{(i)}) 
&= \frac{P\left(|\pi^{(i)} \cap \Pi^{(j)}| = h \;\cap\; \Pi^{(i)} = \pi^{(i)}\right)}{P(\Pi^{(i)} = \pi^{(i)})} \\
&= \frac{P(|\pi^{(i)} \cap \Pi^{(j)}| = h) \cdot P(\Pi^{(i)} = \pi^{(i)})}{P(\Pi^{(i)} = \pi^{(i)})} \\
&= P(|\pi^{(i)} \cap \Pi^{(j)}| = h) \\
&= \frac{\binom{m}{h} \binom{n - m}{m - h}}{\binom{n}{m}} \\
&= P(|\Pi^{(i)} \cap \Pi^{(j)}| = h)
\end{aligned}
\end{equation}
and therefore we have shown that $|\Pi^{(i)}\cap\Pi^{(j)}|$ and $\Pi^{(i)}$ are independent. We can make a similar argument to show that $|\Pi^{(i)}\cap\Pi^{(j)}|$ and $\Pi^{(j)}$ are independent, as $i$ and $j$ are interchangeable.
\newline \newline
\textbf{Now, onto the main proof, first we show that,}
\newline \newline
\begin{equation}
\begin{aligned}
\label{eqn:proof11}
\text{cov}(\hat{\theta}_{m, i}, \hat{\theta}_{m, j}) = \mathbb{E}[\gamma(m)].
\end{aligned}
\end{equation}
Apply the Law of Total Covariance to get
\begin{equation}
\begin{aligned}
\label{eqn:proof11totalcov}
& \text{cov}(\hat{\theta}_{m, i}, \hat{\theta}_{m, j}) = \mathbb{E}[\text{cov}(\hat{\theta}_{m, i}, \hat{\theta}_{m, j}\;|\;|\Pi^{(i)}\cap \Pi^{(j)}|)] + \text{cov}(\mathbb{E}[\hat{\theta}_{m, i}\;|\;|\Pi^{(i)}\cap \Pi^{(j)}|],\; \mathbb{E}[\hat{\theta}_{m, j}\;|\;|\Pi^{(i)}\cap \Pi^{(j)}|]) = \\
& = \mathbb{E}[\gamma(m)] + \text{cov}(\mathbb{E}[\hat{\theta}_{m, i}\;|\;|\Pi^{(i)}\cap \Pi^{(j)}|],\; \mathbb{E}[\hat{\theta}_{m, j}\;|\;|\Pi^{(i)}\cap \Pi^{(j)}|])
\end{aligned}
\end{equation}
As we have that $\Pi^{(i)}$ and $|\Pi^{(i)}\cap \Pi^{(j)}|$ are independent, and $X_1,...,X_n$ and $(\Pi^{(i)}, \Pi^{(j)})$ are independent, $X_1,...,X_n$ and $|\Pi^{(i)}\cap \Pi^{(j)}|$ are also independent, and therefore $(\Pi^{(i)},\;X_1,...,X_n)$ and $|\Pi^{(i)}\cap \Pi^{(j)}|$ are independent, implying that for the $\hat{\theta}_{m, i}$ measurable function of $(\Pi^{(i)},\;X_1,...,X_n)$, we have that $\hat{\theta}_{m, i}$ and $|\Pi^{(i)}\cap \Pi^{(j)}|$ are independent. A similar argument can be made to show that $\hat{\theta}_{m, j}$ and $|\Pi^{(i)}\cap \Pi^{(j)}|$ are independent. Therefore,
\begin{equation}
\begin{aligned}
\label{eqn:proof11}
& \mathbb{E}[\hat{\theta}_{m, i}\;|\;|\Pi^{(i)}\cap \Pi^{(j)}|] = \mathbb{E}[\hat{\theta}_{m, i}] \\
& \mathbb{E}[\hat{\theta}_{m, j}\;|\;|\Pi^{(i)}\cap \Pi^{(j)}|] = \mathbb{E}[\hat{\theta}_{m, j}] \\
&\text{cov}(\mathbb{E}[\hat{\theta}_{m, i}\;|\;|\Pi^{(i)}\cap \Pi^{(j)}|],\; \mathbb{E}[\hat{\theta}_{m, j}\;|\;|\Pi^{(i)}\cap \Pi^{(j)}|]) = \text{cov}(\mathbb{E}[\hat{\theta}_{m, i}],\; \mathbb{E}[\hat{\theta}_{m, j}]) = 0.
\end{aligned}
\end{equation}
As the covariance of two constants $\mathbb{E}[\hat{\theta}_{m, i}]$ and $\mathbb{E}[\hat{\theta}_{m, j}]$ is $0$. So, going back to \eqref{eqn:proof11totalcov} we have shown that 
\begin{equation}
\begin{aligned}
\label{eqn:proof12}
& \text{cov}(\hat{\theta}_{m, i}, \hat{\theta}_{m, j}) = \mathbb{E}[\gamma(m)].
\end{aligned}
\end{equation}
\newline
\textbf{Now we can complete the proof.}
\newline \newline
Using the assumption that $\gamma(h,m) \leq \varphi(\frac{h}{m})$, and that $\gamma(H,m) \overset{d}{=} \gamma(m)$ for $H \sim \text{Hypergeometric}(n, m, m)$, we have the bound
\begin{equation}
\begin{aligned}
\label{eqn:proof13}
\mathbb{E}[\gamma(m)] = \mathbb{E}[\gamma(H,m)] \leq \mathbb{E}\!\left[\varphi\left(\frac{H}{m}\right)\right],
\end{aligned}
\end{equation}
now, using our assumptions on the properties of $\varphi$, for the right side we can apply the generalized Jensen Bound \cite{JensenBound}, to arrive at
\begin{equation}
\begin{aligned}
\label{eqn:proof14}
& \text{cov}(\hat{\theta}_{m, i}, \hat{\theta}_{m, j}) = \mathbb{E}[\gamma(m)] = \mathbb{E}\!\left[\varphi\left(\frac{H}{m}\right)\right] \leq\\
& \leq \varphi\left(\mathbb{E}\!\left[\frac{H}{m}\right]\right) + \text{Var}\left(\frac{H}{m}\right) \sup_{x \in [0,1]} \frac{\varphi''(x)}{2} = \\
& = \varphi\left(\frac{\mathbb{E}\!\left[H\right]}{m}\right) + \frac{\text{Var}\left(H\right)}{m^2} \sup_{x \in [0,1]} \frac{\varphi''(x)}{2}\\
& = \varphi\left(r\right) + \frac{n\cdot r \cdot \frac{n\cdot r}{n}\cdot\frac{n-n\cdot r}{n}\cdot\frac{n-n\cdot r}{n-1}}{n^2 \cdot r^2} \sup_{x \in [0,1]} \frac{\varphi''(x)}{2}\\
& = \varphi\left(r\right) + \frac{(1-r)^2}{n-1} \sup_{x \in [0,1]} \frac{\varphi''(x)}{2} \\
& = \varphi\left(r\right) + (1-r)^2 O\left(\frac{1}{n}\right) \leq \\
& \leq\varphi\left(r\right) + O\left(\frac{1}{n}\right)
\end{aligned}
\end{equation}
where in line 4 we substitute in the known theoretical mean and variance of the $\text{Hypergeometric}(n, m, m)$ distribution \cite{wikipedia:hypergeometric_distribution}.
\end{proof}

\textbf{Variants of Theorem \ref{th2}:}
Notice that our main assumption in Theorem \ref{th2} (there exists a $\varphi:[0,1] \rightarrow \mathbb{R}$ such that $\forall_{h, m\;: \;h \leq m } \; \gamma(h,m) \leq \varphi(\frac{h}{m})$) requires the existence of a single such function for all values of $h$ and $m$, and therefore all values of $n$ as well. This allowed us to merge the $\sup_{x \in [0,1]} \frac{\varphi''(x)}{2}$ constant term into the asymptotic expression ($O(\frac{1}{n})$). In the main part of the thesis, we argued that the assumption is natural because the $\gamma(h,m)$ covariance is expected to increase with a larger $\frac{h}{m}$ ratio. While this is a reasonable idea, the assumption has to hold true for the same function for an infinite number of covariances. Whereas, given any fixed $n$-value, assuming the existence of a $\varphi_n:[0,1] \rightarrow \mathbb{R}$ such that $\forall_{h,m\;: \;h \leq m,\; m\leq n} \; \gamma(h,m) \leq \varphi_n(\frac{h}{m})$ may be an easier condition to satisfy. This assumption, however, on its own, is not enough to obtain the same asymptotic relationship, as we arrive at the
\[
\text{cov}(\hat{\theta}_{m, i}, \hat{\theta}_{m, j}) \leq \varphi_n\left(r\right) + \frac{(1-r)^2}{n-1} \sup_{x \in [0,1]} \frac{\varphi_n''(x)}{2}
\]
inequality, following the same reasoning up until this point as in the proof of Theorem \ref{th2}. Here the $\sup_{x \in [0,1]} \frac{\varphi_n''(x)}{2}$ term remains dependent on $n$. An avenue to obtain a similar asymptotic term can be found by additionally assuming $\sup_{n\in\mathbb{Z}^+}\sup_{x \in [0,1]}\varphi''_n(x) < \infty$, and therefore we force the $\sup_{x \in [0,1]} \frac{\varphi_n''(x)}{2} \leq \sup_{m\in\mathbb{Z}^+}\sup_{x \in [0,1]}\varphi''_n(x)$ constant upper bound, allowing us to once again merge this constant term into the asymptotic expression, and arrive at the
\[
\text{cov}(\hat{\theta}_{m, i}, \hat{\theta}_{m, j}) \leq \varphi_n\left(r\right) + O\left(\frac{1}{n}\right)
\]
more specific inequality, for a specific $n$, in terms of $\varphi_n$. Therefore, we can state the following variant of Theorem \ref{th2}:

\begin{theorem}\label{th4}
Define the two-variable deterministic function $\gamma(h,m) := \text{cov}(\hat{\theta}_{m,i}, \hat{\theta}_{m,j} \;|\;|\Pi^{(i)}\cap\Pi^{(j)}|=h)$ for the integer domain given by $m = r \cdot n \in \mathbb{Z^+}$ and $h=0,1,\dots,m$, namely, the covariance of two single bag estimators given that we know the number of dependent variables, provided that such a covariance exists and is finite for any pair of bags. Assume that there exists for each $n$, a twice differentiable, increasing function $\varphi_n:[0,1] \rightarrow \mathbb{R}$ such that $\forall_{h\;: \;h \leq m,\; m\leq n} \; \gamma(h,m) \leq \varphi_n(\frac{h}{m})$ and $\sup_{n\in\mathbb{Z}^+}\sup_{x \in [0,1]}\varphi''_n(x) < \infty$, then we have for every $n$ and $m = r \cdot n \in \mathbb{Z^+}$:
\begin{equation}
\begin{aligned}
\label{eqn:131.2}
\forall_{r \in [0,1]} \;\;\text{cov}(\hat{\theta}_{m,i}, \hat{\theta}_{m,j})\leq \varphi_n(r) + O\left(\frac{1}{n}\right)
\end{aligned}
\end{equation}
\end{theorem}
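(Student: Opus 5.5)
The plan is to reuse the proof of Theorem \ref{th2} almost verbatim, changing only the final step where the constant is absorbed. Fix $n$ and $m = rn \in \mathbb{Z}^+$, and let $H \triangleq |\Pi^{(i)}\cap\Pi^{(j)}|$.

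The first step takes over three facts already established there:
\begin{itemize}
\item $H \sim \text{Hypergeometric}(n,m,m)$;
\item $H$ is independent of $(\Pi^{(i)}, X_1,\dots,X_n)$ and of $(\Pi^{(j)}, X_1,\dots,X_n)$;
\item consequently, by the Law of Total Covariance, $\mathrm{Cov}(\hat{\theta}_{m,i},\hat{\theta}_{m,j}) = \mathbb{E}[\gamma(H,m)]$.
\end{itemize}
The third fact holds because the covariance of the conditional means vanishes. None of these facts uses the hypothesis on $\varphi$, so they carry over unchanged.

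Next I apply the new hypothesis. Since $m \le n$ and $H \le m$, we have pointwise $\gamma(H,m) \le \varphi_n(H/m)$, so
\[ \mathrm{Cov}(\hat{\theta}_{m,i},\hat{\theta}_{m,j}) \le \mathbb{E}[\varphi_n(H/m)]. \]
I then use the second-order Jensen-gap bound \cite{JensenBound}, valid for twice differentiable $\varphi_n$ on $[0,1]$ and obtained from Taylor's theorem with Lagrange remainder around $\mathbb{E}[H/m]$:
\[ \mathbb{E}[\varphi_n(H/m)] \le \varphi_n(\mathbb{E}[H]/m) + \tfrac{1}{2}\,\mathrm{Var}(H/m)\,\sup_{x\in[0,1]}\varphi_n''(x). \]
Substituting the hypergeometric moments $\mathbb{E}[H] = m^2/n = rm$ and $\mathrm{Var}(H) = m r (1-r)\frac{n-m}{n-1}$ gives $\mathbb{E}[H]/m = r$ and $\mathrm{Var}(H/m) = (1-r)^2/(n-1)$. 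This yields
\[ \mathrm{Cov}(\hat{\theta}_{m,i},\hat{\theta}_{m,j}) \le \varphi_n(r) + \frac{(1-r)^2}{n-1}\,\sup_{x\in[0,1]}\frac{\varphi_n''(x)}{2}. \]

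The key step, and the only real departure from Theorem \ref{th2}, is making the remainder $O(1/n)$ uniformly in $n$. Set $C \triangleq \sup_{n}\sup_{x}\varphi_n''(x)/2 < \infty$, which is finite by assumption. Then the remainder is at most $\max(C,0)\,(1-r)^2/(n-1) \le 2\max(C,0)/n$ for $n \ge 2$. This constant depends on neither $n$ nor $r$, which justifies writing $O(1/n)$. If $C \le 0$ the remainder is nonpositive and can simply be dropped.

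I expect the main obstacle to be interpretive rather than technical: stating clearly that the $O(1/n)$ is uniform over $r$ and $n$ even though $\varphi_n$ changes with $n$. This is exactly what the uniform bound on $\varphi_n''$ provides. Two edge cases need a brief check. When $n = 1$ we have $m = n$, hence $H = m$ deterministically and the variance term is zero. When $r = 1$ the same holds, since $H = m$ again. The "increasing" assumption on $\varphi_n$ is not needed for the inequality itself; it only supports the monotone interpretation of the result in $r$.
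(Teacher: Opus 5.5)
Your proposal is correct and follows the paper's own argument. You reuse the chain from the proof of Theorem~\ref{th2}: the overlap $H$ is hypergeometric and independent of $(\Pi^{(i)},X_1,\dots,X_n)$, total covariance gives $\mathbb{E}[\gamma(H,m)]$, and the second-order Jensen-gap bound with the hypergeometric moments leads down to $\varphi_n(r)+\frac{(1-r)^2}{n-1}\sup_{x}\frac{\varphi_n''(x)}{2}$. You then absorb the remainder into $O(1/n)$ using the uniform bound $\sup_n\sup_x\varphi_n''(x)<\infty$, exactly as the paper does. Your use of $\max(C,0)$ is slightly more careful than the paper, whose step $\sup_x \frac{\varphi_n''(x)}{2}\le\sup_n\sup_x\varphi_n''(x)$ silently assumes that supremum is nonnegative.
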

Of course, this is only required if we want to be able to retain the same order, namely $O\left(\frac{1}{n}\right)$. We can even further generalize our assumptions as $\sup_{x \in [0,1]}\varphi''_n(x) = O(n^t)$ with a fixed $t\in[0,1)$, in which case we still arrive at a vanishing asymptotic term $O(n^{t-1})$.

\section{Pointwise MSE decomposition for submanifolds}\label{Asec:MSE appendix}
The following is a simple generalization of the common MSE decomposition formula, for the case where queries are allowed to have different ground truth LID.

Given an LID estimator $\widehat{LID}$ and a union of manifolds $D = D_{1}\cup D_2\cup\dots\cup D_{L},\; D_{i} = \{X_{i1},\dots,X_{i|D_{i}|}\}$, with known, constant ground truth LIDs per manifold $LID(D_1),\dots,LID(D_L)$, we calculate the total pointwise mean squared error and its decomposition as follows,
\begin{equation}
\begin{aligned}
\label{eqn:25}
& \overline{LID}(D_i) := \frac{1}{|D_i|}\sum_{j = 1}^{|D_i|} \widehat{LID}(D_i;X_{ij}), 
\qquad
n := \sum_{i=1}^{L} |D_i|, \\
& MSE := \frac{1}{n}\sum_{i = 1}^{L}\sum_{j = 1}^{|D_i|} \left(\widehat{LID}(D_i;X_{ij}) - LID(D_i)\right)^2 \\
& \phantom{MSE} = \sum_{i = 1}^{L} \frac{|D_i|}{n} \frac{1}{|D_i|}\sum_{j = 1}^{|D_i|} \left(\widehat{LID}(D_i;X_{ij}) - LID(D_i)\right)^2\\
& \phantom{MSE} = \sum_{i = 1}^{L}\frac{|D_i|}{n}MSE_{D_i} \\
& \phantom{MSE} = \sum_{i = 1}^{L}\frac{|D_i|}{n}\left[VAR_{D_i} + BIAS_{D_i}^2  \right] \\
& \phantom{MSE} = \sum_{i = 1}^{L}\frac{|D_i|}{n}VAR_{D_i} + \sum_{i = 1}^{L}\frac{|D_i|}{n}BIAS_{D_i}^2 \\
& \phantom{MSE} =
\underbrace{\sum_{i = 1}^{L}\frac{|D_i|}{n}\left[\frac{1}{|D_i|}\sum_{j = 1}^{|D_i|} \left(\widehat{LID}(D_i;X_{ij})- \overline{LID}(D_i)\right)^2\right]}_{"~\mathrm{VAR}~"}
+
\underbrace{\sum_{i = 1}^{L}\frac{|D_i|}{n}\left(\overline{LID}(D_i) - LID(D_i)\right)^2}_{"~\mathrm{BIAS}^2~"}.
\end{aligned}
\end{equation}
where total empirical variance $"~\mathrm{VAR}~"$ and total bias squared $"~\mathrm{BIAS}^2~"$ are interpreted as the respective weighted sums of the manifold-wise empirical variances $VAR_{D_i}$, and bias squares $BIAS_{D_i}^2$, accordingly, for the purposes of the result data in our experiments involving datasets with different LID submanifolds (\emph{Lollipop} data \ref{lollipop}). Note that in the case of varying LID between manifolds, the error is influenced more by estimates of the high ground truth LID points. Therefore, we have to be careful when using evaluation by MSE if we want proportionally low errors for each manifold.

\section{Dataset descriptions}\label{Asec: Apppendix Dataset descriptions}

\begin{table}[H]
    \centering
    \caption{Collection of all datasets used in the experiments. Here $d$ represents the ground-truth LID (GT LID) of the datasets, $\dim$ is their representation dimension (i.e., the full dimensionality of the data space), and the last column displays references to detailed descriptions of their sampling function as well as an example visualization.}\vspace*{2mm}
    \begin{tabular}{|p{3cm}||p{2.5cm}|p{2.5cm}|p{3cm}|}
    \hline
    Dataset Name & $d$ (GT LID) & $\dim$ & Ref. (Appendix) \\
    \hline
    M1\_Sphere          & 10 & 11 & \ref{M1Sphere}    \\
    M2\_Affine\_3to5    &  3 &  5 &  \ref{M2Affine3to5}    \\
    M3\_Nonlinear\_4to6 &  4 &  6 &   \ref{M3Nonlinear4to6}   \\
    M4\_Nonlinear       &  4 &  8 & \ref{M4Nonlinear/M6Nonlinear/M8Nonlinear}\\
    M5b\_Helix2d        &  2 &  3 &   \ref{M5bHelix2d}   \\
    M6\_Nonlinear       &  6 & 36 &   \ref{M4Nonlinear/M6Nonlinear/M8Nonlinear}\\
    M7\_Roll            &  2 &  3 &  \ref{M7Roll}    \\
    M8\_Nonlinear       & 12 & 72 &   \ref{M4Nonlinear/M6Nonlinear/M8Nonlinear}\\
    M9\_Affine          & 20 & 20 &   \ref{M9Affine}   \\
    M10a\_Cubic         & 20 & 11 &  \ref{M10aCubic/M10bCubic/M10cCubic}    \\
    M10b\_Cubic         & 17 & 18 & \ref{M10aCubic/M10bCubic/M10cCubic}     \\
    M10c\_Cubic         & 24 & 25 &  \ref{M10aCubic/M10bCubic/M10cCubic}    \\
    M11\_Moebius        &  2 &  3 &   \ref{M11Moebius}    \\
    M12\_Norm           & 20 & 20 &   \ref{M12Norm}    \\
    M13a\_Scurve        &  2 &  3 &   \ref{M13aScurve}    \\
    Mn1\_Nonlinear      & 18 & 72 &   \ref{Mn1Nonlinear/Mn2Nonlinear}   \\
    Mn2\_Nonlinear      & 24 & 96 &   \ref{Mn1Nonlinear/Mn2Nonlinear}    \\
    Lollipop            & 1,2 & 2 &   \ref{lollipop} \\
    Uniform            & 30 & 100 &      \ref{uniform}\\
    \hline
    \end{tabular} \label{table:datasets}
\end{table}

\subsection{M1\_Sphere}\label{M1Sphere}
This manifold is a partial sphere surface with local intrinsic dimension $d$ at it's points, created using a transformation from a $d+1$ dimensional parameter space, embedded in $m$ dimensions. We sample from the parameter space using the standard normal distribution and at the end the data points come from the distribution of $\phi(X_1,\dots,X_{d+1})$ where specifically:
% --- Map \phi -----------------------------------------------------------
\begin{equation}
\begin{aligned}
\label{eqn:data1}
& X_i \;\overset{\text{iid.}}{\sim}\; \mathcal N(0,1),\; i = 1,\dots,d+1. \\
& \phi:\mathbb R^{d+1} \longrightarrow \mathbb R^{m},\\
& \phi(x_1,\dots,x_{d+1}) :=
   \bigl(\tfrac{x_1}{r},\dots,\tfrac{x_{d+1}}{r},\underbrace{0,\dots,0}_{m-d-1}\bigr),\\
& r = \sqrt{x_1^2+\dots+x_{d+1}^2}.
\end{aligned}
\end{equation}

\begin{figure}[H]
    \centering
\includegraphics[width=0.4\textwidth,height=0.4\textheight,keepaspectratio]{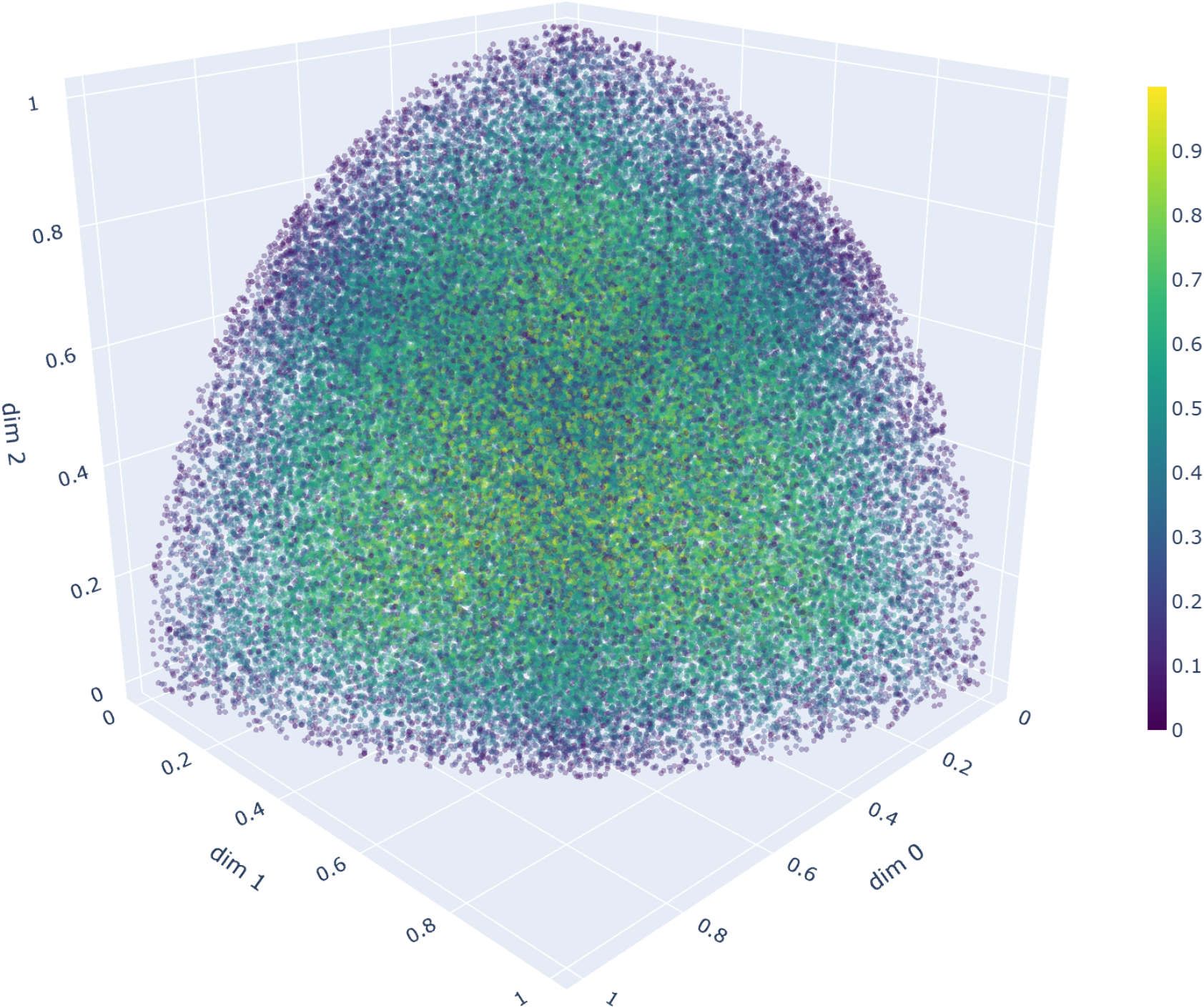}
\caption{Example visualization of the M1\_Sphere dataset, with $d=3, m=4$. Visualizing all $4$ dimensions. The color range represents the fourth-dimensional axes.}
\label{fig:data1}
\end{figure}

\subsection{M2\_Affine\_3to5}\label{M2Affine3to5}
This manifold is an affine hyperplane with local intrinsic dimension $3$ at it's points. We sample parameters randomly according to the uniform distribution, and transform them into $5$ dimensional space using the below function, so, the final data points follow the distribution of $\phi(X_1,X_2,X_3)$.
\begin{equation}
\begin{aligned}
\label{eqn:data_affine3_5}
& X_k \;\overset{\text{iid.}}{\sim}\; \operatorname{Unif}(0,4), 
\; k = 1,2,3. \\
& \phi:\mathbb R^{3} \longrightarrow \mathbb R^{5},\\
& \phi(x_1,x_2,x_3)=
  \begin{pmatrix}
    1.2\,x_1 - 0.5\,x_2 + 3           \\[2pt]
    0.5\,x_1 + 0.9\,x_2 - 1           \\[2pt]
   -0.5\,x_1 - 0.2\,x_2 + x_3         \\[2pt]
    0.4\,x_1 - 0.9\,x_2 - 0.1\,x_3    \\[2pt]
    1.1\,x_1 - 0.3\,x_2 + 8
  \end{pmatrix}.
\end{aligned}
\end{equation}

\begin{figure}[H]
    \centering
\includegraphics[width=0.4\textwidth,height=0.4\textheight,keepaspectratio]{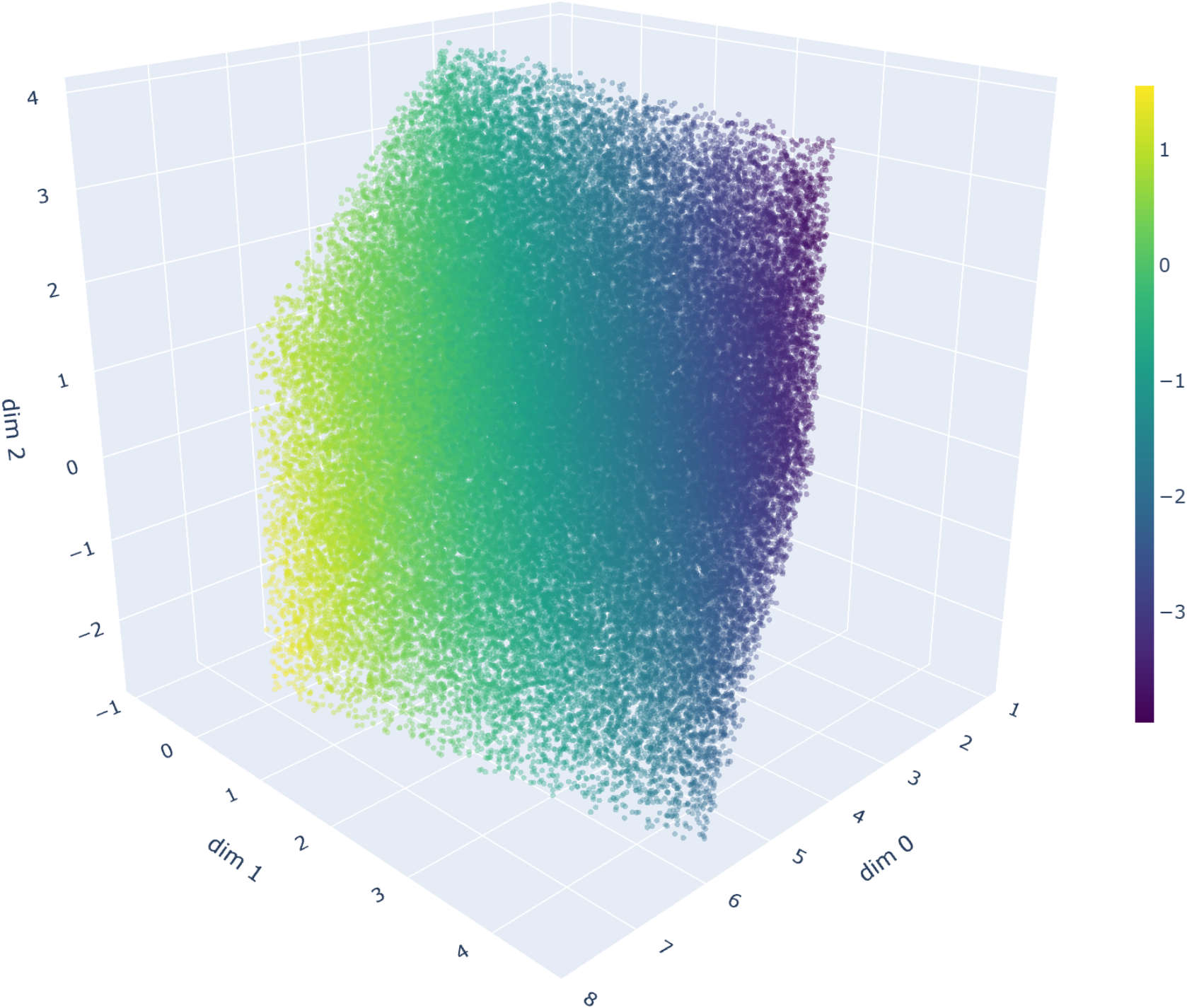}
\caption{Example visualization of the M2\_Affine\_3to5 dataset, with $d=3$. Showing 4 of the 5 dimensions. The color range represents the fourth-dimensional axes.}
\label{fig:data_affine3_5}
\end{figure}

\subsection{M3\_Nonlinear\_4to6}\label{M3Nonlinear4to6}
A nonlinear manifold with $4$ parameters sampled from the uniform distribution. To get a feel for the shape, we can look at it in a way that the first two coordinates define a distribution on the unit disk with larger probability density near the axes because of the $x_1^2$ and $x_2^2$ multipliers, and the other coordinates in a sense pull this distribution into the other dimensions according to various quadratic structures. As data points come from the distribution of $\phi(X_0,X_1,X_2,X_3)$, where:
\begin{equation}
\begin{aligned}
\label{eqn:data_nonlinear4_6}
& X_k \;\overset{\text{iid.}}{\sim}\; \operatorname{Unif}(0,1),\; k=0,\dots,3 . \\
& \phi:\mathbb R^{4} \longrightarrow \mathbb R^{6},\\
& \phi(x_0,x_1,x_2,x_3)=
\begin{pmatrix}
      x_1^{2}\cos(2\pi x_0) \\[2pt]
      x_2^{2}\sin(2\pi x_0) \\[2pt]
      x_1 + x_2 + (x_1 - x_3)^{2} \\[2pt]
      x_1 - 2x_2 + (x_0 - x_3)^{2} \\[2pt]
      -x_1 - 2x_2 + (x_2 - x_3)^{2} \\[2pt]
      x_0^{2} - x_1^{2} + x_2^{2} - x_3^{2} \\[2pt]
\end{pmatrix}.
\end{aligned}
\end{equation}

\begin{figure}[H]
    \centering
\includegraphics[width=0.4\textwidth,height=0.4\textheight,keepaspectratio]{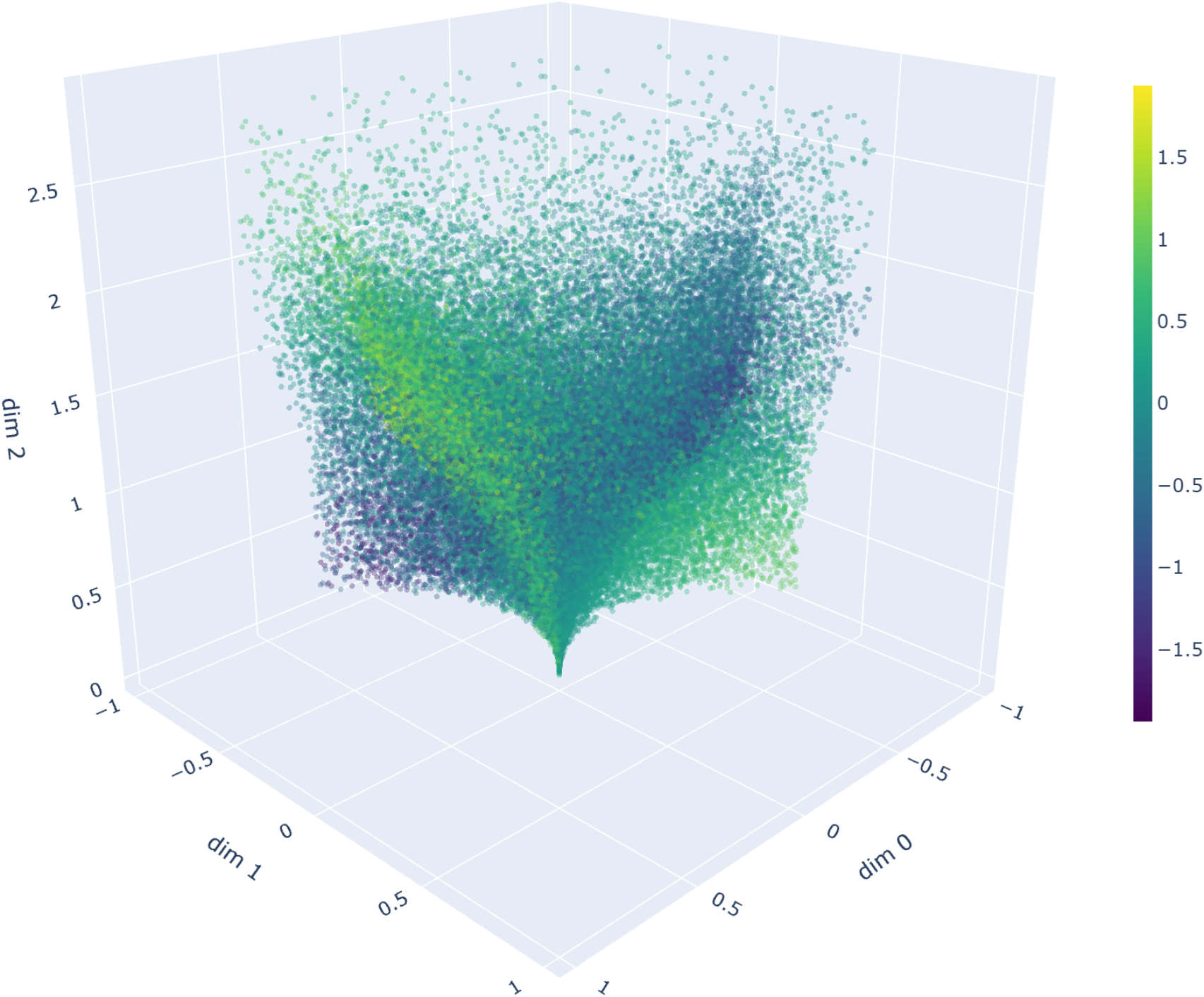}
\caption{Example visualization of the M3\_Nonlinear\_4to6 dataset, with $d=4$. Showing 4 of the 6 dimensions. The color range represents the fourth dimensional axes.}
\label{fig:data_nonlinear4_6}
\end{figure}

\subsection{M4\_Nonlinear/M6\_Nonlinear/M8\_Nonlinear}\label{M4Nonlinear/M6Nonlinear/M8Nonlinear} 
A highly curved nonlinear manifold. The basic shape is a hypersurface on the cartesian product of $d$ unit disks, obeying the constraints that when thinking of each disk in terms of polar coordinates, the radius of the point according to one disk defines the angle on the next disk, and so on through the $d$ unit disks, in a cyclic way getting back to the first disk at the end. This shape is then repeated $m$ times along a diagonal linear subspace, or in other words is multiplied with the $(1,\dots,1)$ $m$-long vector as a Kronecker product. As mathematically described, data points come from the distribution of $\phi(X_0,\dots, X_{d-1})$, where
\begin{equation}
\begin{aligned}
\label{eqn:data_nonlinear_general}
& X_i \;\overset{\text{iid.}}{\sim}\; \operatorname{Unif}(0,1), \; i=0,\dots,d-1. \\[4pt]
& \phi:\mathbb R^{d} \longrightarrow \mathbb R^{\dim}, \qquad
  m=\frac{\mathrm{dim}}{2d}. \\[4pt]
& \text{For }k=0,\dots,d-1,\;\ell=0,\dots,m-1: \\[2pt]
& \phi_{\,2k+2d\ell}(x_0,\dots,x_{d-1}) =
\begin{cases}
x_{k+1}\,\cos\!\bigl(2\pi x_k\bigr), & 0 \le k \le d-2,\\[4pt]
x_{0}\,\cos\!\bigl(2\pi x_{d-1}\bigr), & k = d-1,
\end{cases}
\\[10pt]
& \phi_{\,2k+1+2d\ell}(x_0,\dots,x_{d-1}) =
\begin{cases}
x_{k+1}\,\sin\!\bigl(2\pi x_k\bigr), & 0 \le k \le d-2,\\[4pt]
x_{0}\,\sin\!\bigl(2\pi x_{d-1}\bigr), & k = d-1.
\end{cases}
\end{aligned}
\end{equation}

\begin{figure}[H]
    \centering
\includegraphics[width=0.4\textwidth,height=0.4\textheight,keepaspectratio]{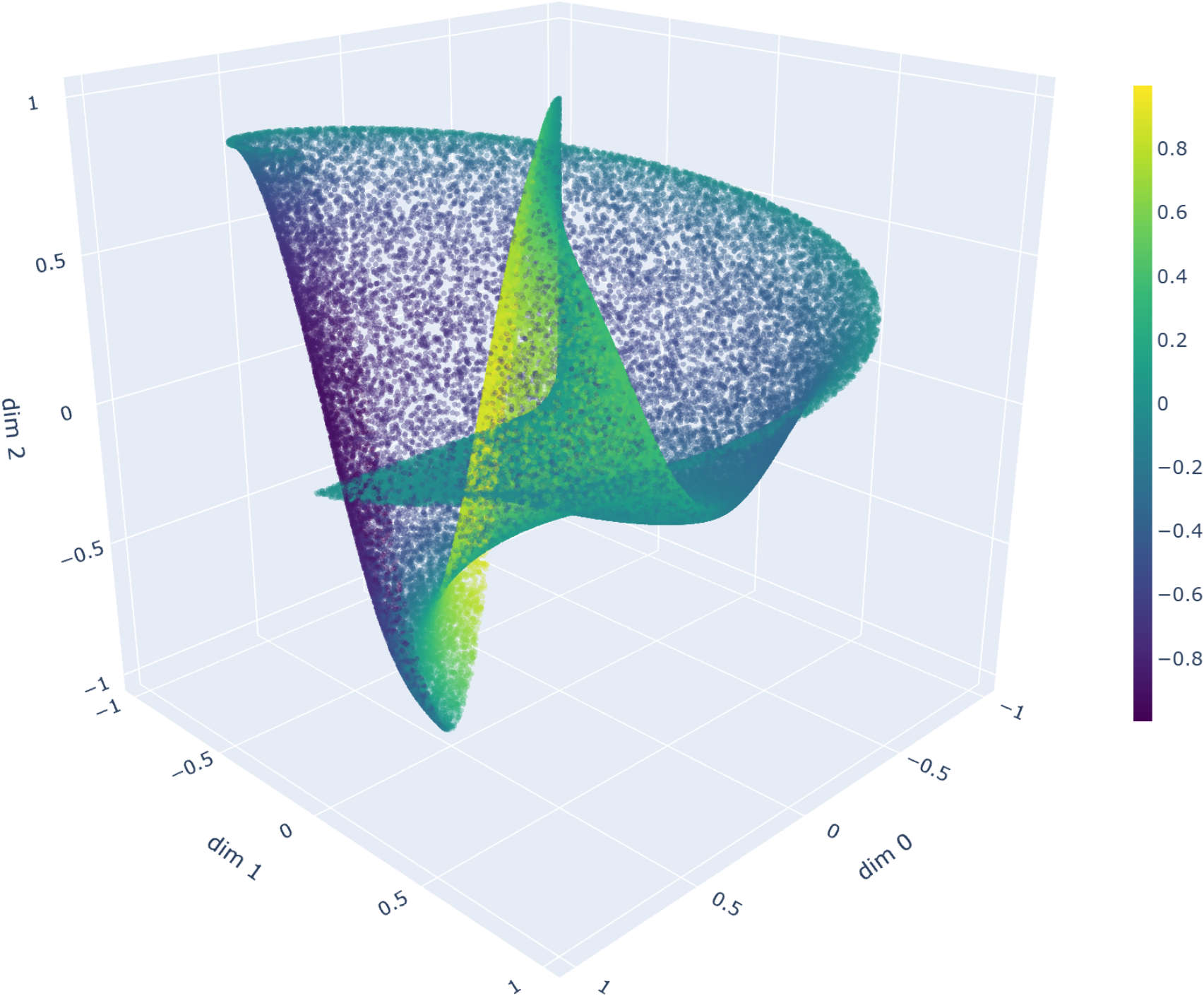}
\caption{Example visualization of the M4\_Nonlinear/M6\_Nonlinear/M8\_Nonlinear datasets, with $d=4, m=1$. Showing 4 of the 8 dimensions. The color range represents the fourth-dimensional axes.}
\label{fig:data_nonlinear_general}
\end{figure}

\subsection{M5b\_Helix2d}\label{M5bHelix2d} 
A helicoid surface embedded in $\dim$ dimensions via $\dim-3$ extra, all $0$ coordinates. Visually speaking, the interesting part is like a disk surface that has been cut into radially and is continually curved upwards into the $3$rd dimension. Mathematically, the data points follow the distribution of $\phi(R,P)$, where:
\begin{equation}
\begin{aligned}
\label{eqn:data_helix2}
& R \sim \operatorname{Unif}(0,10\pi), \;
  P \sim \operatorname{Unif}(0,10\pi), \; \\[4pt]
& \phi:\mathbb R^{2} \longrightarrow \mathbb R^{\dim}, \quad \text{dim}\ge 3, \\[4pt]
& \phi(r,p)=
   \bigl(
     r\cos p,\;
     r\sin p,\;
     \tfrac12 p,\;
     \underbrace{0,\dots,0}_{\dim-3}
   \bigr).
\end{aligned}
\end{equation}

\begin{figure}[H]
    \centering
\includegraphics[width=0.4\textwidth,height=0.4\textheight,keepaspectratio]{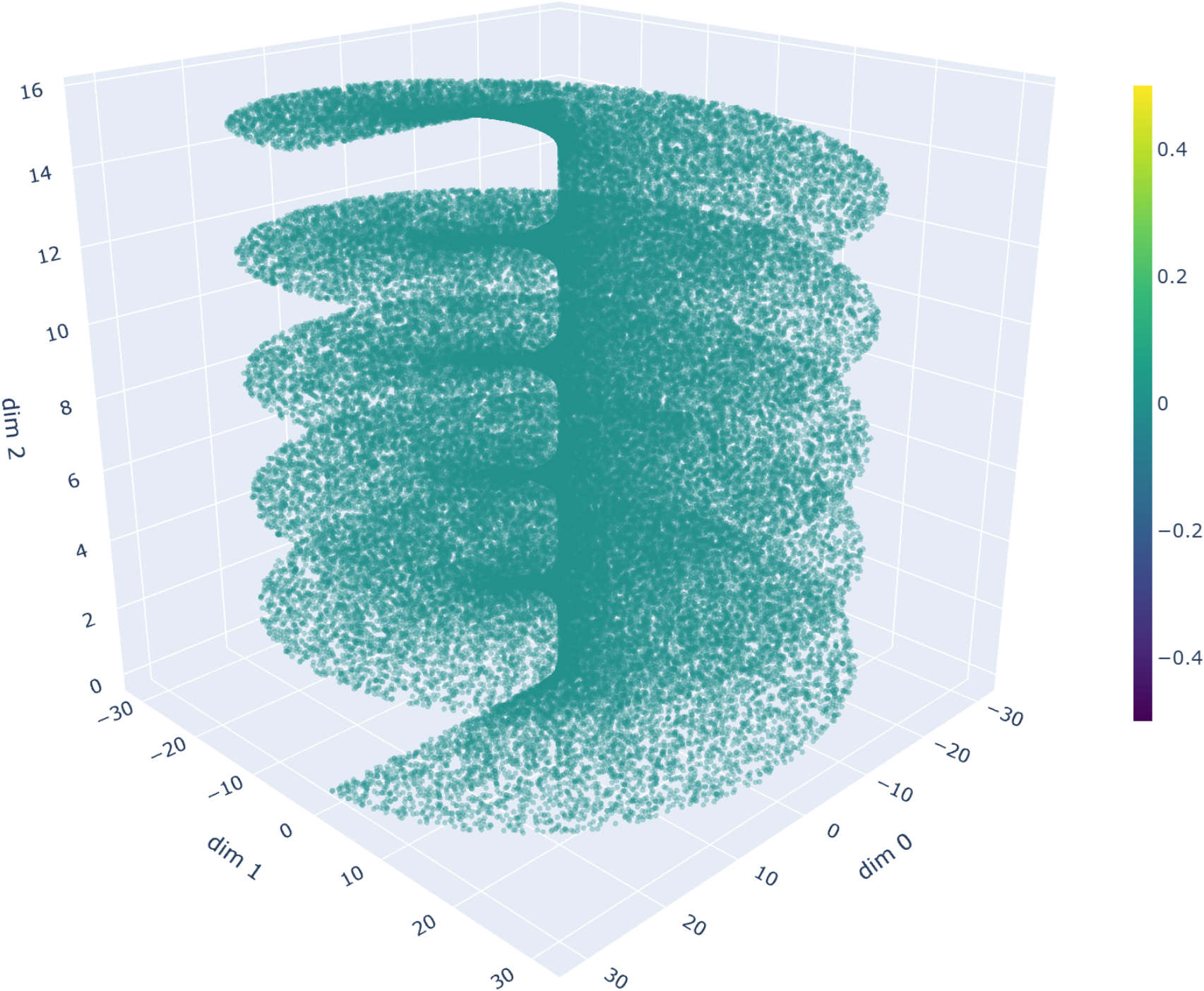}
\caption{Example visualization of the M5b\_Helix2d dataset, with $d=2$. Showing all $3$ dimensions.}
\label{fig:data_helix2}
\end{figure}

\subsection{M7\_Roll}\label{M7Roll}
A loose roll surface, as if a rectangular band has been rolled up, like tape or toilet paper, but in case of this manifold, with a fairly large space in between the surface. The surface is embedded in $\dim$ dimensions via $\dim-3$ extra, all $0$ coordinates. Concretely, the data points follow the distribution of $\phi(T, P)$, where:
\begin{equation}
\begin{aligned}
\label{eqn:data_roll}
& T \sim \operatorname{Unif}\!\bigl(1.5\pi,\,4.5\pi\bigr),\;
  P \sim \operatorname{Unif}(0,21), \; \\[6pt]
& \phi:\mathbb R^{2} \longrightarrow \mathbb R^{\dim}, \quad \text{dim}\ge 3, \\[6pt]
& \phi(t,p)=
   \bigl(
     t\cos t,\;
     p,\;
     t\sin t,\;
     \underbrace{0,\dots,0}_{\dim-3}
   \bigr).
\end{aligned}
\end{equation}

\begin{figure}[H]
    \centering
\includegraphics[width=0.4\textwidth,height=0.4\textheight,keepaspectratio]{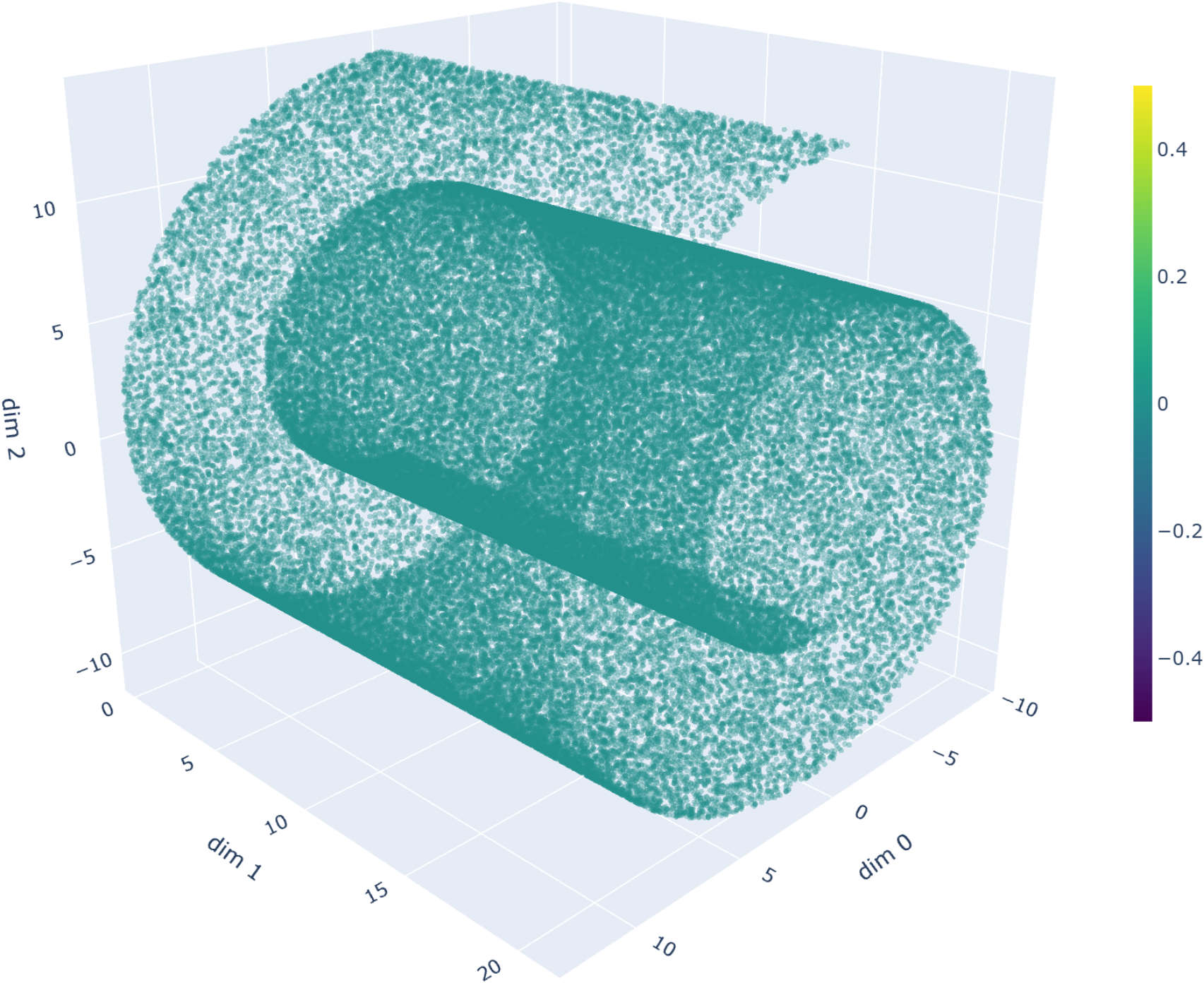}
\caption{Example visualization of the M7\_Roll dataset, with $d=2$. Showing all $3$ dimensions.}
\label{fig:M7Roll}
\end{figure}

\subsection{M9\_Affine}\label{M9Affine}
A simple $d$ dimensional hypercube embedded in $\dim$ dimensions via $\dim-d$ extra, all $0$ coordinates. The data points follow the distribution of $\phi(X_1,\dots, X_d)$, where
\begin{equation}
\begin{aligned}
\label{eqn:data_affine_proj}
& X_k \;\overset{\text{iid.}}{\sim}\; \operatorname{Unif}(-2.5,\,2.5),\; k=1,\dots,d. \\[6pt]
& \phi:\mathbb R^{d} \longrightarrow \mathbb R^{\dim}, \quad \text{dim}\ge d, \\[6pt]
& \phi(x_1,\dots,x_d)=
   \bigl(
     x_1,\dots,x_d,\;
     \underbrace{0,\dots,0}_{\dim-d}
   \bigr).
\end{aligned}
\end{equation}

\begin{figure}[H]
    \centering
\includegraphics[width=0.4\textwidth,height=0.4\textheight,keepaspectratio]{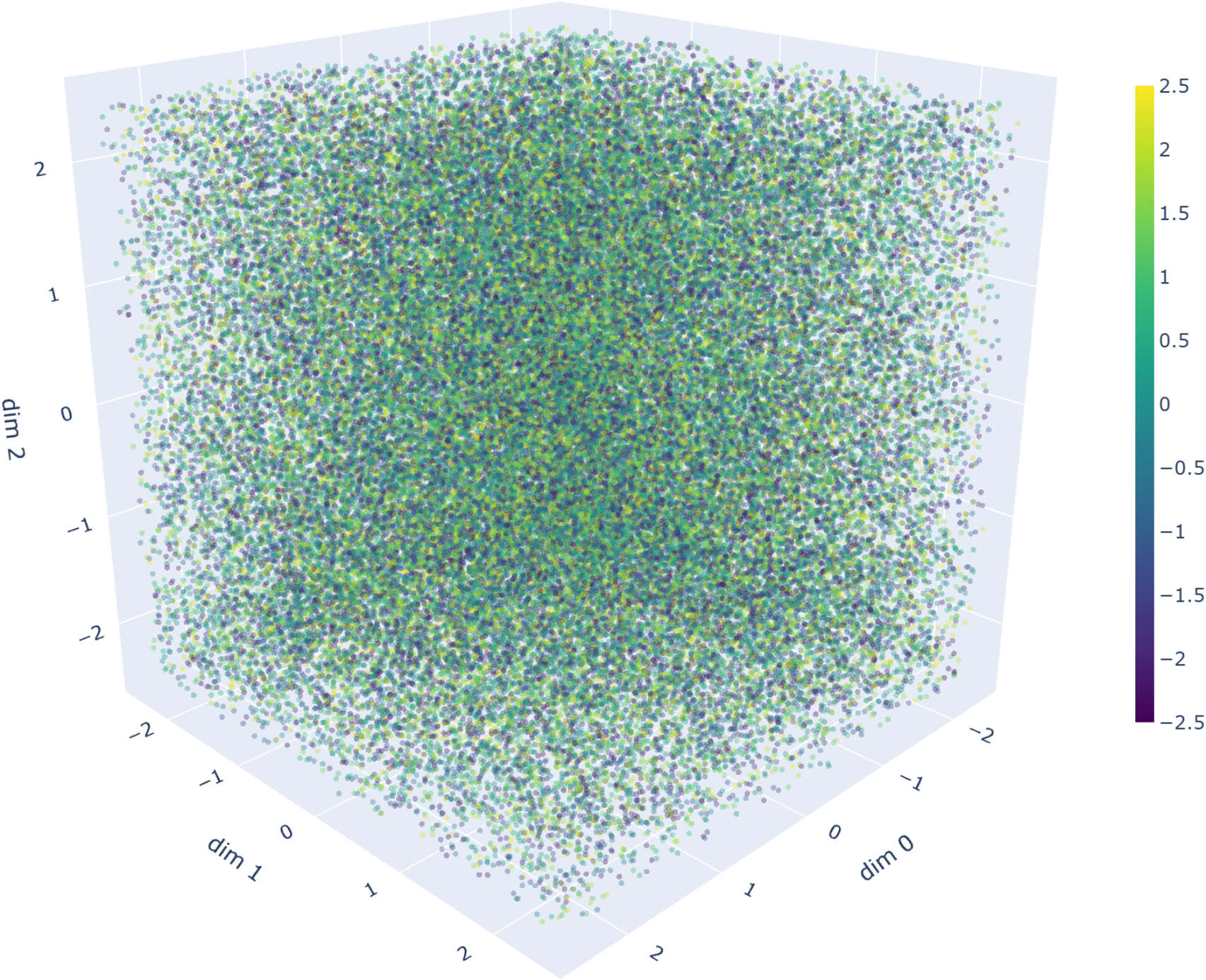}
\caption{Example visualization of the M9\_Affine dataset, with $d=4$. Showing all $4$ dimensions. The color range represents the fourth-dimensional axes.}
\label{fig:M9Affine}
\end{figure}

\subsection{M10a\_Cubic/M10b\_Cubic/M10c\_Cubic}\label{M10aCubic/M10bCubic/M10cCubic} 

The hypersurface of a $d+1$ dimensional hypercube, meaning that this manifold has intrinsic dimension $d$. The manifold is embedded in $\dim$ dimensions via $\dim-d-1$ extra, all $0$ coordinates. Mathematically we can describe the points as coming from the distribution of $\psi_{I,S}(U_1,\dots,U_d)$, where $(I,S)$ is a uniform categorical random vector over the $\{0,\dots,d\}\times\{0,1\}$ support, and 
\begin{equation}
\begin{aligned}
\label{eqn:data_cubic_face}
& U_j \;\overset{\text{iid.}}{\sim}\; \operatorname{Unif}(0,1),\; j=1,\dots,d. \\[4pt]
& \text{Hypercube facet indices } (i,s)\in\{0,\dots,d\}\times\{0,1\}. \\[4pt]
& \psi_{i,s}:\mathbb R^{d}\longrightarrow\mathbb R^{\dim}, \\[2pt]
& \psi_{i,s}(u_1,\dots,u_d)=
   \bigl(
     u_1,\dots,u_i,\;
     s,\;
     u_{i+1},\dots,u_d,\;
     \underbrace{0,\dots,0}_{\dim-d-1}
   \bigr). \\[6pt]
\end{aligned}
\end{equation}

\begin{figure}[H]
    \centering
\includegraphics[width=0.4\textwidth,height=0.4\textheight,keepaspectratio]{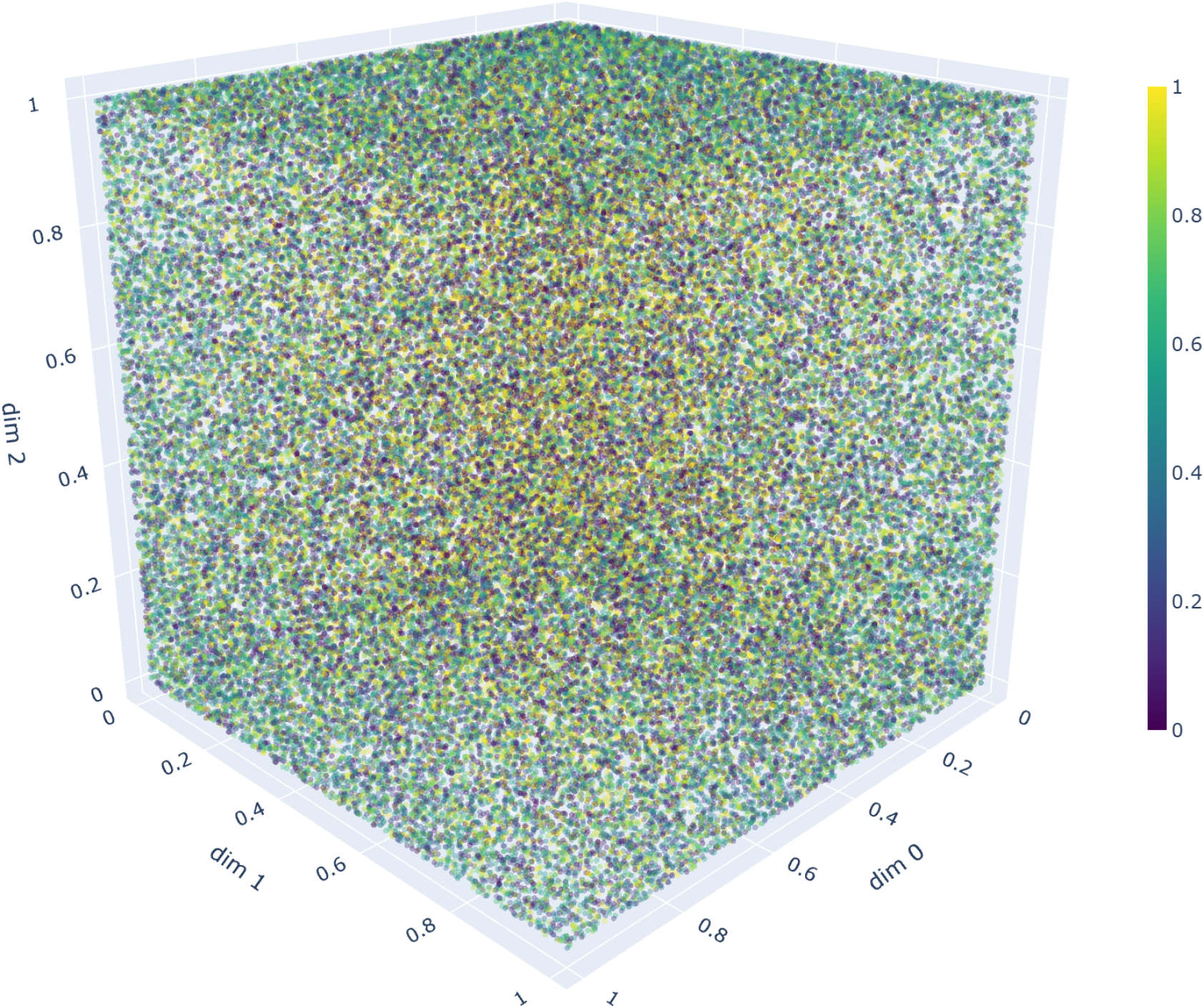}
\caption{Example visualization of the M10\_Cubic dataset, with $d=4$. Showing all $4$ dimensions. The color range represents the fourth-dimensional axes.}
\label{fig:M10Cubic}
\end{figure}

\subsection{M11\_Moebius}\label{M11Moebius} 
This manifold is a Moebius strip surface. A $2$ dimensional surface, densely twisted in $3$ dimensional space along a circular pattern around a central axis. Specifically, the data points follow the distribution of $\phi(\Phi, R)$ where:
\begin{equation}
\begin{aligned}
\label{eqn:data_moebius}
& \Phi \sim \operatorname{Unif}(0,2\pi),\;
  R \sim \operatorname{Unif}(-1,1),\; \\[6pt]
& \phi:\mathbb R^{2} \longrightarrow \mathbb R^{3}, \\[4pt]
& \phi(\varphi,r)=
   \bigl(
      \bigl[1+\tfrac12 r\cos(5\varphi)\bigr]\cos(\varphi),\;
      \bigl[1+\tfrac12 r\cos(5\varphi)\bigr]\sin(\varphi),\;
      \tfrac12 r\sin(5\varphi)
   \bigr).
\end{aligned}
\end{equation}

\begin{figure}[H]
    \centering
\includegraphics[width=0.4\textwidth,height=0.4\textheight,keepaspectratio]{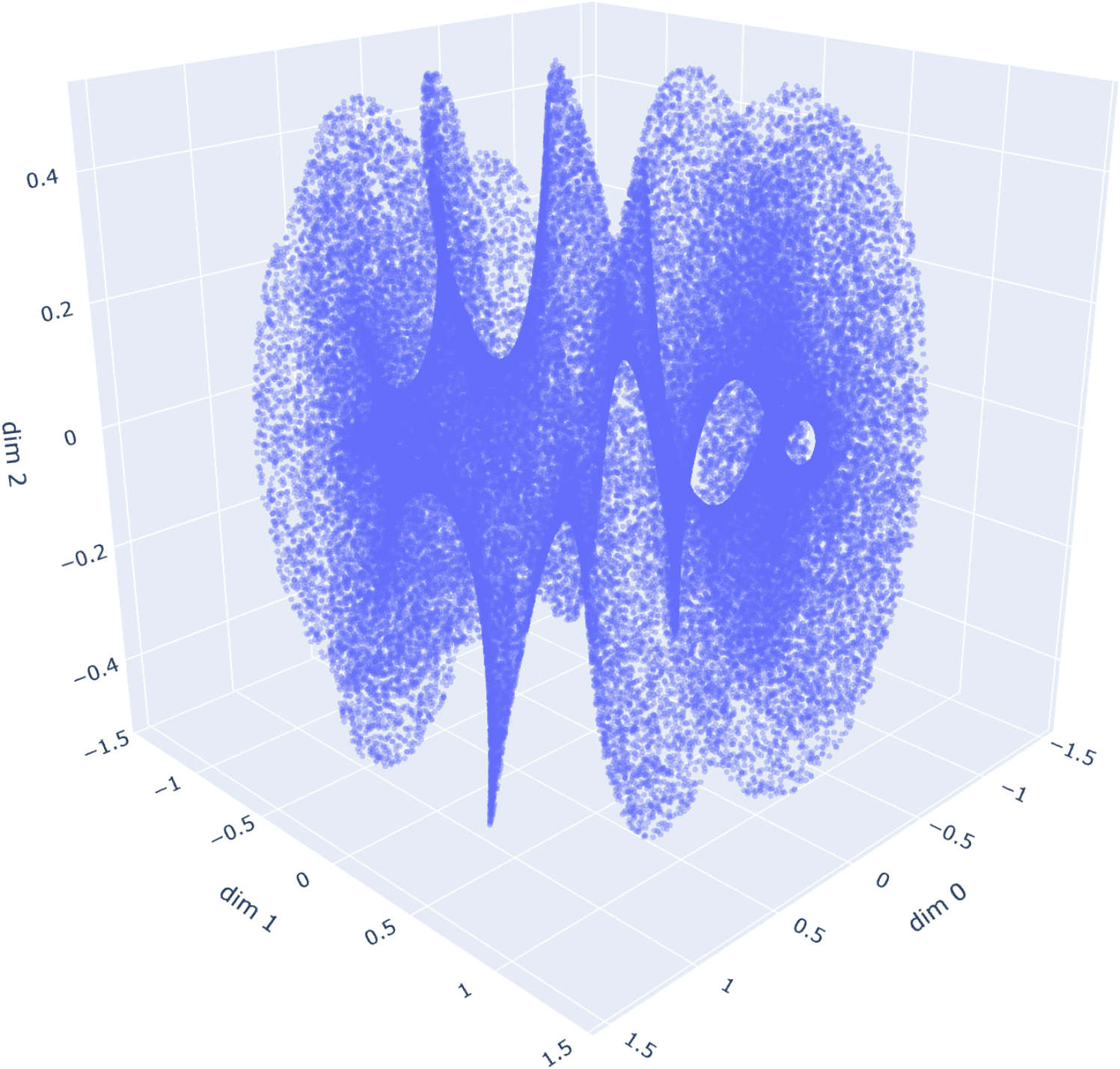}
\caption{Example visualization of the M11\_Moebius dataset, with $d=2, \text{dim}=3$. Showing all $3$ dimensions.}
\label{fig:data_moebius}
\end{figure}

\subsection{M12\_Norm} \label{M12Norm}
A $d$ dimensional hypercube embedded in $\dim$ dimensions via $\dim-d$ extra all $0$ coordinates. However, the parameters are sampled from the standard normal distribution, therefore the data points follow the distribution of $\phi(X_1,\dots,X_d)$ where:
\begin{equation}
\begin{aligned}
\label{eqn:data_norm}
& X_i \;\overset{\text{iid.}}{\sim}\; \mathcal N(0,1),\; i=1,\dots,d. \\[6pt]
& \phi:\mathbb R^{d} \longrightarrow \mathbb R^{\dim}, \quad \text{dim}\ge d, \\[4pt]
& \phi(x_1,\dots,x_d)=
   \bigl(
     x_1,\dots,x_d,\;
     \underbrace{0,\dots,0}_{\,dim-d}
   \bigr).
\end{aligned}
\end{equation}

\begin{figure}[H]
    \centering
\includegraphics[width=0.4\textwidth,height=0.4\textheight,keepaspectratio]{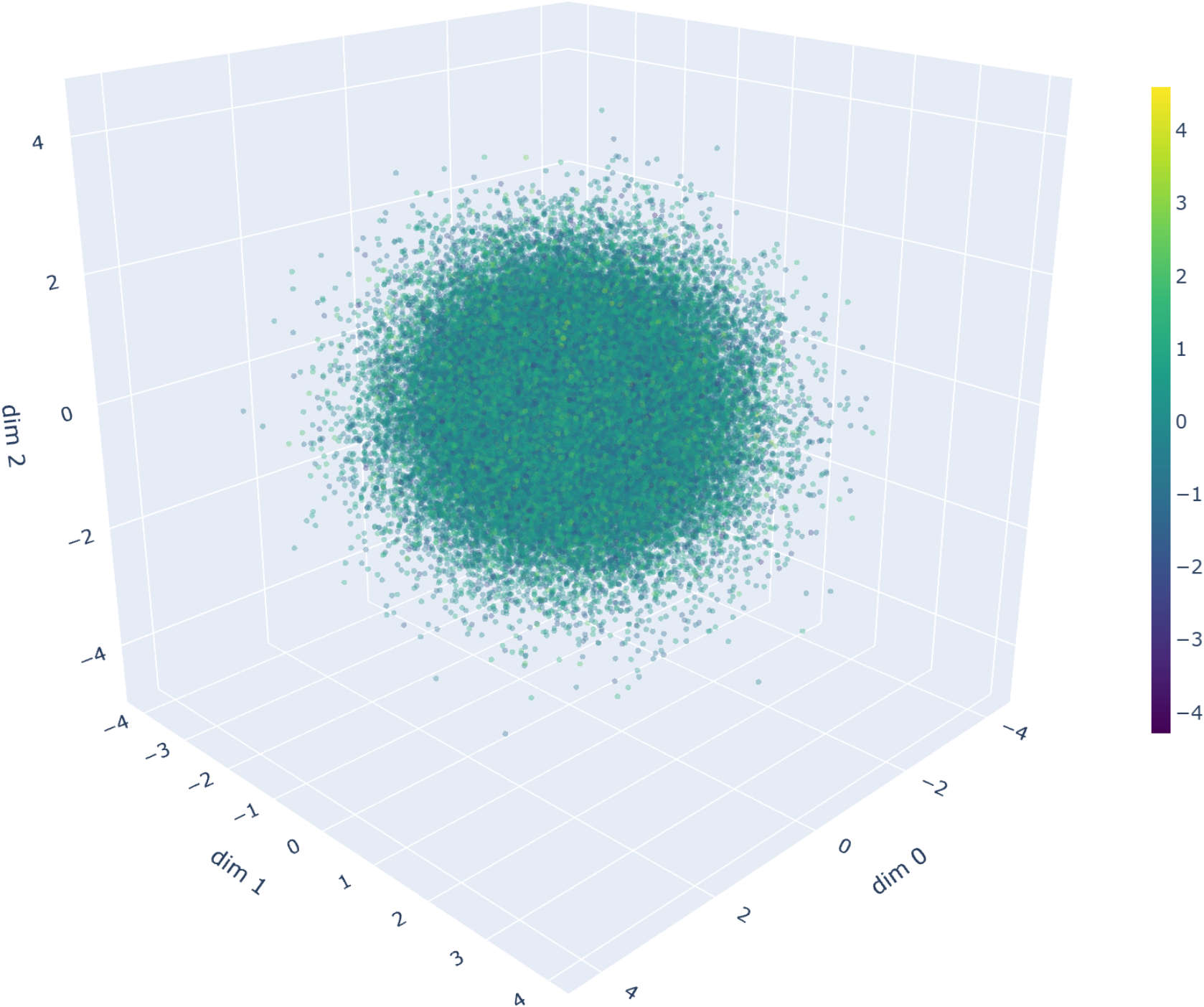}
\caption{Example visualization of the M12\_Norm dataset, with $d=4$. Showing all $4$ dimensions. The color range represents the fourth-dimensional axes.}
\label{fig:M12Norm}
\end{figure}

\subsection{M13a\_Scurve} \label{M13aScurve}
A rectangular band surface with intrinsic dimensionality $2$ curved in an "S" shape in $3$ dimensions, embedded in $\dim$ dimensions via $\dim-3$ extra all $0$ coordinates. The data points follow the distribution of $\phi(T,P)$, where
\begin{equation}
\begin{aligned}
\label{eqn:data_scurve}
& T \sim \operatorname{Unif}(-1.5\pi,\,1.5\pi),\;
  P \sim \operatorname{Unif}(0,2),\; \\[6pt]
& \phi:\mathbb R^{2} \longrightarrow \mathbb R^{\dim}, \quad \text{dim}\ge 3, \\[4pt]
& \phi(t,p)=
   \bigl(
      \sin t,\;
      p,\;
      \operatorname{sign}(t)\!\bigl(\cos t - 1\bigr),\;
      \underbrace{0,\dots,0}_{\dim-3}
   \bigr).
\end{aligned}
\end{equation}

\begin{figure}[H]
    \centering
\includegraphics[width=0.4\textwidth,height=0.4\textheight,keepaspectratio]{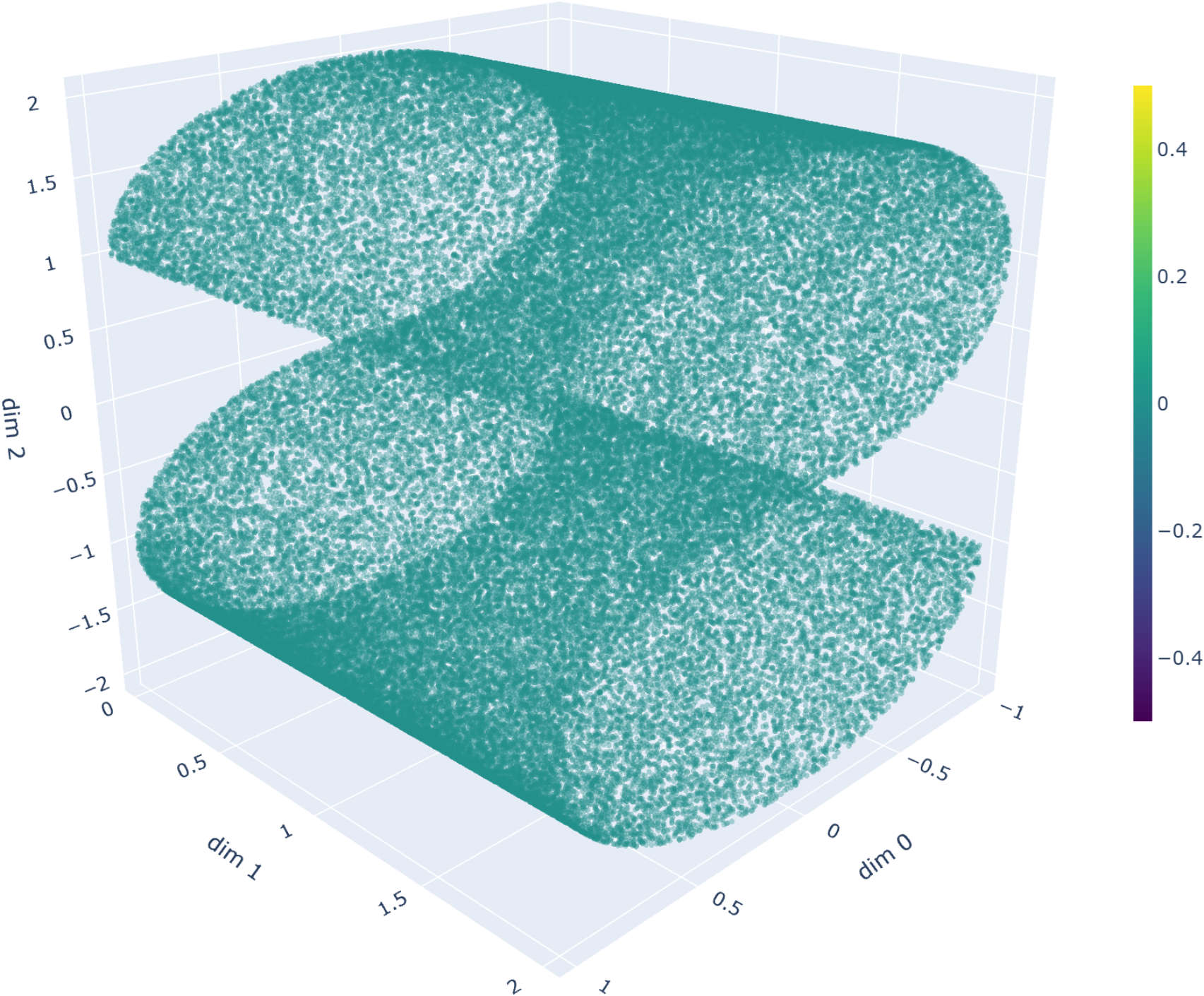}
\caption{Example visualization of the M13a\_Scurve dataset, with $d=2, \text{dim}=3$. Showing all $3$ dimensions.}
\label{fig:M13aScurve}
\end{figure}

\subsection{Mn1\_Nonlinear/Mn2\_Nonlinear} \label{Mn1Nonlinear/Mn2Nonlinear}
A hypersurface with gradual, multidirectional curvature, repeated twice along a diagonal linear subspace, or in other words is multiplied by $(1,1)$ as a Kronecker product. The data points are drawn from the distribution of $\phi(X_0,\dots,X_{d-1})$, described by
\begin{equation}
\begin{aligned}
\label{eqn:data_campadelli}
& X_i \;\overset{\text{iid.}}{\sim}\; \operatorname{Unif}(0,1),\; i=0,\dots,d-1. \\[6pt]
& \phi:\mathbb R^{d} \longrightarrow \mathbb R^{\,4d}. \\[6pt]
& \text{For } i=0,\dots,d-1 \text{ (with } j=d-1-i\text{)}: \\[2pt]
& \qquad
  \phi_{\,i+1}(x)        = \tan\!\bigl(x_i \cos x_j\bigr), \\[2pt]
& \qquad
  \phi_{\,d+i+1}(x)      = \arctan\!\bigl(x_j \sin x_i\bigr), \\[2pt]
& \qquad
  \phi_{\,2d+i+1}(x)     = \phi_{\,i+1}(x), \\[2pt]
& \qquad
  \phi_{\,3d+i+1}(x)     = \phi_{\,d+i+1}(x).
\end{aligned}
\end{equation}

\begin{figure}[H]
    \centering
\includegraphics[width=0.4\textwidth,height=0.4\textheight,keepaspectratio]{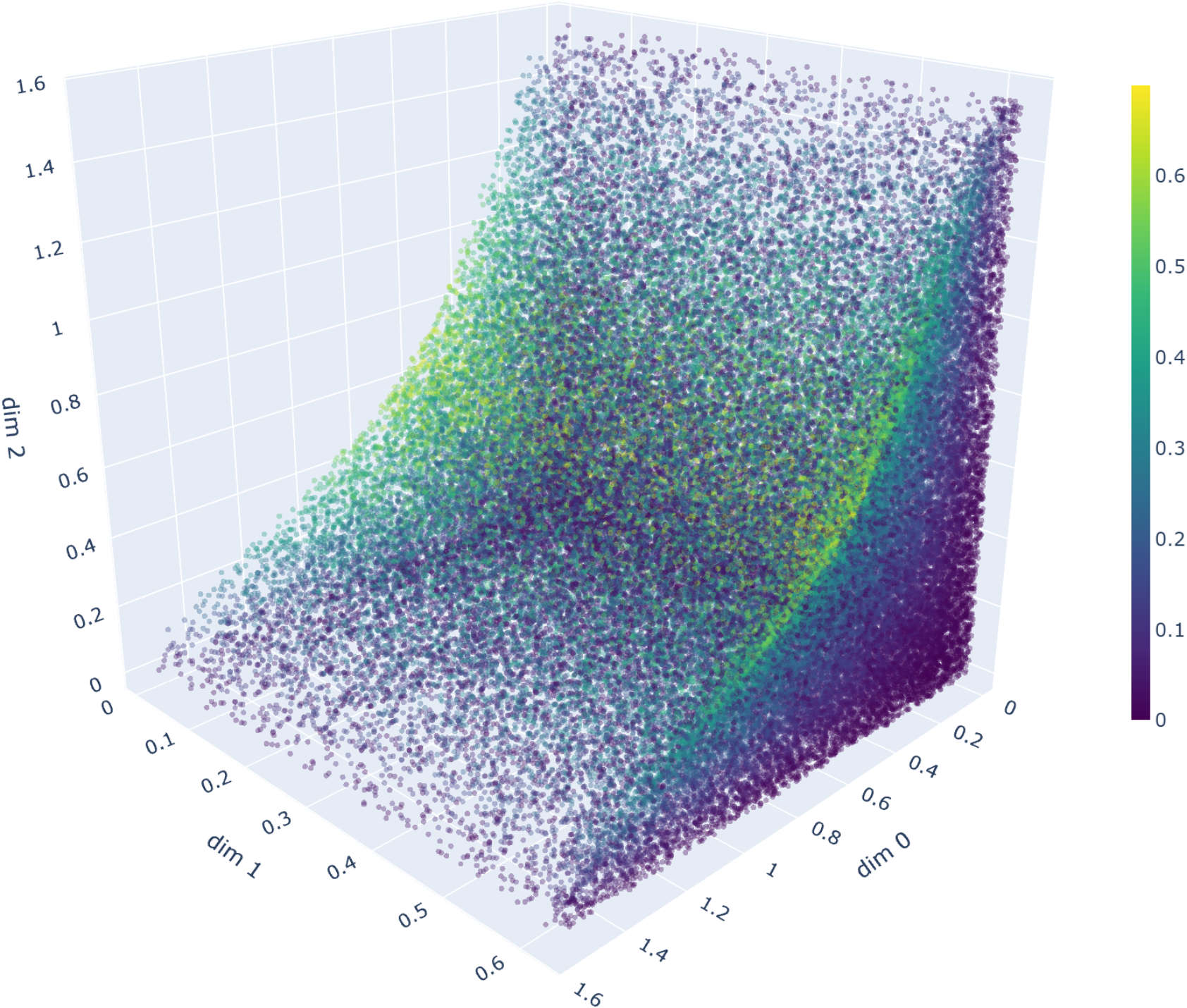}
\caption{Example visualization of the Mn1\_Nonlinear/Mn2\_Nonlinear dataset, with $d=3,\; \text{dim}=12$. Showing $4$ out of the $12$ dimensions.}
\label{fig:Mn1Nonlinear/Mn2Nonlinear}
\end{figure}

\subsection{Lollipop}\label{lollipop}

This manifold is called the "lollipop" dataset because it looks like a lollipop. It consists of two parts, a line section, or stick, which has intrinsic dimensionality $1$, and at one end of it, without the two densities intersecting, there is the other part, a disk, or candy, which has intrinsic dimensionality $2$. Specifically, sample points are drawn from the distribution of $\phi_Z(R, \Phi, T)$, where
$Z\sim\operatorname{Cat}\{C:0.95,\;S:0.05\}$ categorical random variable indicates the component, $C$ the candy part with $95\%$ probability and $S$ the stick part with $5\%$ probability, and we have the following component wise distributions
\begin{equation}
\begin{aligned}
\label{eqn:data_lollipop}
& R \sim \operatorname{Unif}(0,1),\qquad
\Phi \sim \operatorname{Unif}(0,2\pi),\qquad
T \sim \operatorname{Unif}\!\Bigl(0,\ 2-\tfrac{1}{\sqrt{2}}\Bigr). \\[6pt]
& \phi_C:\mathbb R^2\!\to\mathbb R^2,\ \ \phi_S:\mathbb R\!\to\mathbb R^2.\\[2pt]
&\phi_C(R, \Phi)\;=\;\bigl(2+\sqrt{R}\,\sin\Phi,\ \ 2+\sqrt{R}\,\cos\Phi\bigr),\\[2pt]
& \phi_S(T)\;=\;(T,\ T). \\[8pt]
\end{aligned}
\end{equation}

\begin{figure}[H]
    \centering
\includegraphics[width=0.4\textwidth,height=0.4\textheight,keepaspectratio]{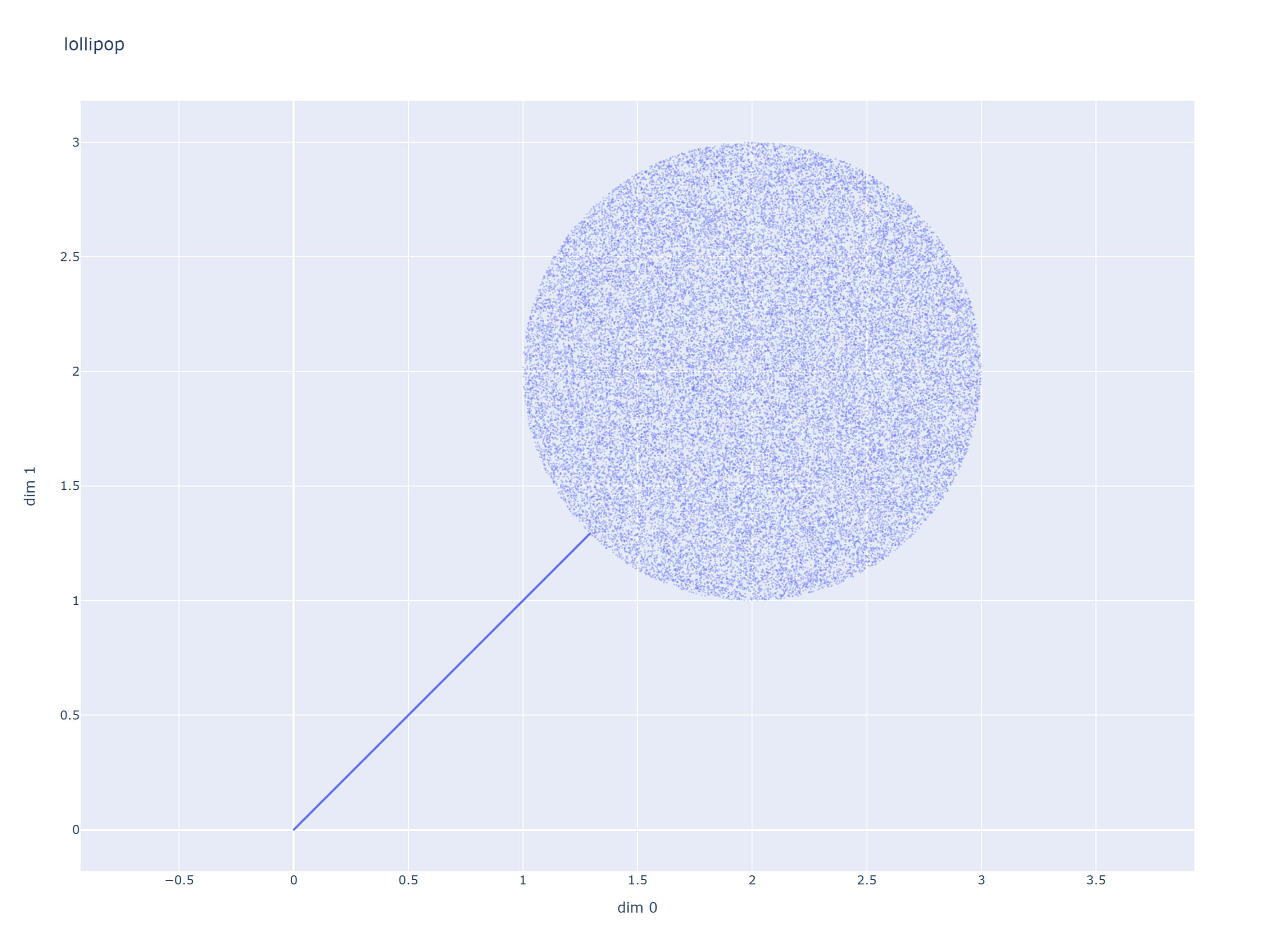}
\caption{Example visualization of the lollipop dataset, with $d=2$. Showing all $2$ dimensions.}
\label{fig:lollipop}
\end{figure}

\subsection{Uniform}\label{uniform} 
Lastly we have this manifold, that we additionally included for computationally efficient testing in high ambient and local intrinsic dimensionalities. The first $d$ coordinates are i.i.d. uniform, comprising a $d$ dimensional hypercube, which is embedded in $\dim$ dimensions via $\dim\!-d\!-\!1$ exact copies of the last variable, extending the hypercube into $\dim$ dimensions along a simple linear subspace. So, sample points are drawn i.i.d. from the distribution of $\phi(V_1,\dots,V_{\mathrm{d}-1},T)$, where
\begin{equation}
\begin{aligned}
\label{eqn:data_simple_d}
& V_i \;\overset{\text{iid.}}{\sim}\; \operatorname{Unif}(0,1),\quad i=1,\dots,\mathrm{d}-1,\qquad
T \sim \operatorname{Unif}(0,1). \\[4pt]
& \phi:\mathbb R^{\mathrm{d}}\longrightarrow\mathbb R^{\mathrm{dim}}, \\[2pt]
& \phi(v_1,\dots,v_{\mathrm{d}-1},t)
   \;=\;
   \bigl(v_1,\dots,v_{\mathrm{d}-1},
         \underbrace{t,\dots,t}_{\mathrm{dim}-\mathrm{d}+1}\bigr).
\end{aligned}
\end{equation}

\begin{figure}[H]
    \centering
\includegraphics[width=0.4\textwidth,height=0.4\textheight,keepaspectratio]{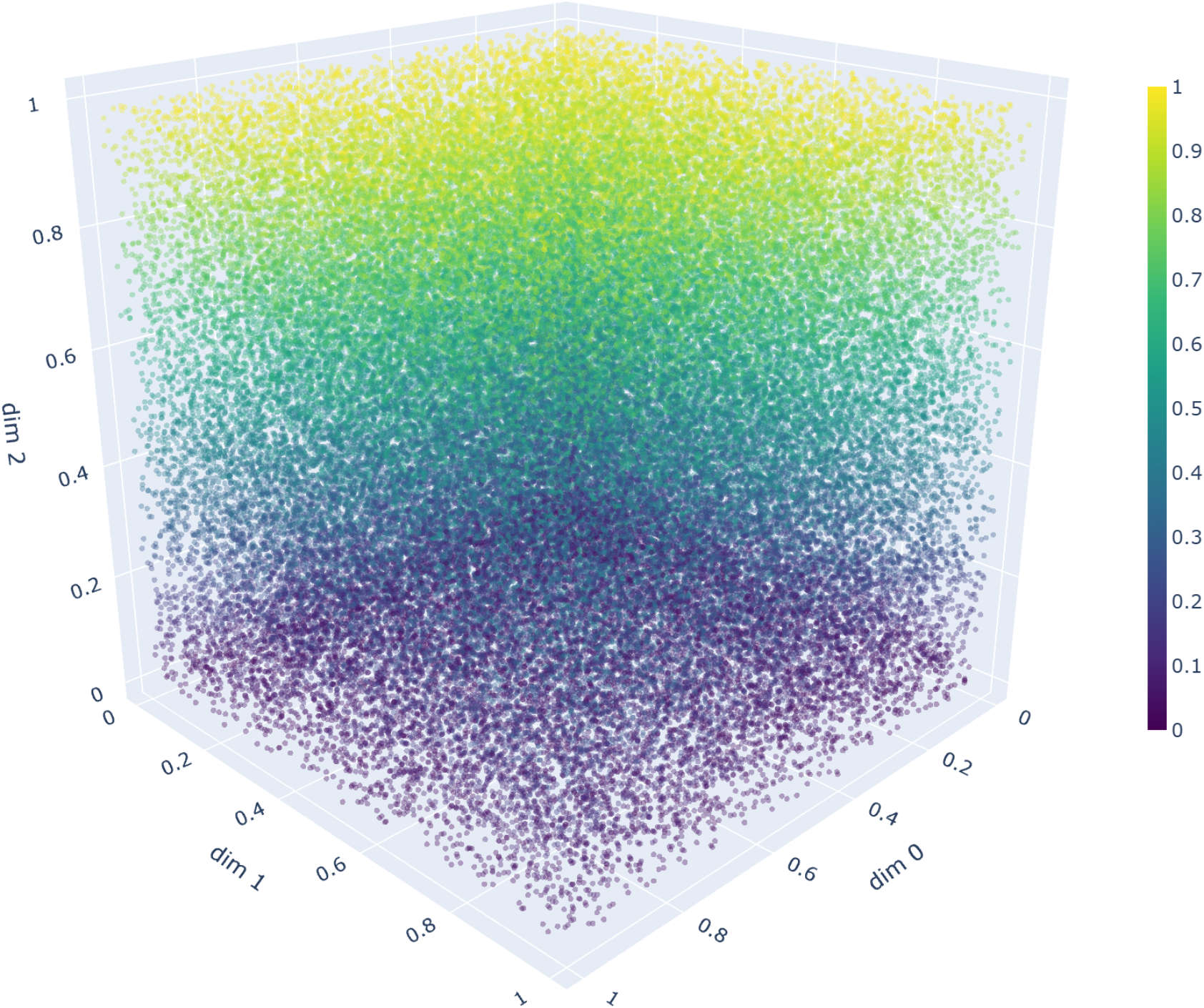}
\caption{Example visualization of the uniform dataset, with $d=3$, $\dim=4$. Showing all $4$ dimensions.}
\label{fig:uniform}
\end{figure}
\newpage
\section{Supplementary results}\label{Asec:Supplementary results}
\begin{enumerate}
    \item Supplementary tables for the bagging and smoothing experiments \ref{Asec:Supplementary tables for the bagging and smoothing experiments}:
        \begin{itemize}
            \item Table of optimal raw results (MSE) for MLE \ref{fig:MLE.MSE.smooth}
            \item Table of optimal raw results (MSE) for TLE \ref{fig:TLE.MSE.smooth}
            \item Table of optimal raw results (MSE) for MADA \ref{fig:MADA.MSE.smooth}
        \end{itemize}
    \item Supplementary bar-chart figures for the effect of $r$ and $B$ experiments \ref{Asec:supplementarybarchartfigure}:
        \begin{itemize}
            \item Bar chart with $B$ varying for TLE \ref{fig:TLE.barchart.Nbag}
            \item Bar chart with $B$ varying for MADA \ref{fig:MADA.barchart.Nbag}
            \item Bar chart with $r$ varying for TLE \ref{fig:TLE.barchart.sr}
            \item Bar chart with $r$ varying for MADA \ref{fig:MADA.barchart.sr}
        \end{itemize}
    \item Supplementary heatmaps for $B$ and $r$ interaction through comparing bagging and baseline \ref{Asec:supplementarysrB}:
        \begin{itemize}
            \item Interaction heatmap between $B$ and $r$ for TLE \ref{fig:TLE.heatmap.sr.Nbag.mse.difference}
            \item Interaction heatmap between $B$ and $r$ for MADA \ref{fig:MADA.heatmap.sr.Nbag.mse.difference}
        \end{itemize}
    \item Supplementary heatmaps for $k$ and $r$ interaction through comparing bagging and baseline \ref{Asec:supplementarysrk}:
        \begin{itemize}
            \item Interaction heatmap between $k$ and $r$ for TLE \ref{fig:TLE.heatmap.sr.k.mse.difference}
            \item Interaction heatmap between $k$ and $r$ for MADA \ref{fig:MADA.heatmap.sr.k.mse.difference}
        \end{itemize}
\end{enumerate}

\subsection{Supplementary tables for the bagging and smoothing experiments}\label{Asec:Supplementary tables for the bagging and smoothing experiments}
\begin{table}[H]
    \centering
    \caption{Table showing the optimal results for the MLE estimator across the different bagging variants with smoothing. The shown $k$ and $r$ values are the ones that achieved the lowest MSE across experiments for that specific dataset and estimator variant. While the MSE is the lowest one, the Variance and Bias-squared represent the decomposition of this MSE, and therefore, not necessarily the lowest on their own. The table is colored using a per-dataset (one for each row) logarithmic color scale based on the MSE to indicate large to small values as mild red to mild blue colors across the table cells (estimator variants). The best results in a row are additionally written with a bold font. This raw data is the basis of the MLE part of the radar charts analyzed in Section \ref{sec:results} of the main paper.}\vspace*{4mm}
\includegraphics[width=\textwidth,height=\textheight,keepaspectratio]{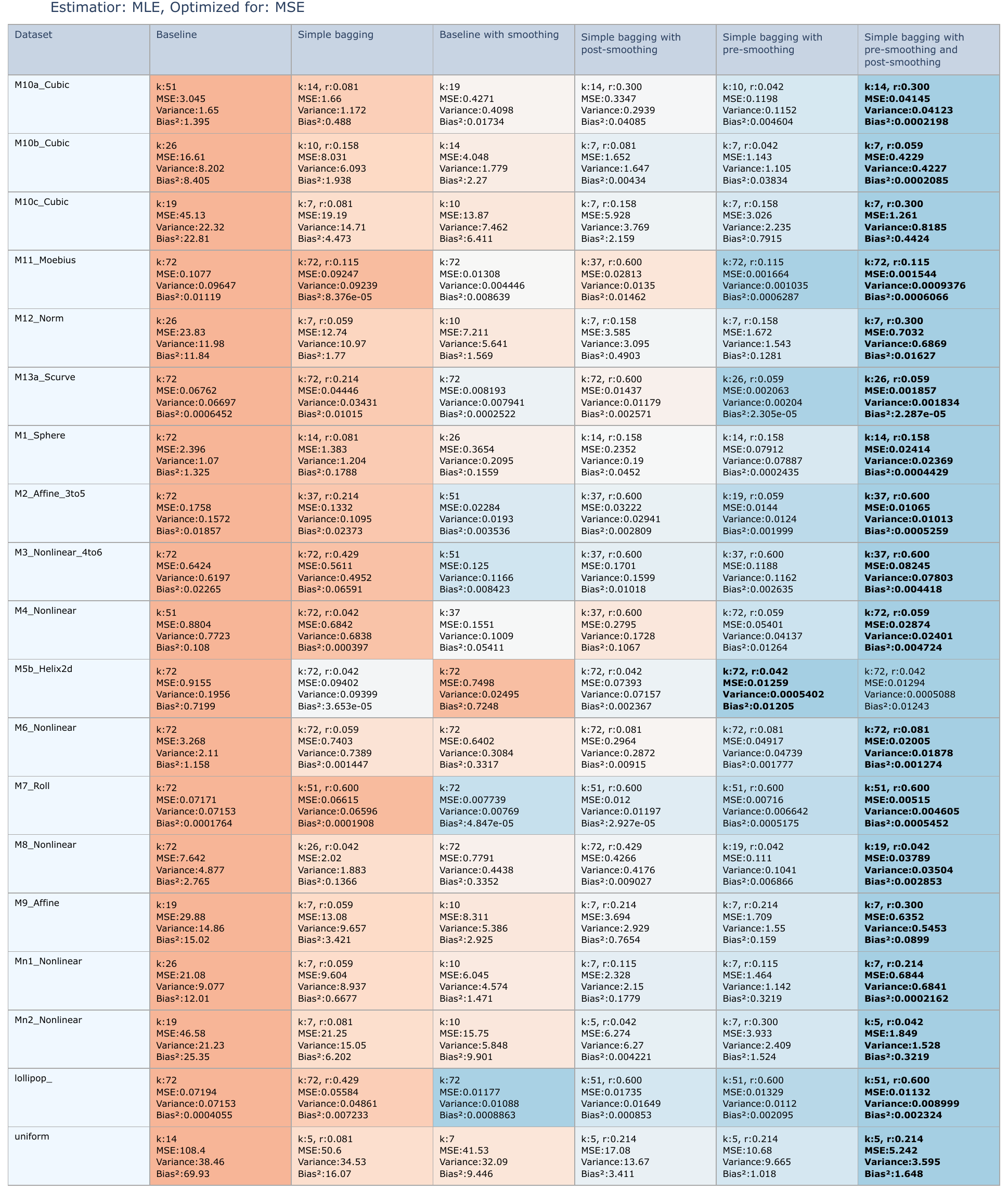}
\label{fig:MLE.MSE.smooth}
\end{table}

\begin{table}[H]
    \centering
    \caption{Table showing the optimal results for the TLE estimator across the different bagging variants with smoothing. The shown $k$ and $r$ values are the ones that achieved the lowest MSE across experiments for that specific dataset and estimator variant. While the MSE is the lowest one, the Variance and Bias-squared represent the decomposition of this MSE, and therefore, not necessarily the lowest on their own. The table is colored using a per-dataset (one for each row) logarithmic color scale based on the MSE to indicate large to small values as mild red to mild blue colors across the table cells (estimator variants). The best results in a row are additionally written with a bold font. This raw data is the basis of the TLE part of the radar charts analyzed in Section \ref{sec:results} of the main paper.}\vspace*{4mm}
\includegraphics[width=\textwidth,height=\textheight,keepaspectratio]{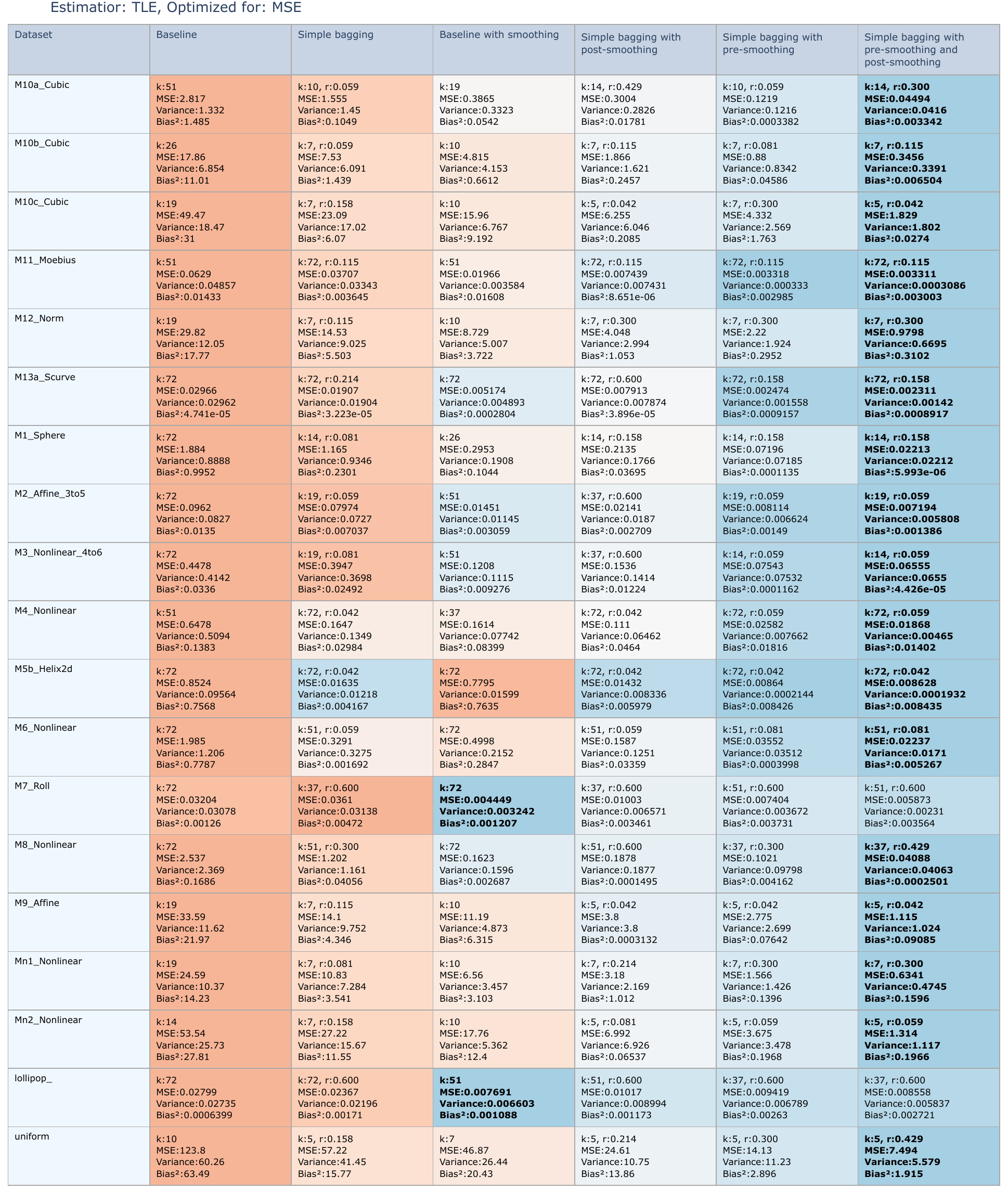}
\label{fig:TLE.MSE.smooth}
\end{table}

\begin{table}[H]
    \centering
    \caption{Table showing the optimal results for the MADA estimator across the different bagging variants with smoothing. The shown $k$ and $r$ values are the ones that achieved the lowest MSE across experiments for that specific dataset and estimator variant. While the MSE is the lowest one, the Variance and Bias-squared represent the decomposition of this MSE, and therefore, not necessarily the lowest on their own. The table is colored using a per-dataset (one for each row) logarithmic color scale based on the MSE to indicate large to small values as mild red to mild blue colors across the table cells (estimator variants). The best results in a row are additionally written with a bold font. This raw data is the basis of the MADA part of the radar charts analyzed in Section \ref{sec:results} of the main paper.}\vspace*{4mm}
\includegraphics[width=\textwidth,height=\textheight,keepaspectratio]{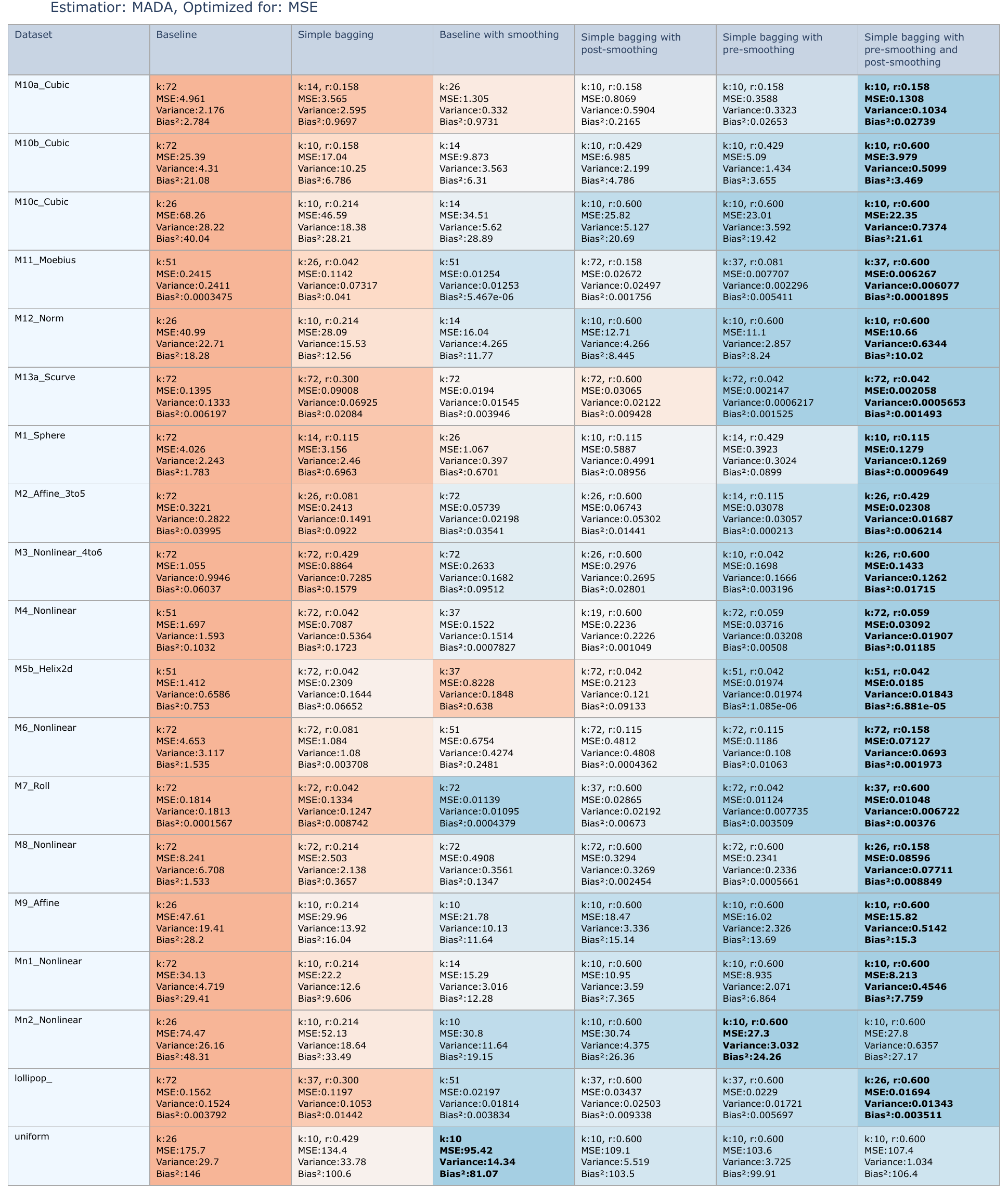}
\label{fig:MADA.MSE.smooth}
\end{table}

\subsection{Supplementary bar-chart figures for the effect of $r$ and $B$ experiments}\label{Asec:supplementarybarchartfigure}
\begin{figure}[H]
    \centering
\includegraphics[width=\textwidth,height=\textheight,keepaspectratio]{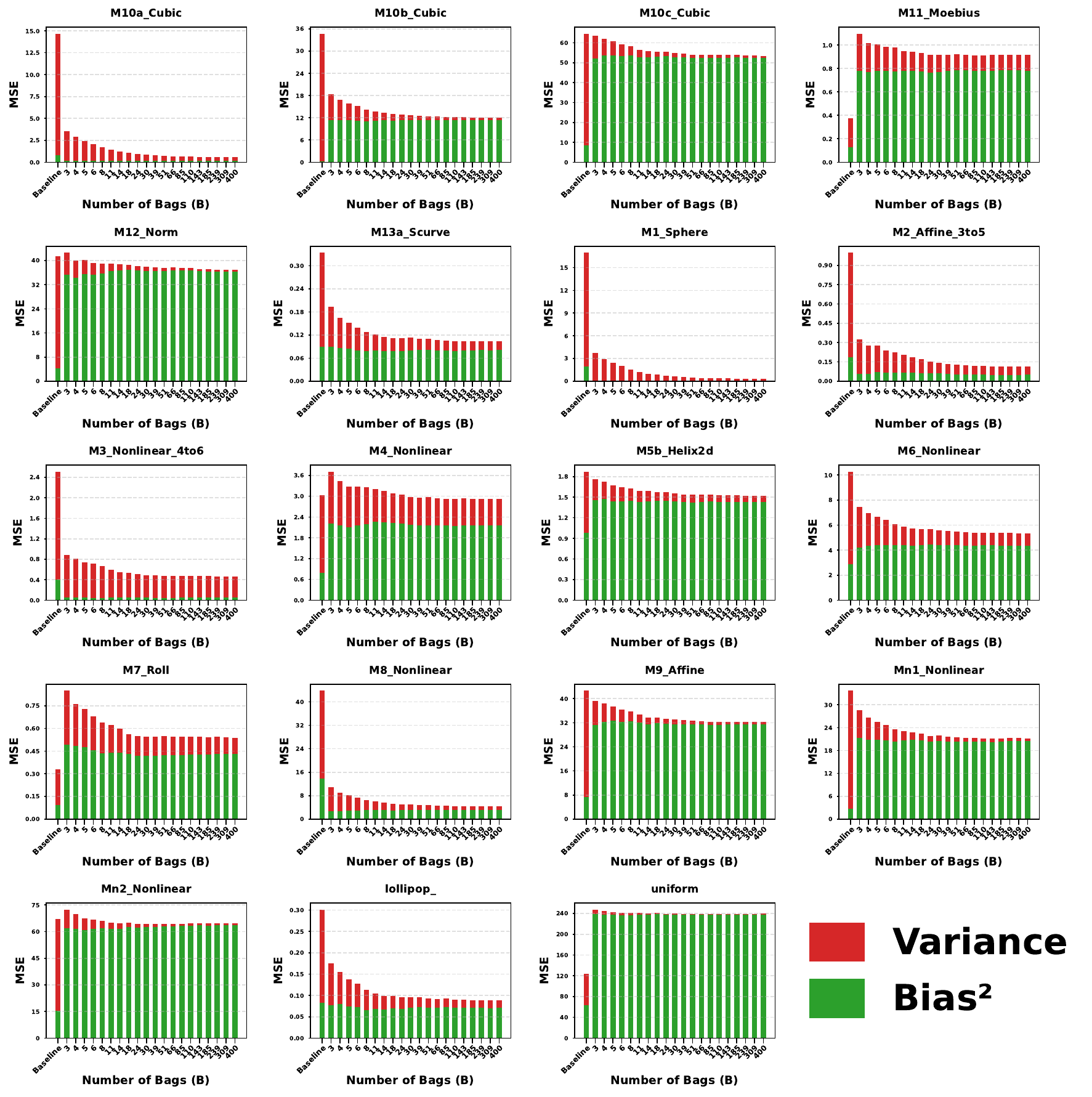}
\caption{19 separate bar charts, one for each data manifold. On the x-axis is a range of $B$ (number of bags) hyper-parameters, set for the simple bagged estimator used to estimate LID, except for the first bar from the left, which represents the baseline estimator. The baseline estimator selected is the TLE. On the y-axis are the raw values of the MSE achieved by the given estimator. The bars are vertically split into variance and $\text{bias}^2$ parts to illustrate the mean squared error decomposition, signaled by green and red colors. See Section \ref{number of bags} for the detailed experimental setup and Section \ref{results:Number of bags test} for the in-depth explanation of these results (originally illustrated with the MLE version of this plot).}
\label{fig:TLE.barchart.Nbag}
\end{figure}
\begin{figure}[H]
    \centering
\includegraphics[width=\textwidth,height=\textheight,keepaspectratio]{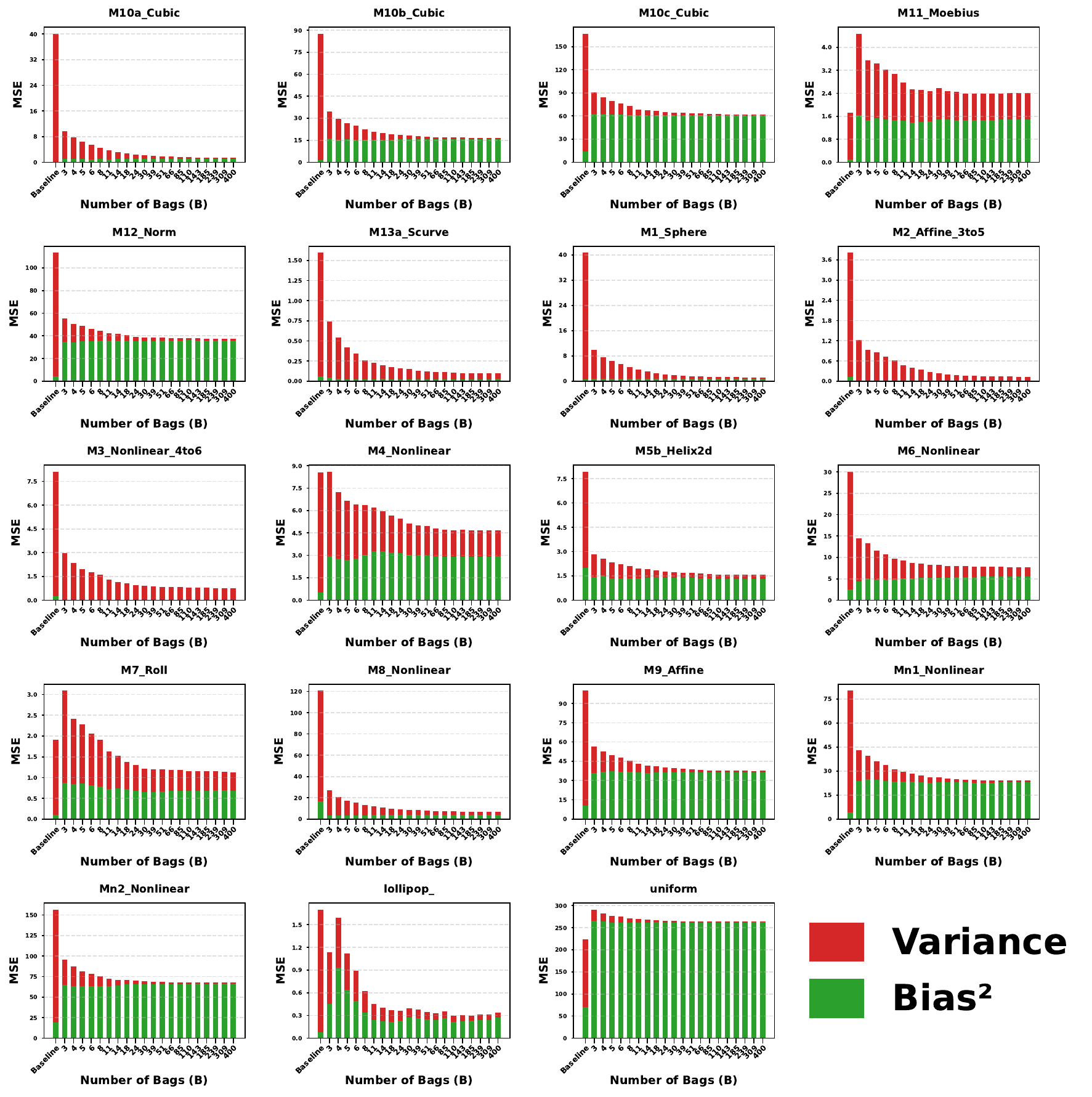}
\caption{19 separate bar charts, one for each data manifold. On the x-axis is a range of $B$ (number of bags) hyper-parameters, set for the simple bagged estimator used to estimate LID, except for the first bar from the left, which represents the baseline estimator. The baseline estimator selected is the MADA. On the y-axis are the raw values of the MSE achieved by the given estimator. The bars are vertically split into variance and $\text{bias}^2$ parts to illustrate the mean squared error decomposition, signaled by green and red colors. See Section \ref{number of bags} for the detailed experimental setup and Section \ref{results:Number of bags test} for the in-depth explanation of these results (originally illustrated with the MLE version of this plot).}
\label{fig:MADA.barchart.Nbag}
\end{figure}
\textbf{Supplementary bar-charts with varying $r$ for TLE and MADA}
\begin{figure}[H]
    \centering
\includegraphics[width=\textwidth,height=\textheight,keepaspectratio]{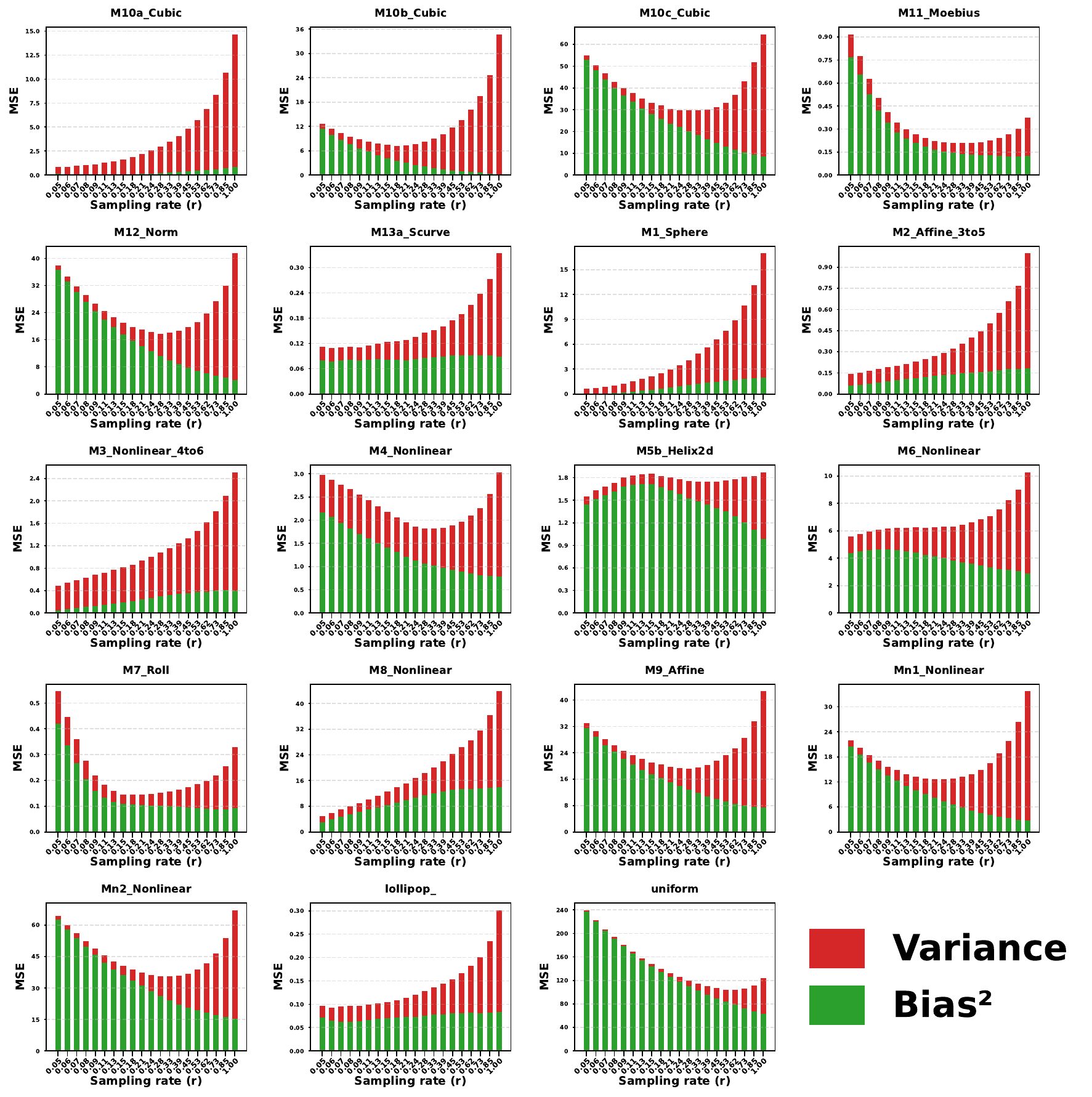}
\caption{19 separate bar charts, one for each data manifold. On the x-axis is a range of $r$ (sampling rate) hyper-parameters, set for the simple bagged estimator used to estimate LID, including the last bar from the left, which represents the baseline estimator but is equivalent with the $r=1$ case. The baseline estimator selected is the TLE. On the y-axis are the raw values of the MSE achieved by the given estimator. The bars are vertically split into variance and $\text{bias}^2$ parts to illustrate the mean squared error decomposition, signaled by green and red colors. See Section \ref{sec:Experimental methodology} of the main paper for the detailed experimental setup and Section \ref{sec:results} for the in-depth explanation of results (originally illustrated with the MLE version of this plot).}
\label{fig:TLE.barchart.sr}
\end{figure}
\begin{figure}[H]
    \centering
\includegraphics[width=\textwidth,height=\textheight,keepaspectratio]{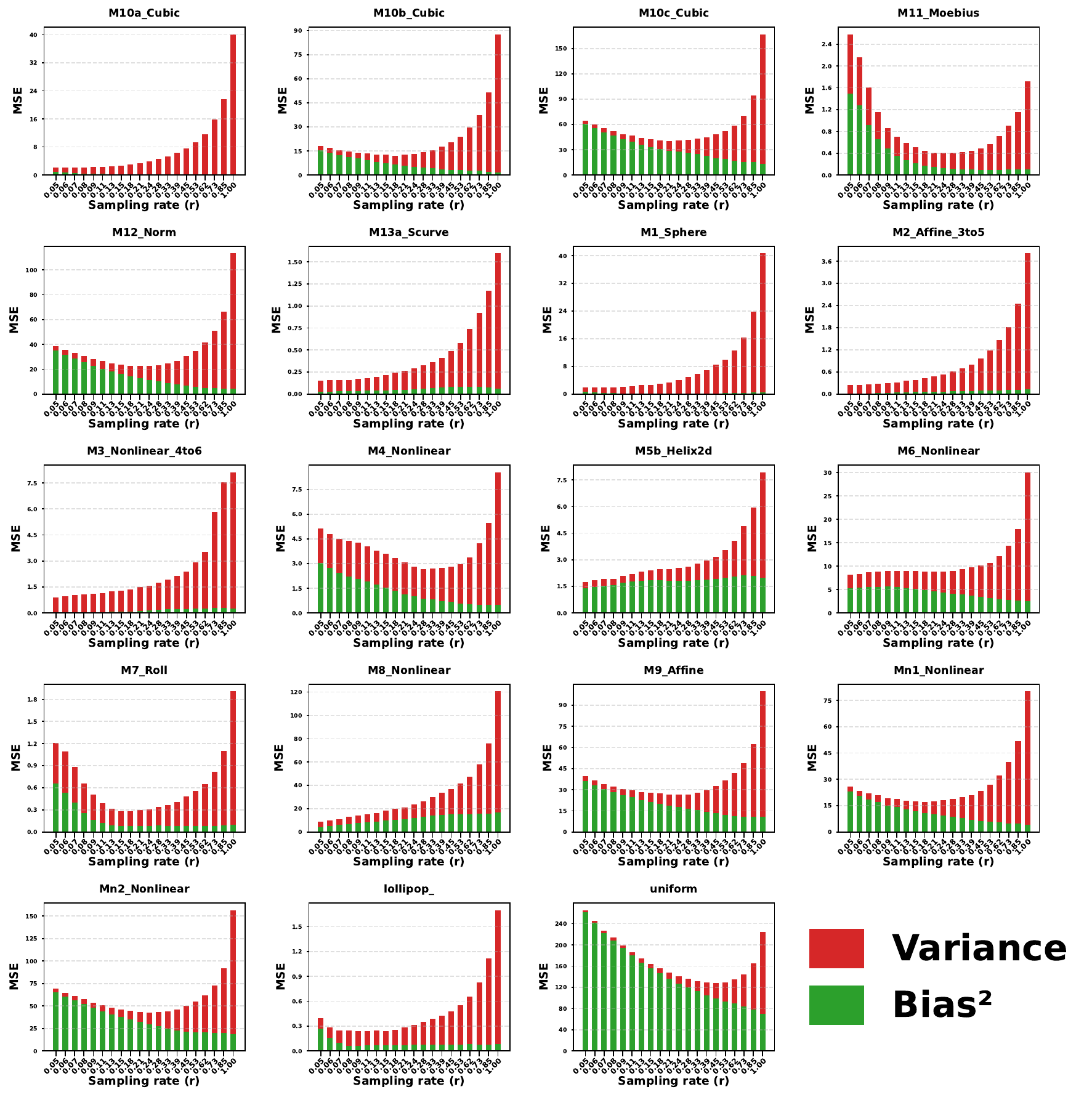}
\caption{19 separate bar charts, one for each data manifold. On the x-axis is a range of $r$ (sampling rate) hyper-parameters, set for the simple bagged estimator used to estimate LID, including the last bar from the left, which represents the baseline estimator but is equivalent with the $r=1$ case. The baseline estimator selected is the MADA. On the y-axis are the raw values of the MSE achieved by the given estimator. The bars are vertically split into variance and $\text{bias}^2$ parts to illustrate the mean squared error decomposition, signaled by green and red colors. See Section \ref{sec:Experimental methodology} of the main paper for the detailed experimental setup and Section \ref{sec:results} for the in-depth explanation of results (originally illustrated with the MLE version of this plot).}
\label{fig:MADA.barchart.sr}
\end{figure}
\subsection{Supplementary heatmaps for $B$ and $r$ interaction through comparing bagging and baseline}\label{Asec:supplementarysrB}
\begin{figure}[H]
    \centering
\includegraphics[width=0.98\textwidth,height=0.98\textheight,keepaspectratio]{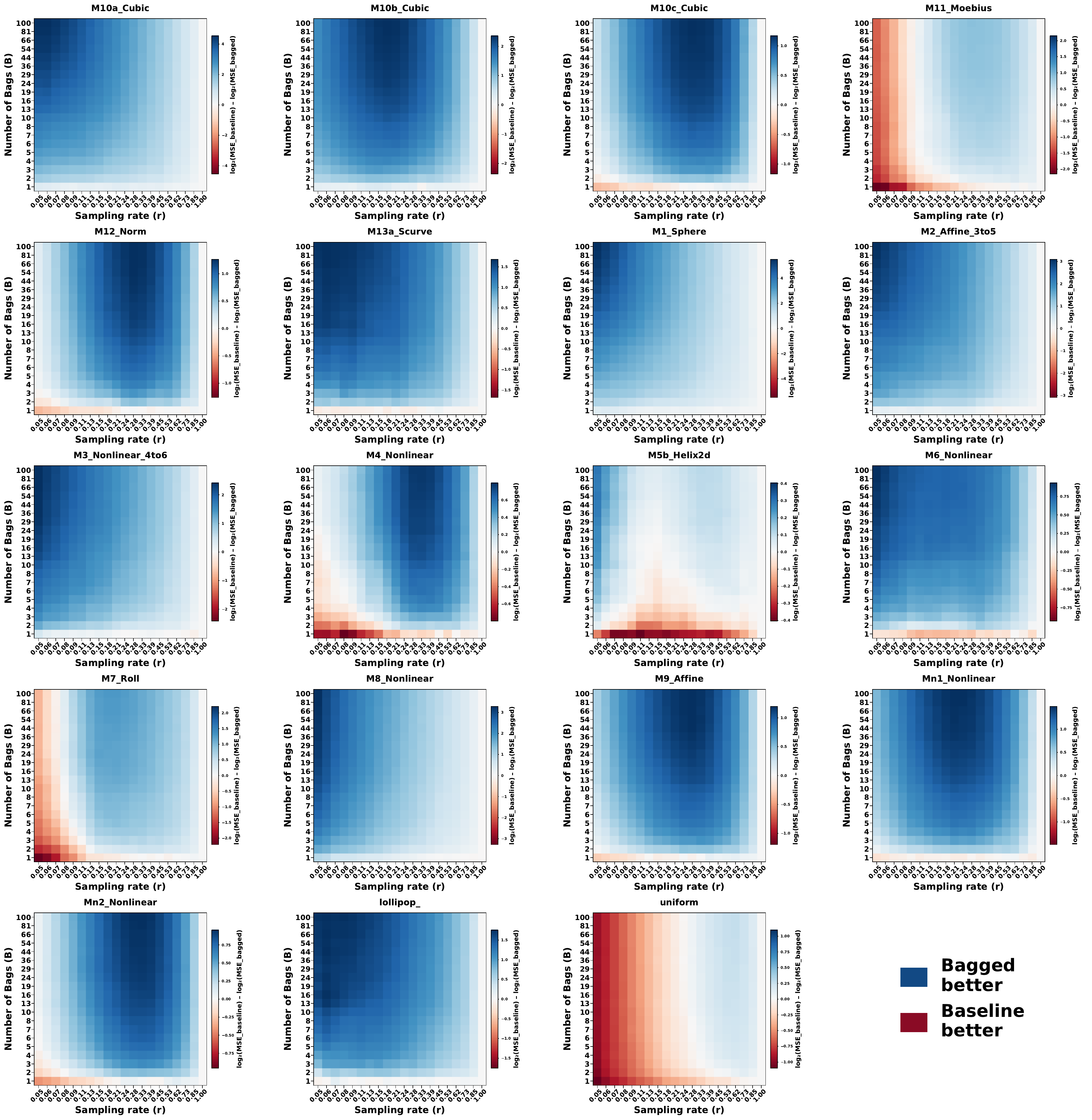}
\caption{19 separate heat maps, one for each data manifold. On the x-axis is a range of $r$ (sampling rate) hyper-parameters, while on the y-axis is a range of $B$ (number of bags) hyper-parameters. At each coordinate, the combination of the two hyper-parameters is set for the simple bagged estimator used to estimate LID. The baseline estimator selected is the TLE. The squares on the grid are colored based on the logarithm of the ratio between the MSE achieved by the baseline estimator and the MSE of the simple bagged estimator using that specific hyper-parameter combination. See Section \ref{B and sr interaction} for the detailed experimental setup and Section \ref{result:Interaction of sr and B} for the in-depth explanation of results (originally illustrated with the MLE version of this plot).}
\label{fig:TLE.heatmap.sr.Nbag.mse.difference}
\end{figure}
\begin{figure}[H]
    \centering
\includegraphics[width=0.98\textwidth,height=0.98\textheight,keepaspectratio]{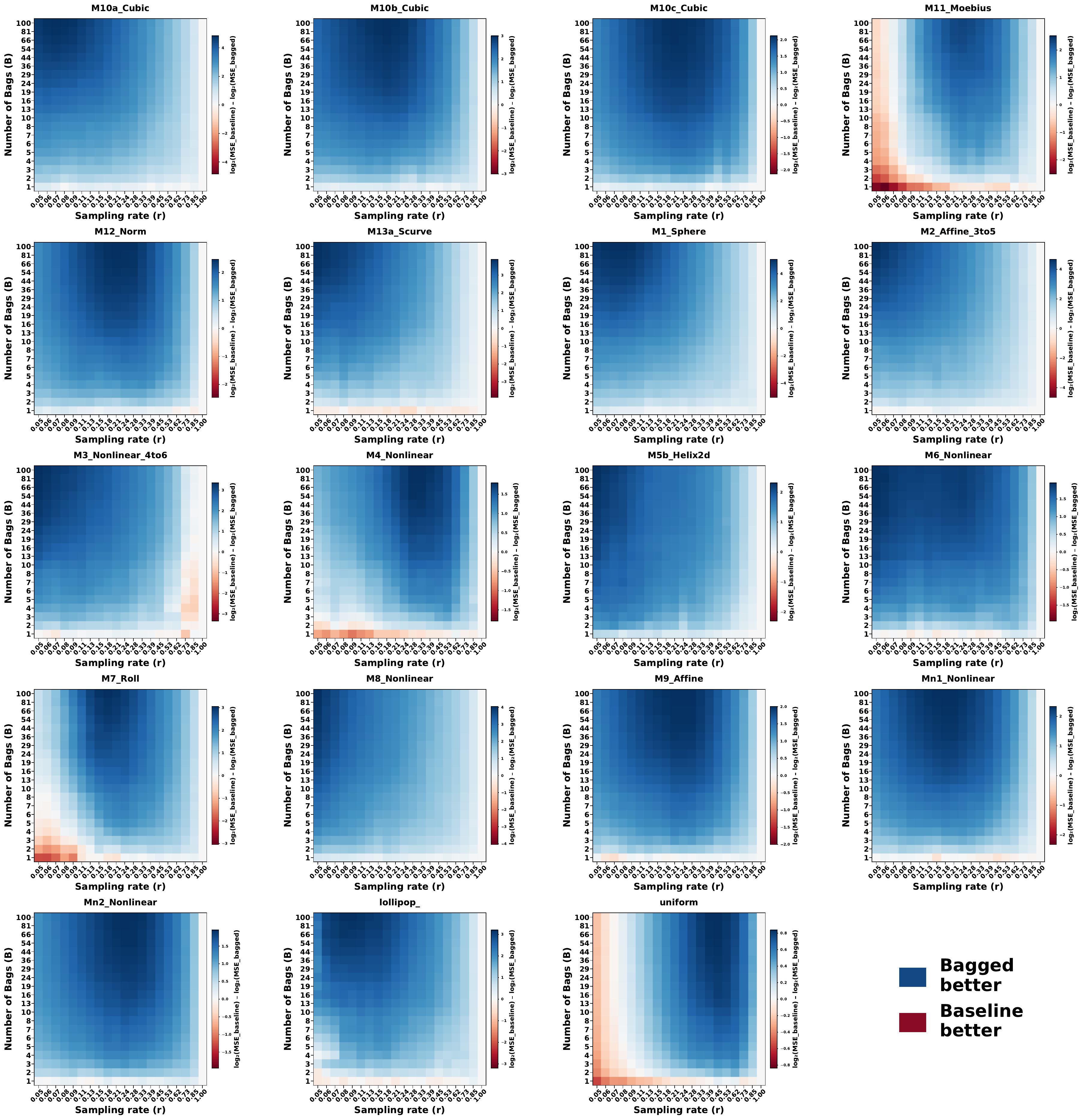}
\caption{19 separate heat maps, one for each data manifold. On the x-axis is a range of $r$ (sampling rate) hyper-parameters, while on the y-axis is a range of $B$ (number of bags) hyper-parameters. At each coordinate, the combination of the two hyper-parameters is set for the simple bagged estimator used to estimate LID. The baseline estimator selected is the MADA. The squares on the grid are colored based on the logarithm of the ratio between the MSE achieved by the baseline estimator and the MSE of the simple bagged estimator using that specific hyper-parameter combination. See Section \ref{B and sr interaction} for the detailed experimental setup and Section \ref{result:Interaction of sr and B} for the in-depth explanation of results (originally illustrated with the MLE version of this plot).}
\label{fig:MADA.heatmap.sr.Nbag.mse.difference}
\end{figure}

\subsection{Supplementary heatmaps for $k$ and $r$ interaction through comparing bagging and baseline}\label{Asec:supplementarysrk}
\begin{figure}[H]
    \centering
\includegraphics[width=0.98\textwidth,height=0.98\textheight,keepaspectratio]{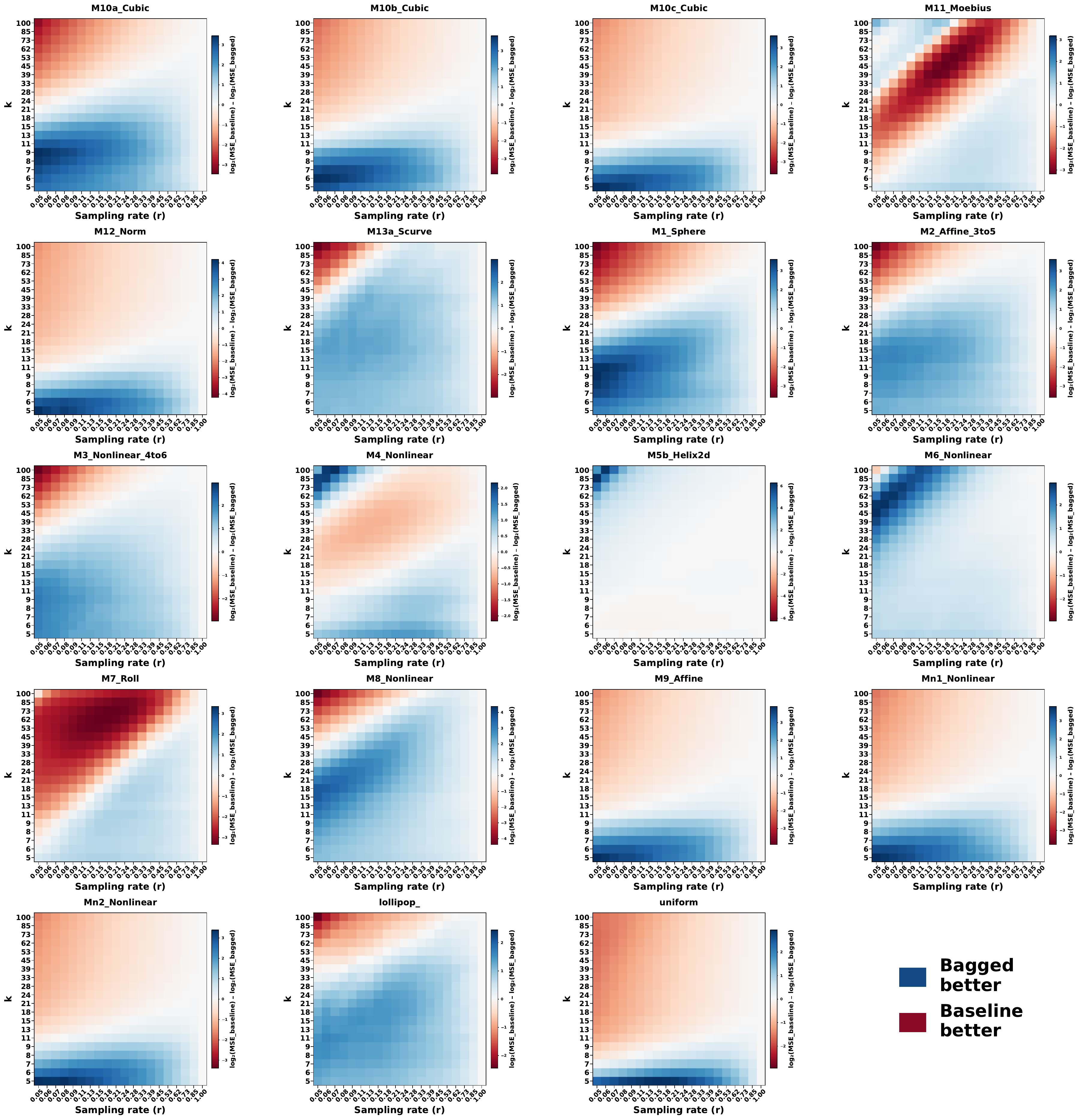}
\caption{19 separate heat maps, one for each data manifold. On the x-axis is a range of $r$ (sampling rate) hyper-parameters, while on the y-axis is a range of $k$-NN hyper-parameters. At each coordinate, the combination of the two hyper-parameters is set for the simple bagged estimator used to estimate LID. The baseline estimator selected is the TLE. The squares on the grid are colored based on the logarithm of the ratio between the MSE achieved by the baseline estimator and the MSE of the simple bagged estimator using that specific hyper-parameter combination. See Section \ref{sec:Experimental methodology} of the main paper for the detailed experimental setup and Section \ref{sec:results} for the in-depth explanation of results (originally illustrated with the MLE version of this plot).}
\label{fig:TLE.heatmap.sr.k.mse.difference}
\end{figure}

\begin{figure}[H]
    \centering
\includegraphics[width=0.98\textwidth,height=0.98\textheight,keepaspectratio]{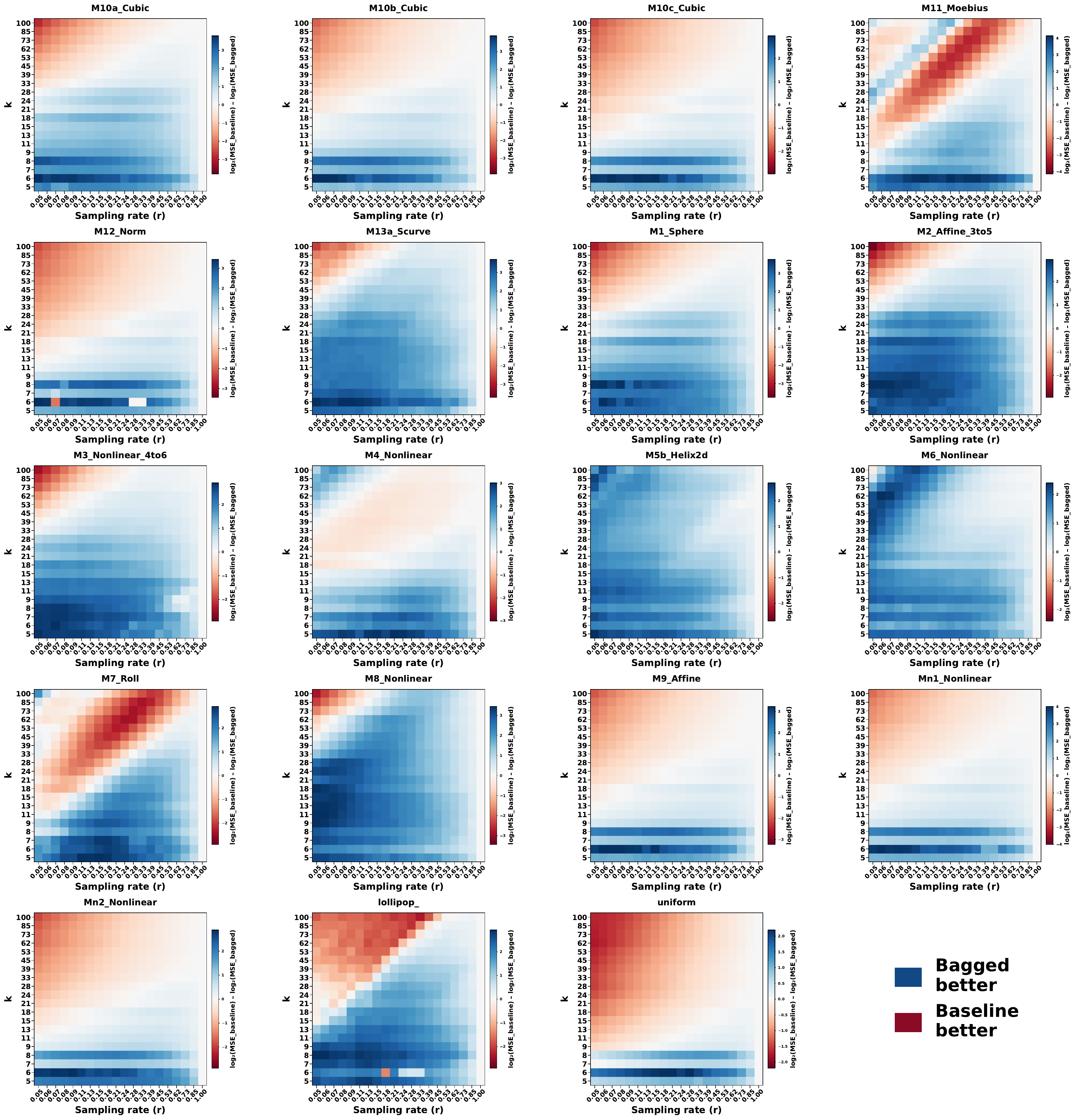}
\caption{19 separate heat maps, one for each data manifold. On the x-axis is a range of $r$ (sampling rate) hyper-parameters, while on the y-axis is a range of $k$-NN hyper-parameters. At each coordinate, the combination of the two hyper-parameters is set for the simple bagged estimator used to estimate LID. The baseline estimator selected is the MADA. The squares on the grid are colored based on the logarithm of the ratio between the MSE achieved by the baseline estimator and the MSE of the simple bagged estimator using that specific hyper-parameter combination. See Section \ref{sec:Experimental methodology} of the main paper for the detailed experimental setup and Section \ref{sec:results} for the in-depth explanation of results (originally illustrated with the MLE version of this plot).}
\label{fig:MADA.heatmap.sr.k.mse.difference}
\end{figure}

\FloatBarrier
\bibliographystyle{splncs04}
\bibliography{reference}
\end{document}